\newtcolorbox{empheqboxed}{colback=Gray!20, 
 colframe=white,
 width=\textwidth,
 sharpish corners,
 top=1mm, %
 bottom=0pt,
 left=2pt,
 right=2pt
}
\definecolor{Green}{HTML}{17891a}
\crefname{section}{\S}{\S\S}
\crefname{subsection}{\S}{\S\S}
\crefname{subsubsection}{\S}{\S\S}
\crefname{figure}{Fig.}{Figs.}
\crefname{table}{Tab.}{Tabs.}
\crefname{definition}{Defn.}{Defns.}
\crefname{corollary}{Cor.}{Cors.} 
\crefname{proposition}{Proposition}{Propositions}
\crefname{theorem}{Thm.}{Thms.}
\crefname{appendix}{App.}{Apps.}
\crefname{remark}{Remark}{Remarks}
\crefname{principle}{Principle}{Principles}
\crefname{example}{Example}{Examples.}
\crefname{lemma}{Lemma}{Lemmas}
\crefname{claim}{Claim}{Claims}
\crefname{assumption}{Asm.}{Asms.}
\numberwithin{equation}{section}
\numberwithin{assumption}{section}
\title{Multi-View Causal Representation Learning with Partial Observability
}
\renewcommand\AB@affilsepx{\, \,   \protect\Affilfont} \makeatother
\author[1,2]{\textbf{Dingling Yao}}
\author[3]{\textbf{Danru Xu}}
\author[7,8]{\textbf{S\'ebastien Lachapelle}}
\author[3,6]{\textbf{Sara Magliacane}}
\author[9]{\textbf{Perouz Taslakian}}
\author[4]{\textbf{Georg Martius}}
\author[2,5]{\textbf{Julius von K\"ugelgen}}
\author[1]{\textbf{Francesco Locatello}}
\affil[1]{Institute of Science and Technology Austria}
\affil[2]{Max Planck Institute for Intelligent Systems, T\"ubingen}
\affil[3]{University of Amsterdam}
\affil[4]{University of T\"ubingen}
\affil[5]{University of Cambridge}
\affil[6]{MIT-IBM Watson AI Lab}
\affil[7]{Samsung - SAIT AI Lab} 
\affil[8]{Mila, Université de Montréal}
\affil[9]{ServiceNow Research}
\newcommand\pcref[1]{(\cref{#1})}
\begin{document}
\iclrfinalcopy

\doparttoc %
\faketableofcontents %

\part{} %

\maketitle
\vspace{-1em}
\begin{abstract}
    We present a unified framework for studying the identifiability of representations learned from simultaneously observed views, such as different data modalities. We allow a \emph{partially observed} setting in which each view constitutes a nonlinear mixture of a subset of underlying latent variables, which can be causally related.
    We prove that the information shared across all subsets of any number of views can be learned up to a smooth bijection using contrastive learning and a single encoder per view. 
    We also provide graphical criteria indicating which latent variables can be identified through a simple set of rules, which we refer to as \textit{identifiability algebra}. Our general framework and theoretical results unify and extend several previous works on multi-view nonlinear ICA, disentanglement, and causal representation learning. We experimentally validate our claims on numerical, image, and multi-modal data sets. Further, we demonstrate that the performance of prior methods is recovered in different special cases of our setup. 
    Overall, we find that access to multiple partial views enables us to identify a more fine-grained representation, under the generally milder assumption of partial observability. %
\end{abstract}
\section{Introduction}
\looseness=-1Discovering latent structure underlying data has been important across many scientific disciplines, spanning neuroscience~\citep{vigario1997independent,brown2001independent}, communication theory~\citep{ristaniemi1999performance,donoho2006compressed}, natural sciences~\citep{wunsch1996ocean,chadan2012inverse,trapnell2014dynamics}, and countless more. The underlying assumption is that many natural phenomena measured by instruments have a simple structure that is lost in raw measurements. In the famous cocktail party problem~\citep{cherry1953some}, multiple speakers talk concurrently, and while we can easily record their overlapping voices, we are interested in understanding what individual people are saying.
From the methodological perspective, such inverse problems became common in machine learning with breakthroughs in linear~\citep{comon1994independent,darmois1951analyse,hyvarinen2000independent} and non-linear~\citep{hyvarinen2019nonlinear} Independent Component Analysis (ICA), and developed into deep learning methods for disentanglement~\citep{bengio2013representation,higgins2017beta}. More recently, approaches to causal representation learning~\citep{scholkopf2021toward} began relaxing the key assumption of independent latents central to prior work (the \textit{independent} in ICA), allowing for and discovering (some) hidden causal relations~\citep{brehmer2022weakly,lippe2022citris,lachapelle2022disentanglement,zhang2023identifiability, ahuja2023interventional, varici2023score,squires2023linear,von2023nonparametric}.

\looseness=-1This problem is often modeled as a two-stage sampling procedure, where latent variables $\zb$ are sampled i.i.d.\ from a distribution $p_\zb$, and the observations $\xb$ are functions thereof. Intuitively, the latent variables describe the causal structure underlying a specific environment, and they are only observed through sensor measurements, entangling them via so-called ``\textit{mixing functions}''. Unfortunately, if these mixing functions are non-linear, the recovery of the latent variables is generally impossible, even if the latent variables are independent~\citep{locatello2019challenging,hyvarinen1999nonlinear}. Following these negative results, the community has turned to settings that relax the i.i.d. condition in different ways. One particularly successful paradigm has been the assumption that data is not independently sampled, and in fact, multiple observations may refer to the same realization of the latent variables. This multi-view setup has generated a flurry of results in ICA~\citep{gresele2019incomplete,zimmermann2021contrastive,pandeva2023multi}, disentanglement~\citep{locatello2020weakly,klindt2021towards,fumero2023leveraging,lachapelle2022synergies,ahuja2022weakly}, and 
causal representation learning~\citep{von2021self,daunhawer2023identifiability,brehmer2022weakly}. 

\looseness=-1This paper provides a unified framework for several identifiability results in observational multi-view causal representation learning under partial observability. We assume that different views need not be functions of \textit{all} the latent variables, but only of some of them. For example, a person may undertake different medical exams, each shedding light on some of their overall health status (assumed constant throughout the measurements) but none offering a comprehensive view. An X-ray may show a broken bone, an MRI
how the fracture affected nearby tissues, and a blood sample may inform about ongoing infections. Our framework also allows for an arbitrary number of views, each measuring partially overlapping latent variables.
It includes multi-view ICA and disentanglement as special cases. 

\looseness=-1More technically, we prove that any shared information across arbitrary subsets of views and modalities can be learned up to a smooth bijection using contrastive learning. Non-shared information can also be identified if it is independent of other latent variables. With a single identifiability proof, our result implies the identifiability of several prior works in causal representation learning~\citep{von2021self,daunhawer2023identifiability}, non-linear ICA~\citep{gresele2019incomplete}, and disentangled representations~\citep{locatello2020weakly,ahuja2022weakly}. In addition to weaker assumptions, our framework retains the algorithmic simplicity of prior contrastive multi-view~\citep{von2021self} and multi-modal~\citep{daunhawer2023identifiability} causal representation learning approaches. Allowing partial observability and arbitrarily many views, our framework is significantly more flexible than prior work, allowing us to identify shared information between all subsets of views and not just their joint intersection.

\begin{wrapfigure}{r}{0.33\textwidth}
    \vspace{-10pt}
    \centering
    \includegraphics[width=0.33\textwidth]{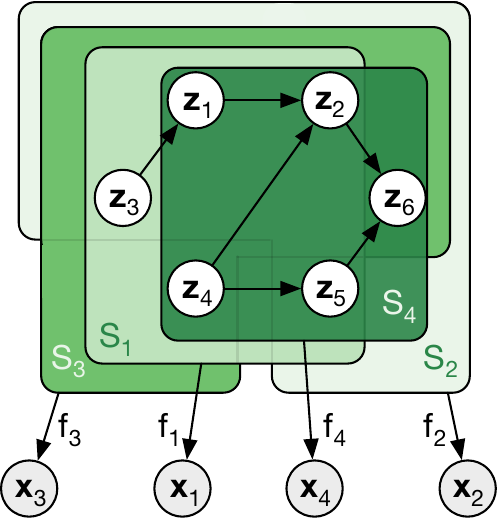}
    \caption{\small \textbf{Multi-View Setting with Partial Observability}, for~\cref{example:intuitive} with $K{=}4$ views and $N{=}6$ latents. Each view $\xb_k$ is generated by a subset $\zb_{S_k}$ of the latent variables through a {view-specific mixing function}~$f_k$.
    Directed arrows between latents indicate causal relations.
    }
    \vspace{-40pt}
    \label{fig:example_graph}
\end{wrapfigure}

We highlight the following contributions:
\begin{enumerate}[leftmargin=*]
    \item We provide a unified framework for identifiability in observational multi-view causal representation learning with partial observability. This generalizes the multi-view setting in two ways: allowing (i) \emph{any arbitrary number of views}, and (ii) partial observability with non-linear mixing functions. We prove that any shared information across arbitrary subsets of views and modalities can be learned up to a smooth bijection using contrastive learning and provide straightforward graphical criteria to categorize which latents can be recovered.
    \item  With a single proof, our result implies the identifiability of several prior works in causal representation learning, non-linear ICA, and disentangled representations as special cases. 
    \item We conduct experiments for various unsupervised and supervised tasks and empirically show that (i) the performance of prior works can be recovered using a special setup of our framework and (ii) our method indicates promising disentanglement capabilities with encoder-only networks.
\end{enumerate}

\section{Problem Formulation}
\label{sec:problem_formulation}
We formalize
the data generating process 
as a latent variable model. 
Let $\zb=(\zb_1, ..., \zb_N)\sim p_\zb$ be possibly dependent (causally related) latent variables taking values in $\Zcal=\Zcal_1\times ... \times \Zcal_N$, where 
$\Zcal \subseteq \RR^{N}$ is an open, simply connected latent space
with associated probability density $p_\zb$. Instead of directly observing $\zb$, we observe a set of entangled measurements or views $\xb := (\xb_1, \dots, \xb_K)$. Importantly, we assume that each observed view $\xb_k$ may only depend on
some of the latent variables, which we call ``\textit{view-specific latents}''~$\zb_{S_k}$, indexed by subsets $S_1, ..., S_K\subseteq[N]=\{1,...,N\}$. 
For any $A \subseteq [N]$, the subset of latent variables $\zb_{A}$ and corresponding latent sub-space $\Zcal_{A}$
are given by:
    \begin{equation*}
    \textstyle
        \zb_{A} := \{\zb_j : j \in A\}, \qquad  \qquad \Zcal_{A} := \bigtimes_{j \in A} \Zcal_j\,.
    \end{equation*}
    Similarly, for any $V \subseteq [K]$, the subset of views $\xb_V$ and corresponding observation space $\Xcal_V$ are: 
    \begin{equation*}
    \label{eq:view_defn}
    \textstyle
        \xb_{V} := \{\xb_k : k \in V\}, \qquad  \qquad \Xcal_{V} := \bigtimes_{k \in V} \Xcal_k\,.
    \end{equation*}
The \emph{view-specific mixing functions} $\{f_k: \Zcal_{S_k} \to \Xcal_k\}_{k \in [K]}$ are smooth, invertible mappings from the view-specific latent subspaces~$\Zcal_{S_k}$ to observation spaces $\Xcal_k \subseteq \RR^{\dimension(\xb_k)}
$
with $\xb_k := f_k(\zb_{S_k})$. Formally, the generative process for the views $\{\xb_1, \dots, \xb_K\}$ is given by:
\begin{equation*}
    \zb \sim p_\zb, \qquad\qquad \xb_k:=f_k(\zb_{S_k}) \quad \forall k \in [K],
\end{equation*}
i.e., each view $\xb_k$ depends on latents $\zb_{S_k}$ through a \emph{mixing function}~$f_k$, as illustrated in~\cref{fig:example_graph}.

\begin{assumption}[General Assumptions]
\label{assmp:general_assumption}
    For the latent generative model defined above:
    \begin{enumerate}[(i),topsep=0em,itemsep=0em]
    \item Each view-specific mixing function $f_k$ is a diffeomorphism;
    \item $p_\zb$ is a smooth and continuous density on $\Zcal$ with $p_\zb > 0$ almost everywhere.
\end{enumerate}
\end{assumption}

\begin{empheqboxed}
\begin{example}
\label{example:intuitive}
Throughout, we illustrate key concepts and results using the following example with $K{=}4$ views, $N{=}6$ latents, and dependencies among the $\zb_k$ shown as a graphical model in~\cref{fig:example_graph}. 

\begin{equation}
\textstyle
\label{eq:example}
\begin{aligned}
     \xb_1 &= f_1(\zb_1, \zb_2, \zb_3, \zb_4, \zb_5),  \qquad
     \xb_2 = f_2(\zb_1, \zb_2, \zb_3, \zb_5, \zb_6), \\
     \xb_3 &= f_3(\zb_1, \zb_2, \zb_3, \zb_4, \zb_6), \qquad
     \xb_4 = f_4(\zb_1, \zb_2, \zb_4, \zb_5, \zb_6) .\quad
\end{aligned}
\end{equation}
\end{example}
\end{empheqboxed}

\looseness=-1 Consider a set~$\xb_{V}$ of %
jointly observed views,  
and let $\Vcal := \{V_i \subseteq V: |V_i| \geq 2\}$ be the set of subsets $V_i\in\Vcal$ indexing two or more views.
For any subset of views $V_i$, we refer to the set of \emph{shared} latent variables (i.e., those influencing each view in the set) as the ``\emph{content}'' or ``\emph{content block}" of $V_i$.
Formally, content variables $\zb_{C_i}$ are obtained as
intersections of view-specific indexing sets:
\begin{equation}
\textstyle
\label{eq:defn_content}
\begin{aligned}
\textstyle
C_{i} = \bigcap_{k \in V_i} S_k\,.
\end{aligned}
\end{equation}
Similarly, for each view $k\in V$, we can define the non-shared (``style") variables as $\zb
_{S_k \setminus C_i}$. We use $C$ and $\zb_{C}$ without subscript to refer to the joint content across all observed views $\xb_V$. 
\begin{empheqboxed}
For~\cref{example:intuitive}, the content of $\xb_{V_1} = (\xb_1,\xb_2)$ is $\zb_{C_1}
= (\zb_1,\zb_2,\zb_3,\zb_5)$; the content for all four views $\xb
= (\xb_1, \xb_2, \xb_3, \xb_4)$ jointly is $\zb_{C} = (\zb_1,\zb_2)$, and the style for $\xb_1$ is $\zb_{S_1 \setminus C} = (\zb_3, \zb_4, \zb_5)$.
\end{empheqboxed}

\looseness=-1
\begin{remark}[``Content-Style'' Terminology]
We adopt these terms from~\citet{von2021self}, but note
that, in our setting, 
they are relative to a specific subset of views.
Unlike in some prior works~\citep{gresele2019incomplete,von2021self,daunhawer2023identifiability}, style variables are generally not considered irrelevant, but may also be of interest and can sometimes be identified by other means (e.g., from other subsets of views or because they are independent of content).
\end{remark}

\looseness=-1 Our goal is to show that we can simultaneously identify multiple content blocks given a set of jointly observed views under weak assumptions. This extends previous work~\citep{gresele2019incomplete,von2021self,daunhawer2023identifiability} where only one block of content variables is considered. Isolating the shared content blocks from the rest of the view-specific style information, the learned representation (estimated content) can be used in downstream pipelines, such as classification tasks~\citep{lachapelle2022synergies,fumero2023leveraging}. In the best case, if each latent component is represented as one individual content block, we can learn a fully disentangled representation~\citep{higgins2018definition,locatello2020weakly,ahuja2022weakly}. To this end, we restate the definition of \emph{block-identifiability}~\citep[][Defn 4.1]{von2021self} for the multi-modal, multi-view setting:
\begin{definition}[Block-Identifiability]
    \label{defn:block-identifiability}
    The true content variables $\cb$
    are \emph{block-identified} by a function $g: \Xcal \to \RR^{\dimension(\cb)}$ if the inferred content partition $\hat{\cb}= g(\xb)$ contains \emph{all and only} information about $\cb$%
    , i.e., if there exists some smooth \emph{invertible} mapping $h: \RR^{\dimension(\cb)} \to \RR^{\dimension(\cb)}$ s.t. $\hat{\cb}= h(\cb)$.
\end{definition}

Note that the inferred content variables $\hat{\boldsymbol{c}}$ can be a set of \emph{entangled} latent variables rather than a single one. This differentiates our paper from the line of work on \emph{disentanglement}~\citep{locatello2020weakly,fumero2023leveraging,lachapelle2022synergies}, which seek
to disentangle \emph{individual} latent factors and can thus be considered as special cases of our framework with content block sizes equal to one.

\section{Identifiability Theory}
\label{sec:identifiability}
\textbf{High-Level Overview. }
This section presents a unified framework for studying identifiability from multiple partial views: 
we start by establishing identifiability of the shared content block $\zb_{C}$ from \emph{any number of partially observed views}~\pcref{thm:ID_from_sets}. 
The downside of this approach is that if we seek to learn content from different subsets, we need to train an \textit{exponential} number of encoders for the same modality, one for each subset of views. 
We, therefore, extend this result and show that by considering any subset of the jointly observed views, various blocks of content variables can be identified by \emph{one single} view-specific encoder~\pcref{thm:general_ID_from_sets_size_unknown}. 
After recovering multiple content blocks \emph{simultaneously}, we show in~\cref{cor:id_alg_intersection,cor:id_alg_complement,cor:id_alg_union} that a qualitative description of the data generative process such as in~\cref{fig:example_graph} can be sufficient to determine exactly the extent to which individual latents or groups thereof can be identified and disentangled. Full proofs are included in~\cref{sec:proofs}.

\begin{definition}[Content Encoders]
\label{defn:content_encoders}
    Assume that the content size $|C|$ is given for any jointly observed views $\xb_{V}$. The content encoders $G:= \{g_k:\Xcal_k\to (0, 1)^{|C|}\}_{k \in V}$ consist of smooth functions mapping from the respective observation spaces to the $|C|$-dimensional unit cube.
\end{definition}

\begin{restatable}[Identifiability from a \emph{Set} of Views]{theorem}{idset}
\label{thm:ID_from_sets}
Consider a set of views $\xb_{V}$ satisfying~\cref{assmp:general_assumption}, and let $G$ be a set of content encoders~\pcref{defn:content_encoders} that minimizes the following objective    
\begin{equation}
    \label{eq:main_loss_set}
    \Lcal 
    \left(G\right)=
    \underbrace{\textstyle\sum_{\substack{k< k^\prime \in V
    }}
    \EE
    \left[\norm{g_{k}(\xb_{k})-g_{k^\prime}(\xb_{k^\prime})}_2\right]}_{\text{\textcolor{Green}{Content alignment}}}- 
    \underbrace{\textstyle\sum_{k \in V} H\left(g_{k}(\xb_k)\right)}_{\text{\textcolor{Blue}{Entropy regularization}}},
\end{equation}
where the expectation is taken w.r.t.\ $p(\xb_V)$ and $H(\cdot)$ denotes differential entropy.
Then the shared \textbf{content} variable $\zb_{C} := \{\zb_j : j \in C\}$ is block-identified ~\pcref{defn:block-identifiability} by $g_{k} \in G$ for any $k\in V$.
\end{restatable}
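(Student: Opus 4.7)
The plan is to analyze a minimizer $G = \{g_k\}_{k \in V}$ of $\Lcal$ in two stages. First, each component of $\Lcal$ is individually bounded below by $0$: the alignment term is a sum of non-negative expectations of norms, and the maximum differential entropy of a distribution supported in $(0,1)^{|C|}$ equals $0$, so $-H(g_k(\xb_k)) \geq 0$. A minimizer must therefore attain both bounds simultaneously, which is realizable in principle (compose $f_k^{-1}$ with a smooth bijection $\Zcal_C \to (0,1)^{|C|}$ that pushes $p_{\zb_C}$ to the uniform measure, e.g.\ a Knothe--Rosenblatt transport). Hence, at the optimum, alignment is zero for every pair and each encoder output is uniform on the unit cube.

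For the first stage, zero alignment gives $g_k(\xb_k) = g_{k'}(\xb_{k'})$ almost surely under $p(\xb_V)$ for every pair $k,k' \in V$. Substituting $\xb_k = f_k(\zb_{S_k})$ and using continuity together with the positivity and product structure of $p_\zb$, I lift the identity to $g_k \circ f_k(\zb_{S_k}) = g_{k'} \circ f_{k'}(\zb_{S_{k'}})$ pointwise on $\Zcal$. The left-hand side does not vary in $\zb_{S_{k'} \setminus S_k}$, so neither can the right; symmetrically for the other side, so both sides depend only on $\zb_{S_k \cap S_{k'}}$. Iterating this observation over all $k' \in V \setminus \{k\}$ shows that $g_k \circ f_k$ depends only on $\zb_C$, where $C = \bigcap_{k \in V} S_k$, and the pairwise equalities then identify these restricted maps, producing a single smooth $\tilde g : \Zcal_C \to (0,1)^{|C|}$ with $g_k(\xb_k) = \tilde g(\zb_C)$ for every $k \in V$.

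For the second stage, entropy maximality forces the push-forward of $p_{\zb_C}$ along $\tilde g$ to equal the uniform measure on $(0,1)^{|C|}$. Since domain and codomain have matching dimension $|C|$ and $p_{\zb_C}$ is positive and smooth, this precludes rank-degeneracy of the Jacobian on any set of positive measure---otherwise the image would sit in a set of Hausdorff dimension below $|C|$, contradicting uniformity---so $\tilde g$ is a smooth local diffeomorphism almost everywhere. To upgrade this to a global smooth bijection, I would use that $\Zcal_C$ is an open simply-connected subset of $\RR^{|C|}$, via a covering-space argument standard in the nonlinear-ICA identifiability literature, modelled on the proof of the pairwise result in \citet{von2021self}. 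Setting $h := \tilde g$ then delivers $\hat{\cb} = g_k(\xb_k) = h(\zb_C)$, i.e., block-identifiability per~\cref{defn:block-identifiability}.

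The principal obstacle is precisely this global injectivity of $\tilde g$: uniformity of the push-forward together with smoothness yields only a local diffeomorphism and does not by itself rule out multi-sheeted behavior with balancing Jacobian determinants (e.g.\ smoothed tent-map constructions in each coordinate also push the uniform to the uniform). Ruling these out genuinely requires exploiting both the open simply-connected structure of $\Zcal_C$ and the matching dimension of the codomain through a covering or degree argument. By contrast, the first stage is quite routine: it is an iterated version of the invariance argument used pairwise in \citet{von2021self,daunhawer2023identifiability}, which generalizes once content is redefined as the iterated intersection over all views in $V$.
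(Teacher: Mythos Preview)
Your proposal is correct and follows the same three-step skeleton as the paper: (i) exhibit an optimum attaining $\Lcal=0$ via a Darmois/Knothe--Rosenblatt map composed with $f_k^{-1}$; (ii) deduce from zero alignment that $g_k\circ f_k$ factors through $\zb_C$; (iii) obtain invertibility of the resulting $\tilde g$ from uniformity of the push-forward. Steps (i) and (iii) are essentially identical to the paper, which for (iii) invokes \citet[][Proposition~5]{zimmermann2021contrastive}---precisely the simply-connected covering/degree argument you describe, so your ``principal obstacle'' is handled the same way.

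The one genuine difference is step (ii). The paper argues by contradiction in the style of \citet{von2021self}: assume $h_k=g_k\circ f_k$ has a nonzero partial derivative in some style coordinate $z_q$ with $q\in S_k\setminus C$, pick $k'$ with $q\notin S_{k'}$, and build an auxiliary function $\psi_{k,k'}=|h_k-h_{k'}|$ that is strictly positive on a set of positive measure, contradicting alignment. You instead lift the a.s.\ identity to a pointwise one (continuity plus full support of $p_\zb$) and then use directly that $\Zcal=\prod_j \Zcal_j$ is a Cartesian product to ``factor out'' the non-shared coordinates pair by pair. This is cleaner and avoids the $\epsilon$--$\eta$ bookkeeping; note only that what you need here is the product structure of the \emph{space} $\Zcal$ (which is assumed in \S\ref{sec:problem_formulation}) together with positivity of $p_\zb$, not a product structure of the density itself.
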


\begin{empheqboxed}
\looseness=-1\textbf{Intuition.} 
The \emph{\textcolor{Green}{alignment}} enforces the content encoders $g_k$ only to encode content and discard styles, while the maximized \emph{\textcolor{Blue}{entropy}} implies uniformity and thus invertibility. For~\cref{example:intuitive}, recall that the joint content 
is $C = \cap_{k \in [4]} \{S_k\} = \{1, 2\}$. \Cref{thm:ID_from_sets} then states that, for each %
$k=1, 2, 3, 4$, the content encoders $G=\{g_k: \Xcal_k \to (0, 1)^{|C|}\}$ which minimize the loss in~\cref{eq:main_loss_set} are actually invertible mappings of the ground truth content $\{\zb_1, \zb_2\}$, i.e., 
    $g_k(\xb_k) = h_k(\zb_1, \zb_2)$
for smooth invertible functions $h_k: \Zcal_1 \times \Zcal_2 \to (0, 1)^2$.
\end{empheqboxed}

\looseness=-1\textbf{Discussion.} \Cref{thm:ID_from_sets} provides a learning algorithm to infer \emph{one} jointly shared content block for \emph{all} observed views in a set, extending prior results that only consider two views~\citep{von2021self,daunhawer2023identifiability,locatello2020weakly}. However, to discover another content block $C_i$ w.r.t.\ a subset of views $V_i \subset V$ as defined in~\cref{sec:problem_formulation},  we need to train another set of encoders, since the dimensionality of the content might change. 
Ideally, we would like to learn \emph{one view-specific encoder}~$r_k$ that maps from the observation space $\Xcal_k$ to some $|S_k|$-dimensional manifold and can block-identify all shared contents $\zb_{C_i}$ using \emph{one} training run, combined with separate \textit{content selectors}. 
\begin{definition}[View-Specific Encoders]
\label{defn:view_specific_encoders}
    The \emph{view-specific encoders} $R := \{r_k:\Xcal_k\to \Zcal_{S_k}\}_{k \in V}$ consist of smooth functions mapping from the respective observation spaces to the view-specific latent space, where the dimension of the %
    $k$\textsuperscript{th} latent space $|S_k|$ is assumed known for all $k \in V$.
\end{definition}
\begin{empheqboxed}
    \looseness=-1 \textbf{Intuition.} 
    The view-specific encoders learn \emph{all view-related content blocks simultaneously}, instead of training a combinatorial number of networks (as would be implied by~\cref{thm:ID_from_sets}). 
    The view-specific encoders should learn not only a single block of content variables, but instead learn to recover \textit{all} shared latents in a way that makes it \textit{easy to extract various different content blocks using simple readout functions}. This is possible by construction, e.g.,  if each $r_k$ learns to invert the ground truth mixing $f_k$.
    Inspired by this idea, we introduce \emph{content selectors}.
\end{empheqboxed}
\begin{definition}[Selection]
\label{defn:selection}
    A selection $\oslash$ operates between two vectors $a \in \{0, 1\}^d\, , b \in \RR^d$ s.t.
    \begin{equation*}
        \textstyle
        a \oslash b := [b_j: a_j = 1, j \in [d]]
    \end{equation*}
\end{definition}
\begin{definition}[Content Selectors]
\label{defn:content_selectors} The content selectors $\Phi:= \{\phi^{(i, k)}\}_{V_i \in \Vcal, k \in V_i}$ with $\phi^{(i, k)} \in\{0,1\}^{|S_k|}$ perform selection~\pcref{defn:selection} on the encoded information: for any subset $V_i$ and view $k \in V_i$ we have the selected representation:
$
    \textstyle
    \phi^{(i, k)} \oslash \hat{\zb}_{S_k} = \phi^{(i, k)} \oslash r_k(\xb_k),
$
with $\norm{\phi^{(i, k)}}_0 = \norm{\phi^{(i, k^\prime)}}_0$ for all $V_i \in \Vcal, k, k^\prime \in V_i$.
\end{definition}
\begin{empheqboxed}
\textbf{Intuition.} 
Using \emph{learnable} binary weights $\Phi$, the content selectors $\phi^{(i, k)}$ should 
pick out those latents among the representation $\hat{\zb}_{S_k}$ extracted by $r_k$ that belong to the content block $C_i$ shared among $V_i$. 
For~\cref{example:intuitive}, consider a learned representation $r_1(\xb_1) = (\hat{\zb}_1, \hat{\zb}_2, \hat{\zb}_3, \hat{\zb}_4, \hat{\zb}_5)$. Applying a content selector with weight $\phi^{(i=1, k=1)} = [1, 1, 1, 0, 1]$ then yields: $(\hat{\zb}_1, \hat{\zb}_2, \hat{\zb}_3,\hat{\zb}_5)$.
\end{empheqboxed}
\looseness=-1\textbf{What is missing? } 
While aligning various content blocks based on the same representation $r_k(\xb_k)$ should promote \emph{disentanglement}, maximizing the {\textcolor{Blue}{entropy}} $H(r_k(\xb_k))$ of the learned representation  (as in~\cref{thm:ID_from_sets}) promotes \emph{uniformity}.
The latter implies invertibility of the encoders~\citep{zimmermann2021contrastive}, which is necessary for block-identifiability~\pcref{defn:block-identifiability}. However, since a uniform representation has independent components by definition, disentanglement and uniformity cannot be achieved simultaneously unless all ground truth latents are mutually independent (a strong assumption we are not willing to make).
Thus, to \emph{theoretically} achieve invertibility while preserving disentanglement, we introduce a set of auxiliary \emph{projection} functions.
\begin{definition}[Projections]
\label{defn:aux_transformations}
    The set of projections $T := \{t_k\}_{k \in V}$ 
    consist of functions $t_k: \Zcal_{S_k} \to (0, 1)^{|S_k|}$ mapping each view-specific latent space to a hyper unit-cube of the same dimension $|S_k|$.
\end{definition}
\begin{empheqboxed}
\looseness=-1\textbf{Intuition.}
The projection functions can be understood as mathematical tools: %
by maximizing the entropy and thus enforcing uniformity of 
 \textit{projected representations} $t_k\circ r_k(\xb_k)$, we can show that $r_k$ needs to be invertible without interfering with the disentanglement of different content blocks.
\end{empheqboxed}

\textbf{What if the content dimension is unknown? }
In~\cref{thm:ID_from_sets} we assumed that
the size $|C|$ of the 
shared content block is known, and the encoders map to a space
of dimension $|C|$. 
In the following, we do \textit{not} assume that the content size is given.
Instead,
we will show
that the correct content block can still be discovered by ensuring that 
as much information as possible is shared across any given subset of views.
To this end, we define the following information-sharing regularizer.
\begin{definition}[Information-Sharing Regularizer]
\label{defn:info_sharing_reg}
    The following regularizer penalizes the $L_0$-norm~$\norm{\cdot}_0$ of the content selectors $\Phi$:
    $
        \textstyle
        \label{eq:reGax_information_sharing}
        \mathrm{Reg}(\Phi) := - \sum_{V_i \in \Vcal} \sum_{k \in V_i} 
        \norm{\phi^{(i, k)}}_0\,.
    $
\end{definition}
\begin{empheqboxed}
\looseness=-1 \textbf{Intuition.}
$\mathrm{Reg}(\Phi)$ sums the number of shared latents over $V_i \subseteq \Vcal$ and  $k \in V_i$. It decreases when $\phi^{(i, k)}$ contains more ones, i.e., more latents are shared across views $k \in V_i$. 
Thus, 
$\mathrm{Reg}(\Phi)$ encourages the encoders to reuse the learned latents 
and 
maximize the shared information content. 
\end{empheqboxed}
\begin{restatable}[View-Specific Encoder for Identifiability]{theorem}{unifEnc}
\label{thm:general_ID_from_sets_size_unknown}
Let $R, \Phi, T$ respectively be any view-specific encoders (\cref{defn:view_specific_encoders}), content selectors (\cref{defn:content_encoders}) and projections (\cref{defn:aux_transformations}) that solve the following constrained optimization problem:
\begin{equation}
\label{eq:main_loss_set_size_unknown}
\min\, \mathrm{Reg}(\Phi) \qquad  \text{subject to:} \qquad  R, \Phi, T \in \argmin_{} \,\Lcal \left(R, \Phi, T \right)
\end{equation}
where
\begin{equation}\label{eq:main_loss_set_size_unknown_no_reg}
    \textstyle
    \begin{aligned}
    \Lcal
    \left(R, \Phi, T\right)
    =&
    \sum_{V_i \in \Vcal} \sum_{\substack{k, k^\prime \in V_i \\ k < k^\prime}} \underbrace{
    \EE
    \left[\norm{\phi^{(i, k)} \oslash r_{k}(\xb_{k}) - \phi^{(i, k^{\prime})} \oslash r_{k^\prime}(\xb_{k^\prime})}_2\right]}_{\text{\textcolor{Green}{Content alignment}}}
    - 
     \sum_{k \in V} \underbrace{H\left(t_k \circ r_{k}(\xb_k)\right)}_{\text{\textcolor{Blue}{Entropy}}},
    \end{aligned}
  \end{equation}
Then for any subset of views $V_i \in \Vcal$ and any view $k \in V_i$\,, $\phi^{(i, k)} \oslash r_k$ block-identifies (\cref{defn:block-identifiability}) the shared \textbf{content} variables $\zb_{C_i}$, as defined in~\cref{eq:defn_content}. 
\end{restatable}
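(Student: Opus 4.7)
The plan is to reduce \cref{thm:general_ID_from_sets_size_unknown} to repeated applications of \cref{thm:ID_from_sets}, treating each selected representation $\phi^{(i,k)}\oslash r_k$ as a content encoder for the subset $V_i$. First I would establish that each view-specific encoder $r_k$ is a smooth bijection from $\Xcal_k$ onto $\Zcal_{S_k}$. The entropy term $-H(t_k\circ r_k(\xb_k))$ is bounded below by $0$ on the unit cube $(0,1)^{|S_k|}$, attained iff the push-forward is uniform; this value is feasible because $f_k$ is a diffeomorphism and $p_\zb$ is everywhere positive, so one can realize the uniform push-forward by choosing $r_k$ and $t_k$ appropriately. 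As in the standard argument invoked for \cref{thm:ID_from_sets}, uniformity on a cube of the correct dimension forces $t_k\circ r_k$ to be a diffeomorphism, and hence $r_k$ itself to be a smooth injection onto $\Zcal_{S_k}$. The projection $t_k$ decouples invertibility from marginal independence: $r_k(\xb_k)$ itself need not have independent components, so causal dependencies among latents remain unobstructed.

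Next, for each subset $V_i\in\Vcal$, the alignment component
\[
\textstyle\sum_{k<k'\in V_i}\EE\left[\norm{\phi^{(i,k)}\oslash r_k(\xb_k)-\phi^{(i,k')}\oslash r_{k'}(\xb_{k'})}_2\right]
\]
has exactly the form of the alignment loss of \cref{thm:ID_from_sets} applied to the smooth $\norm{\phi^{(i,k)}}_0$-dimensional sub-encoders. Replaying the ``style cannot be encoded'' argument---vary any latent in $\zb_{S_k}\setminus\zb_{C_i}$ while holding $\zb_{C_i}$ fixed; alignment with any $k'\in V_i$ whose view does not see that latent can hold only if $\phi^{(i,k)}\oslash r_k$ is independent of it---yields that, at the joint optimum, $\phi^{(i,k)}\oslash r_k$ is a function of $\zb_{C_i}$ alone. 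This uses \cref{assmp:general_assumption}(ii) to ensure that every such perturbation has positive mass.

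The outer minimization of the $L_0$ regularizer then pins down the dimensionality. Since $\mathrm{Reg}(\Phi)=-\sum_{V_i,k}\norm{\phi^{(i,k)}}_0$, among inner minimizers one maximizes each $\norm{\phi^{(i,k)}}_0$ subject to alignment holding at its minimum. By the previous step, any additional selected coordinate beyond $|C_i|$ would have to correspond to a latent outside $\zb_{C_i}$ and thereby break alignment, while the upper bound $|C_i|$ is achievable (take, e.g., $r_k=f_k^{-1}$ and $\phi^{(i,k)}$ indicating the $C_i$-indexed coordinates of $S_k$). Saturation at $\norm{\phi^{(i,k)}}_0=|C_i|$, combined with injectivity of $r_k$ and the fact that $\phi^{(i,k)}\oslash r_k$ depends only on $\zb_{C_i}$, gives a smooth bijection from $\Zcal_{C_i}$ onto a $|C_i|$-dimensional image, which is block-identifiability in the sense of \cref{defn:block-identifiability}.

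The main obstacle will be verifying that the $|C_i|$ upper bound is simultaneously attainable for \emph{every} $V_i\in\Vcal$ by one single encoder $r_k$. A priori, the coordinates of $r_k(\xb_k)$ are only defined up to a diffeomorphism of $\Zcal_{S_k}$, so individual coordinates need not correspond one-to-one with individual latents, and one must rule out a scenario in which a coordinate useful for one $V_i$ becomes unsuitable for another. The crux is to exhibit a feasible $(R,\Phi,T)$ achieving joint saturation (the natural choice $r_k=f_k^{-1}$ with indicator selectors $\phi^{(i,k)}_j=\mathbbm{1}[j\in C_i]$), and then transfer identifiability to any other minimizer via a change-of-variables argument that is compatible with all selectors at once. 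Once this is done, the conclusion of the theorem follows uniformly across subsets.
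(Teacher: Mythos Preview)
Your proposal is correct and follows essentially the same route as the paper: exhibit the feasible optimum $r_k=f_k^{-1}$ with indicator selectors and Darmois-type projections, derive invariance and uniformity as necessary conditions at any optimum, use invariance to show each selected representation depends only on $\zb_{C_i}$ and uniformity (via the Zimmermann-style bijectivity argument) to obtain injectivity of $r_k$, and finally use the regularizer to pin $\|\phi^{(i,k)}\|_0=|C_i|$. The only minor difference is your closing ``transfer'' framing: the paper does not relate an arbitrary minimizer to the canonical one by a change of variables but instead argues directly, by the same contradiction you sketch, that \emph{any} constrained minimizer must already have the correct selector sizes and then invokes a known-content-size lemma on that minimizer---so your simultaneous-attainability concern is resolved precisely by the $r_k=f_k^{-1}$ construction you propose, and no separate transfer step is needed.
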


\begin{empheqboxed}
\looseness=-1\textbf{Intuition.} 
For~\cref{example:intuitive}, the representation for $\xb_1$ obtained by minimizing~\cref{eq:main_loss_set_size_unknown} is given by
$\hat{\zb} := r_1(\xb_1) = r_1(f_1(\zb_{S_1})) = r_1 \circ f_1(\zb_1, \zb_2, \zb_3, \zb_4, \zb_5)$.
Consider the following two subsets of views $V_1,V_2 \in \Vcal$ 
containing $\xb_1$, but sharing different content blocks $C_1, C_2$:
\begin{equation*}
    \begin{aligned}
        \xb_{V_1} &= \{\xb_1, \xb_2\}\,, &\zb_{C_1} = \{\zb_1, \zb_2, \zb_3, \zb_5\}\,, \quad
        \xb_{V_2} &= \{\xb_1, \xb_3\}\,, &\zb_{C_2} = \{\zb_1, \zb_2, \zb_3, \zb_4\} \,.
    \end{aligned}
\end{equation*}
Then one of the optimal solutions of the selectors learned by~\cref{thm:general_ID_from_sets_size_unknown} could be
\begin{equation*}
\textstyle
    \phi^{(i=1, k=1)} = [1, 1, 1, 0, 1]\,, \qquad \qquad
    \phi^{(i=2, k=1)} = [1, 1, 1, 1, 0].
\end{equation*}
Hence, the composed results of the selectors and the view-specific encoder $r_1$ give:
\begin{equation*}
\textstyle
    \phi^{(i=1, k=1)} \oslash \hat{\zb} = h_{i=1, k = 1}(\zb_1, \zb_2, \zb_3, \zb_5)\,, \qquad 
    \phi^{(i=2, k=1)} \oslash \hat{\zb} = h_{i=2, k = 1}(\zb_1, \zb_2, \zb_3, \zb_4)
\end{equation*}
where $h_{i, k=1}$ is some smooth bijection, for both $i = 1, 2$.
\end{empheqboxed}

\looseness=-1\textbf{Discussion.} 
Note that~\Cref{eq:main_loss_set_size_unknown} can be rewritten as a regularized loss $\Lcal_{\mathrm{Reg}}
    \left(R, \Phi, T\right)
    =\Lcal
    \left(R, \Phi, T\right)+ \alpha \cdot \mathrm{Reg}(\Phi)$ with a sufficiently small regularization coefficient $\alpha \geq 0$. Overall, \Cref{thm:general_ID_from_sets_size_unknown} further weakens the assumptions of~\cref{thm:ID_from_sets} in
that no content size is required. However, minimizing the information-sharing regularizer is highly non-convex, and having only a finite number of samples makes finding the global optimum challenging. In practice, we could use \emph{Gumbel Softmax}~\citep{jang2017categorical} for unsupervised learning, and consider content sizes as hyper-parameters or follow the approach by~\citet{fumero2023leveraging} for supervised classification tasks. Empirically, we will see that some of the requirements that are needed in theory can be realistically dropped, and different approximations are possible, e.g., incorporating problem-specific knowledge.

\looseness=-1 After discovering various content blocks, we are further interested in how to infer more information from the learned content blocks. 
For example, can we identify $\zb_{C_3} := \{\zb_1, \zb_2, \zb_3\} = \zb_{C_1 \cap C_2}$? This perspective motivates our next results, which focus on how to infer new information based on the previously identified blocks: \textbf{\emph{Identifiability Algebra}}.

Let $\zb_{C_1}, \zb_{C_2}$ with $C_1, C_2 \subseteq [N]$ be two identified blocks of latents.
Then it holds for $C_1, C_2$ that:
\begin{restatable}[Identifiability Algebra: Intersection]{corollary}{idAlgIntersection}
\label{cor:id_alg_intersection}
    The intersection $\zb_{C_1 \cap C_2}$ can be block-identified.
\end{restatable}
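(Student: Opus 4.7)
The plan is to reduce the corollary to a direct application of~\cref{thm:ID_from_sets}, treating the two already-identified content blocks themselves as a pair of (derived) views. Since $\zb_{C_1}$ and $\zb_{C_2}$ are block-identified (\cref{defn:block-identifiability}), I get smooth bijections $h_1 : \RR^{|C_1|} \to \RR^{|C_1|}$ and $h_2 : \RR^{|C_2|} \to \RR^{|C_2|}$ whose outputs $\hat{\zb}_{C_1} = h_1(\zb_{C_1})$ and $\hat{\zb}_{C_2} = h_2(\zb_{C_2})$ are available to us. The key observation is that the pair $(\hat{\zb}_{C_1}, \hat{\zb}_{C_2})$ has exactly the structure of a two-view partially observed problem in the sense of~\cref{sec:problem_formulation}, with view-specific indexing sets $C_1, C_2 \subseteq [N]$ and view-specific mixing functions $h_1, h_2$.

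First, I would verify that this derived two-view model inherits~\cref{assmp:general_assumption}: the $h_i$ are diffeomorphisms by the definition of block-identifiability, and the underlying latent distribution $p_\zb$ is unchanged, so its marginals on $\zb_{C_1}$ and $\zb_{C_2}$ remain smooth, continuous, and positive almost everywhere on the corresponding open simply connected sub-spaces. Second, for this derived two-view problem, the content block as defined in~\cref{eq:defn_content} is precisely $C_1 \cap C_2$.

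Next, I would invoke~\cref{thm:ID_from_sets} on this pair of derived views with content size $|C_1 \cap C_2|$, i.e., train a pair of content encoders $g_1 : \RR^{|C_1|} \to (0,1)^{|C_1 \cap C_2|}$ and $g_2 : \RR^{|C_2|} \to (0,1)^{|C_1 \cap C_2|}$ minimizing the objective in~\cref{eq:main_loss_set}, evaluated on samples of $(\hat{\zb}_{C_1}, \hat{\zb}_{C_2})$. The theorem then guarantees that $g_1(\hat{\zb}_{C_1}) = g_1 \circ h_1(\zb_{C_1})$ is a smooth invertible function of $\zb_{C_1 \cap C_2}$, which is precisely block-identifiability of $\zb_{C_1 \cap C_2}$ in the sense of~\cref{defn:block-identifiability} (post-composing with $h_1$ preserves smoothness and invertibility on the relevant sub-domain).

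The only delicate step I anticipate is confirming that the hypotheses of~\cref{thm:ID_from_sets} genuinely transfer: specifically, that the marginal density of $\zb_{C_1 \cup C_2}$ is smooth and positive on an open simply connected domain, and that the restriction of $h_1$ (resp. $h_2$) to $\Zcal_{C_1}$ (resp. $\Zcal_{C_2}$) qualifies as a view-specific mixing diffeomorphism onto its image. Both follow from~\cref{assmp:general_assumption} applied to the original model together with the definition of block-identifiability, but it is worth writing out explicitly to make sure no topological pathology is introduced by restricting to the identified blocks.
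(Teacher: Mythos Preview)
Your proposal is correct and follows essentially the same approach as the paper: construct two synthetic views $\xb^{(1)} := h_1(\zb_{C_1})$ and $\xb^{(2)} := h_2(\zb_{C_2})$ from the block-identified representations, observe that they form a two-view partially observed model with shared content $C_1 \cap C_2$, and apply \cref{thm:ID_from_sets}. If anything, you are more careful than the paper's proof in explicitly checking that \cref{assmp:general_assumption} transfers to the derived views, which the paper leaves implicit.
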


\begin{restatable}[Identifiability Algebra: Complement]{corollary}{idAlgComplement}
\label{cor:id_alg_complement}
    If $C_1 {\cap} C_2$ is independent of $C_1 {\setminus} C_2$, then the complement $\zb_{C_1 \setminus C_2}$ can be block-identified.
\end{restatable}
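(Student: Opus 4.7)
I would construct an encoder for $\zb_{C_1 \setminus C_2}$ by post-processing the encoders already supplied by the hypothesis together with~\cref{cor:id_alg_intersection}. Writing $A := C_1 \cap C_2$ and $B := C_1 \setminus C_2$, I start from encoders $g_{C_1}, g_A$ satisfying $g_{C_1}(\xb) = h_1(\zb_A, \zb_B)$ and $g_A(\xb) = h_A(\zb_A)$ for some diffeomorphisms $h_1, h_A$.

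I would then train a smooth readout $q: \RR^{|C_1|} \to \RR^{|B|}$ jointly with a smooth bijection $\psi: \RR^{|A|} \times \RR^{|B|} \to \RR^{|C_1|}$, imposing a \emph{reconstruction} constraint $\psi(g_A(\xb), q(g_{C_1}(\xb))) = g_{C_1}(\xb)$ almost surely and a \emph{statistical independence} constraint $q(g_{C_1}(\xb)) \perp g_A(\xb)$. Pulling these back along $h_1, h_A$ via $\tilde q(\zb_A, \zb_B) := q \circ h_1(\zb_A, \zb_B)$, the reconstruction constraint forces $\tilde q(\zb_A, \cdot)$ to be a smooth bijection of $\Zcal_B$ for each fixed $\zb_A$, while the independence constraint combined with the hypothesis $\zb_A \perp \zb_B$ forces the pushforward of $p_{\zb_B}$ under $\tilde q(\zb_A, \cdot)$ to coincide with the marginal of $\tilde q$ for every $\zb_A$.

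\textbf{Main obstacle.} The remaining step is to promote this \emph{distributional} statement to a \emph{functional} one, concluding that $\tilde q$ depends only on $\zb_B$ so that $q(g_{C_1}(\xb)) = h_B(\zb_B)$ for some diffeomorphism $h_B$. In general, statistical independence together with fiberwise bijectivity allows distinct bijections $\tilde q(a, \cdot)$ for different $a$, so this step is nontrivial. I would close the gap by a nonlinear-ICA-with-auxiliary-variable argument in the spirit of~\citep{von2021self,hyvarinen2019nonlinear}: treating $\hat{\zb}_A = g_A(\xb)$ as an auxiliary label that sufficiently varies the conditional $p(\hat{\zb}_{C_1} \mid \hat{\zb}_A)$, the product structure $p(\zb_{C_1}) = p(\zb_A) p(\zb_B)$ guaranteed by the corollary's independence hypothesis pins down the factorization induced by the reconstruction-and-independence constraints up to a $\zb_A$-agnostic diffeomorphism on $\zb_B$. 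The independence assumption of~\cref{cor:id_alg_complement} is essential precisely at this step: without it, many distinct ``independent complements'' of $\hat{\zb}_A$ inside $\hat{\zb}_{C_1}$ could coexist, and the argument would fail.
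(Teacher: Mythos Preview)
The paper's proof is much shorter than your proposal: it constructs the synthetic views $\xb^{(1)}=h_1(\zb_{C_1})$ and $\xb^{(2)}=h_2(\zb_{C_2})$ from the already block-identified representations, treats $\zb_{C_1\cap C_2}$ as the ``content'' and $\zb_{C_1\setminus C_2}$ as the ``style'' of this synthetic pair, and then invokes \emph{Private Component Extraction}~\citep[][Thm.~2]{lyu2021understanding} as a black box. Your reconstruction-plus-independence construction is essentially a re-derivation of the learning principle underlying that theorem, so the two routes agree at the level of what is being optimised; the difference is that the paper outsources the hard step rather than arguing it from scratch.

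Where your proposal becomes shaky is precisely the step you flag as the ``main obstacle'', and your suggested fix does not close it. Invoking auxiliary-variable nonlinear ICA in the style of~\citep{hyvarinen2019nonlinear} requires the conditional of the \emph{sources} given the auxiliary variable to vary sufficiently; but by the very independence hypothesis of the corollary, $p(\zb_B\mid \zb_A)=p(\zb_B)$ is constant in $\zb_A$, so the ``sufficient variability'' premise fails for the component you actually want to isolate. That the conditional $p(\hat\zb_{C_1}\mid \hat\zb_A)$ varies is true but does not give you what you need. The obstacle is genuine rather than cosmetic: for instance, if $\zb_B$ is an isotropic Gaussian in $\RR^{|B|}$ with $|B|\geq 2$, then $\tilde q(\zb_A,\zb_B)=R(\zb_A)\,\zb_B$ for any smooth family of rotations $R(\cdot)$ satisfies both your reconstruction and independence constraints yet fails to be a function of $\zb_B$ alone. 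The result in~\citep{lyu2021understanding} that the paper cites presumably carries additional structural assumptions (on the latent density or component-wise independence within the block) that exclude such cases; rather than trying to re-prove it via an ill-suited auxiliary-variable argument, you should either cite that theorem as the paper does or consult its exact hypotheses and reproduce them.
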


\begin{restatable}[Identifiability Algebra: Union]{corollary}{idAlgUnion}
\label{cor:id_alg_union}
    If $C_1 {\cap} C_2$, $C_1{\setminus} C_2$ and $C_2 {\setminus}C_1 $ are mutually independent, then the union $\zb_{C_1 \cup C_2}$ can be block-identified.
\end{restatable}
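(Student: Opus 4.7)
\textbf{Proof proposal for \cref{cor:id_alg_union}.} The plan is to reduce the union identifiability to the previous two corollaries by exploiting the disjoint decomposition
\begin{equation*}
    C_1 \cup C_2 \;=\; (C_1 \cap C_2) \;\sqcup\; (C_1 \setminus C_2) \;\sqcup\; (C_2 \setminus C_1),
\end{equation*}
and then showing that a concatenation of block-identifying encoders is itself block-identifying for the union. The mutual independence assumption will be used only to unlock \cref{cor:id_alg_complement} on each of the two ``side'' blocks.

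First, I would invoke \cref{cor:id_alg_intersection} on $C_1, C_2$ to obtain an encoder $g_0$ with $g_0(\xb) = h_0(\zb_{C_1 \cap C_2})$ for some smooth bijection $h_0$ on $\Zcal_{C_1 \cap C_2}$. Next, since $\zb_{C_1 \cap C_2}$ is independent of $\zb_{C_1 \setminus C_2}$ by hypothesis, \cref{cor:id_alg_complement} applied to the pair $(C_1, C_2)$ yields an encoder $g_1$ with $g_1(\xb) = h_1(\zb_{C_1 \setminus C_2})$. Applying \cref{cor:id_alg_complement} again to the pair $(C_2, C_1)$ — here using the independence of $\zb_{C_1 \cap C_2}$ and $\zb_{C_2 \setminus C_1}$ — gives $g_2$ with $g_2(\xb) = h_2(\zb_{C_2 \setminus C_1})$. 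Each $h_i$ is a smooth bijection on the corresponding latent subspace.

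Now define the concatenated encoder $g(\xb) := (g_0(\xb), g_1(\xb), g_2(\xb))$. Because the three index sets $C_1 \cap C_2$, $C_1 \setminus C_2$, $C_2 \setminus C_1$ are pairwise disjoint and their union equals $C_1 \cup C_2$, we can (up to a fixed coordinate permutation $\pi$, itself a smooth bijection) identify $\zb_{C_1 \cup C_2}$ with the tuple $(\zb_{C_1 \cap C_2}, \zb_{C_1 \setminus C_2}, \zb_{C_2 \setminus C_1})$, so that
\begin{equation*}
    g(\xb) \;=\; \bigl(h_0(\zb_{C_1 \cap C_2}),\; h_1(\zb_{C_1 \setminus C_2}),\; h_2(\zb_{C_2 \setminus C_1})\bigr) \;=\; H\bigl(\zb_{C_1 \cup C_2}\bigr),
\end{equation*}
where $H := (h_0 \times h_1 \times h_2) \circ \pi$ is a smooth bijection as a product (and composition with a permutation) of smooth bijections. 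By \cref{defn:block-identifiability}, this exhibits block-identification of $\zb_{C_1 \cup C_2}$ via $g$.

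The only non-routine step is the justification that the product map $h_0 \times h_1 \times h_2$ is a smooth bijection onto its image and that stacking the three encoders does not introduce spurious dependence on latents outside $C_1 \cup C_2$; this follows immediately from the block-identifiability guarantees of the three constituent pieces, which each already capture ``all and only'' the information about their respective blocks. The main potential obstacle — that the three pieces might share some reparameterization ambiguity that prevents the concatenation from being invertible — is ruled out precisely because the underlying index sets are disjoint, so the three smooth bijections act on disjoint latent coordinates. The independence hypothesis is \emph{not} needed to combine the blocks, but only to enable the Complement corollary on both sides of the decomposition.
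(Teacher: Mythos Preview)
Your proposal is correct and follows essentially the same route as the paper: decompose $C_1\cup C_2$ into the three disjoint pieces, use \cref{cor:id_alg_intersection} and two applications of \cref{cor:id_alg_complement} to block-identify each piece, then concatenate the resulting bijections into a single smooth bijection on $\Zcal_{C_1\cup C_2}$. Your write-up is in fact a bit more explicit than the paper's---you spell out the permutation $\pi$, justify invertibility of the product map, and correctly note that the mutual-independence hypothesis is consumed only by the two invocations of \cref{cor:id_alg_complement}---but the underlying argument is the same.
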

\looseness=-1\textbf{Discussion.} While~\cref{cor:id_alg_intersection} refines the identified block of information into smaller intersections, ~\cref{cor:id_alg_complement,cor:id_alg_union} allows to extract ``style'' variables as defined w.r.t.~some specific views, under the assumption that they are independent of the content block, as discussed by~\citet{lyu2021understanding}. However, our setup is more general, as we can not only explain the independent style variables between pairs of \emph{observations}, but also between \emph{learned content representations}. 
Thus, by iteratively applying~\cref{cor:id_alg_complement} we can generalize the statement to any number of identified content blocks.
Combining~\cref{cor:id_alg_intersection,cor:id_alg_complement,cor:id_alg_union} we can immediately tell which part can be block-identified from a set of views $\Vcal$, given a graphical model representation such as~\cref{fig:example_graph} and subject to technical assumptions underlying our main results. Applying~\cref{cor:id_alg_intersection,cor:id_alg_complement,cor:id_alg_union} iteratively on identified blocks can possibly \emph{disentangle} each individual factors of variation, providing a novel approach for disentanglement. If all variables can be isolated up to element-wise nonlinear transformations, we can learn the causal relations by assuming the original link are nonlinear with additive noise. This exactly corresponds to a post-nonlinear model, whose graphic structure can be further identified using causal discovery algorithms~\citep{zhang2006extensions,zhang2009identifiability}.

\section{Related Work and Special Cases of Our Theory}
\label{sec:related_work}
Our framework unifies several prior work, including \emph{multi-view nonlinear ICA}~\citep{gresele2019incomplete}, \emph{weakly-supervised disentanglement}~\citep{locatello2020weakly,ahuja2022weakly} and \emph{content-style identification}~\citep{von2021self,daunhawer2023identifiability}. ~\Cref{tab:summary_work} shows a summarized (non-exhaustive) list of related works and their respective graphical models that can be considered as %
special cases. 
The graphical setups of the individual works can be recovered from our framework~\pcref{fig:example_graph} by varying the number of observed views and causal relations.

\begin{table}[t]
\caption{A non-exhaustive summary of \emph{special cases} of our theory and their graphical models. An asterisk ($^*$) indicates works that have view-specific latents that are not of interest for identifiability.
}
\centering
\newcommand{\cmark}{\ding{51}}%
\newcommand{\xmark}{\ding{55}}%
\scriptsize
\renewcommand{\arraystretch}{2}%
\resizebox{\columnwidth}{!}{%
\begin{tabular}{l m{0.2\linewidth}cccc}
\toprule
\thead{\bf\scriptsize Method} & \thead{\bf\scriptsize Graph} & \thead{\bf\scriptsize Dependent Latents
}  & \thead{\bf\scriptsize Multi-Modal} &  \thead{\bf\scriptsize Partial Observability} & \thead{\bf\scriptsize $> 2$ Views}\\
\midrule 
\multirow{2}{*}{}
\citet{scholkopf2016modeling} & 
\multicolumn{1}{c}{\includegraphics[scale=0.32]{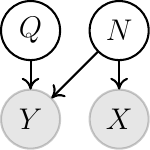}} &
\xmark &  
\cmark & 
\cmark & 
\xmark\\
\multirow{2}{*}{}
\citet{gresele2019incomplete} &  
\multicolumn{1}{c}{\includegraphics[scale=0.32]{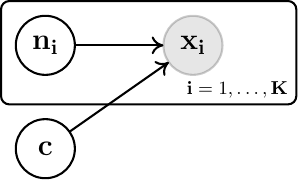}}          &
\xmark & 
\cmark & 
\xmark$^*$ & 
\cmark\\
\multirow{2}{*}{}
\begin{tabular}[c]{@{}l@{}}\citet{locatello2020weakly} \end{tabular}& 
\multicolumn{1}{c}{\includegraphics[scale=0.32]{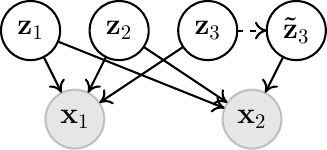}}         &
\xmark &
\xmark & 
\xmark & 
\xmark\\
\multirow{2}{*}{}
\citet{ahuja2022weakly}& 
\multicolumn{1}{c}{\includegraphics[scale=0.32]{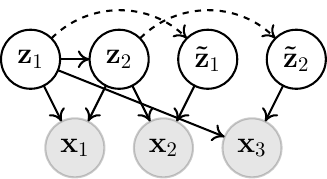}}         &
\cmark  &
\xmark & 
\xmark & 
\cmark\\
\multirow{2}{*}{}
\begin{tabular}[c]{@{}l@{}}
\citet{von2021self} \end{tabular}& 
\multicolumn{1}{c}{\includegraphics[scale=0.32]{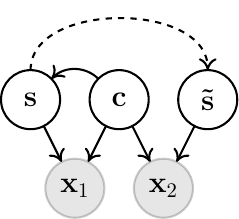}}&
\cmark & 
\xmark & 
\xmark & 
\xmark\\
\multirow{2}{*}{}
\citet{daunhawer2023identifiability} & 
\multicolumn{1}{c}{\includegraphics[scale=0.32]{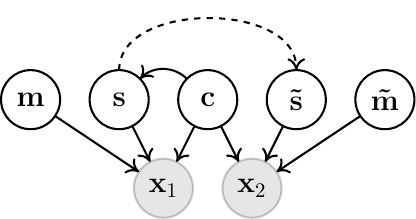}} &
\cmark &  
\cmark & 
\xmark$^*$ & 
\xmark\\
\multirow{2}{*}{}
Ours & 
\multicolumn{1}{c}{\cref{fig:example_graph}}& 
\cmark  &
\cmark & 
\cmark & 
\cmark\\
\bottomrule
\end{tabular}
}
\vspace{-10pt}
\label{tab:summary_work}
\end{table}

\looseness=-1In addition, we present a short overview of other related work which can be connected with our theoretical results, including \emph{causal representation learning}~\citep{sturma2023unpaired,silva2006learning,adams2021identification,kivva2021learning,cai2019triad,xie2020generalized,xie2022identification,morioka2023causal,morioka2023connectivity},
\emph{mutual information-based contrastive learning}~\citep{tian2020makes,tsai2020self,tosh2021contrastive}, 
\emph{latent correlation maximization}~\citep{andrew2013deep,benton2017deep,lyu2020nonlinear,lyu2021understanding}, \emph{nonlinear ICA without auxiliary variables}~\citep{willetts2021don} and \textit{multitask disentanglement with sparse classifiers}~\citep{lachapelle2022synergies,fumero2023leveraging}. Further discussion is given in~\cref{sec:extended_related_work}. 
We remark that several approaches here consider the setting where two observations are generated through an intervention on some latent variable(s). This is sometimes written in the graphical model as two nodes connected by an arrow (shown in the graphs in~\cref{tab:summary_work} as dashed lines $\dashrightarrow$) indicating the pre- and post-intervention versions of the same variable(s).
We stress that this does \textit{not} constitute an example of partial observability. In our setting, latent variables can be simply unobserved, regardless of whether or not they were subject to an intervention.

\textbf{Causal representation learning. }
In the context of causal representation learning (CRL), \citet{sturma2023unpaired} also explicitly consider partial observability in a \emph{linear, multi-domain} setting. 
Several other works on \textit{linear} CRL from \textit{i.i.d.}\ data could also be viewed as assuming partial observability, since they often rely on graphical conditions which enforce each measured variable to depend on a single (a ``pure'' child) or only a few latents~\citep{silva2006learning,adams2021identification,kivva2021learning,cai2019triad,xie2020generalized,xie2022identification}.
In our framework, each view $\xb_k$ instead constitutes a \textit{nonlinear} mixture of \textit{several} latents.
Merging partially observed causal structure has been studied without a representation learning component by~\citet{gresele2022causal,mejia2022obtaining,guo2023out}.

\looseness=-1\textbf{Mutual Information-based Contrastive Learning.}
\Citet{tian2020makes,tsai2020self,tosh2021contrastive} empirically showcase the success of contrastive learning in extracting task-related information across multiple views,
if the augmented views are
redundant to the original data regarding task-related information~\citep{tian2020makes}. From this point of view, the \emph{redundant} task-information can be interpreted as shared content between the views, for which our theory~\pcref{thm:ID_from_sets,thm:general_ID_from_sets_size_unknown} may provide theoretical explanations for the improved performance in downstream tasks.

\looseness=-1\textbf{Latent Correlation Maximization.}
Prior work~\citep{andrew2013deep,benton2017deep,lyu2020nonlinear,lyu2021understanding} showed that maximizing the correlation between the learned representation is equivalent to our \emph{\textcolor{Green}{content alignment}} principle~\pcref{eq:main_loss_set}. The additional invertibility constraint on the learned encoder in their setting is enforced by entropy regularization~\pcref{eq:main_loss_set}, as explained by~\citet{zimmermann2021contrastive}. However, their theory is limited to pairs of views and full observability, while we generalize it to any number of partially observed views.

\looseness=-1\textbf{Nonlinear ICA without Auxiliary Variables.} \citet{willetts2021don} shows nonlinear ICA problem can be solved using non-observable, learnable, clustering task variables $u$, to replace the observed  auxiliary variable in other nonlinear ICA approaches~\citep{hyvarinen2019nonlinear}. While we explicitly require the learned representation to be aligned in a continuous space within the content block, \citet{willetts2021don} impose a \emph{soft} alignment constraint to encourage the encoded information to be similar within a cluster. In practice, the soft alignment requirement can be easily coded in our framework by relaxing the \emph{\textcolor{Green}{content alignment}} with an equivalence class in terms of cluster membership.

\looseness=-1\textbf{Multi-task Disentanglement with Sparse Classifiers.}
Our setup is slightly different from that of~\citet{lachapelle2022synergies,fumero2023leveraging} as they focus on multiple classification tasks using shared encoding and sparse linear readouts. Their sparse classifier head jointly enforces the sufficient representation (regarding the specific classification task, while we aim for the invertibility of the encoders) and a soft alignment up to a linear equivalence class (relaxing our hard alignment). 
However, the identifiability principles we use are similar: sufficient representation (entropy regularization), 
alignment 
and information sharing.
While our results can be easily extended to allow for alignment up to a linear equivalence class, their identifiability theory crucially only covers independent latents.
\section{Experiments}
\label{sec:experiment}
First, we validate~\cref{thm:ID_from_sets,thm:general_ID_from_sets_size_unknown} using numerical simulations in a \emph{fully controlled} synthetic setting. 
Next, 
we conduct experiments on visual (and text) data demonstrating different special cases that are unified by our theoretical framework~\pcref{sec:related_work} and how we extend them. %
We use InfoNCE~\citep{oord2019representation} and BarlowTwins~\citep{zbontar2021barlow} to estimate~\cref{eq:main_loss_set,eq:main_loss_set_size_unknown}. 
The \textcolor{Green}{\emph{content alignment}} is computed by the numerator 
(positive pairs) in InfoNCE
and the \textcolor{Blue}{\emph{entropy regularization}} 
is estimated by the denominator 
(negative pairs). 
Further remarks on contrastive learning and entropy regularization are in~\cref{sec:extended_discussion}. 
For the evaluation, we follow a standard evaluation protocol~\citep{von2021self} and predict the ground truth latents from the learned representation $g_k(\xb_k)$, 
using kernel ridge regression for \emph{continuous} latent variables, 
and logistic regression for \emph{discrete} ones, respectively.
Then, we report the \emph{coefficient of determination} $R^2$ to show the correlation between the learned and ground truth latent variables. 
An $R^2$ close to one between the learned and ground truth variables means that the learned variables are modelling correctly the ground truth, 
indicating block-identifiability~\pcref{defn:block-identifiability}. 
However, $R^2$ is limited as a metric, since any style variable that strongly depends on a content variable would also become predictable, thus showing a high $R^2$ score. 
\begin{wrapfigure}{r}{0.5\linewidth}
    \vspace{-1.5em}
    \centering
    \includegraphics[width=.5\textwidth]{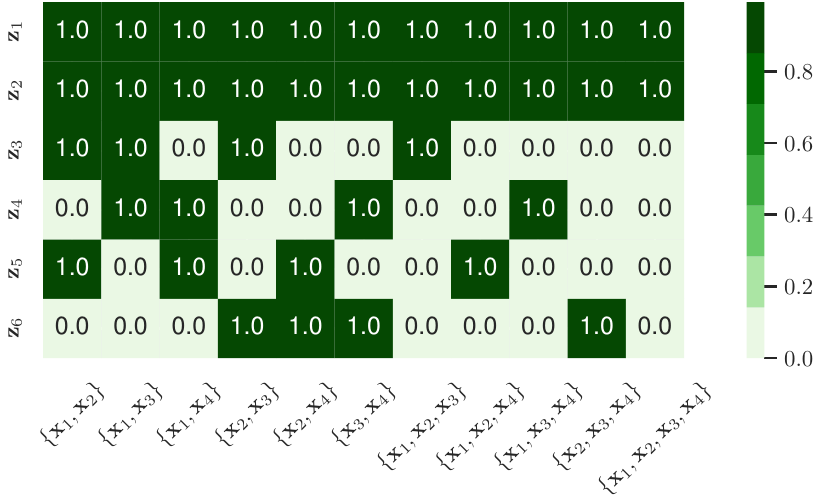}
    \vspace{-2.0em}
    \caption{\small \textbf{Theory Validation}: Average $R^2$ across multiple views generated from \emph{independent} latents.}
    \vspace{-13pt}
    \label{fig:numerical_heatmap_ind}
    \vspace{-7pt}
\end{wrapfigure}
\subsection{Numerical Experiment: Theory validation}
\label{subsec: num_results}
\looseness=-1\textbf{Experimental Setup.}
We generate synthetic data following~\cref{eq:example}. The latent variables are sampled from a Gaussian distribution $\zb \sim \Ncal(0, \Sigma_{\zb})$, where possible \emph{causal} dependencies can be encoded through $\Sigma_{\zb}$. The \emph{view-specific mixing functions} $f_k$ are implemented by randomly initialized invertible MLPs for each view $k \in \{1, \dots 4\}$. 
We report here the $R^2$ scores for the case of \emph{independent} variables, because it is easier to interpret than the $R^2$ scores in the \emph{causally dependent} case, for which we show that the learned representation still contains \emph{all and only} the content information in \cref{app:numerical}.

\looseness=-1\textbf{Discussion.}
\cref{fig:numerical_heatmap_ind} shows how the averaged $R^2$ changes when including more views, 
with the y-axis denoting the ground truth latents and the x-axis showing learned representation from different subsets of views. 
As shown in~\cref{example:intuitive} and~\cref{fig:example_graph}, the content variables are \emph{consistently} identified, having $R^2 \approx 1$, while the \emph{independent} style variables are non-predictable ($R^2 \approx 0$). This numerical result shows that the learned representation explains almost all variation in the content block but nothing from the independent styles, which validates~\cref{thm:ID_from_sets,thm:general_ID_from_sets_size_unknown}.

\begin{wraptable}{r}{0.4\linewidth}
    \centering
    \vspace{-2em}
    \caption{\small \textbf{Self-Supervised Disentanglement Performance Comparison} on \emph{MPI-3D complex}~\citep{gondal2019transfer} and \emph{3DIdent}~\citep{zimmermann2021contrastive}, between our method and Ada-GVAE~\citep{locatello2020weakly}.}
    \label{tab:my_label}
    \vspace{-0.5em}
    \scriptsize
    \begin{tabular}{ccc}
    \toprule
    \rowcolor{Gray!20}  & \multicolumn{2}{c}{\textbf{DCI disentanglement} $\uparrow$} \\
    \rowcolor{Gray!20}  & \scriptsize \textbf{MPI3D complex} & \scriptsize \textbf{3DIdent}\\
    \midrule
    Ada-GVAE & $0.11\pm$ \tiny 0.008  &  $0.09\pm$ \tiny 0.019\\
    Ours & $\mathbf{0.42}\pm$ \tiny 0.020 & $\mathbf{0.30}\pm$ \tiny 0.04\\
    \bottomrule
    \end{tabular}
     \vspace{-13pt}
\end{wraptable}
\subsection{Self-Supervised Disentanglement}
\label{subsec:case1}
\looseness=-1\textbf{Experimental Setup.}
We compare our method (\cref{thm:general_ID_from_sets_size_unknown}) 
with Ada-GVAE~\citep{locatello2019challenging}, 
on \emph{MPI-3D complex}~\citep{gondal2019transfer} 
and 
\emph{3DIdent}~\citep{zimmermann2021contrastive} image datasets. 
We did not compare with \citet{ahuja2022weakly}, since their method needs to know which latent is perturbed, even when guessing the offset. 
We experiment on a pair of views $(\xb_1, \xb_2)$ 
where the second view $\xb_2$ is obtained by randomly perturbing a subset of latent factors of $\xb_1$, 
following~\citep{locatello2019challenging}. 
We provide more details about the datasets and the experiment setup in~\cref{app:ss_dis}. 
As shown in~\cref{tab:my_label}, 
our method outperformed the autoencoder-based Ada-GVAE~\citep{locatello2020weakly}, using only an encoder and computing contrastive loss in the latent space. 

\looseness=-1\textbf{Discussion.}
As both methods are theoretically identifiable, 
we hypothesize that the improvement comes from avoiding reconstructing the image, 
which is more difficult on visually complex data. 
This hypothesis is supported by the fact that self-supervised contrastive learning has far exceeded the performance of autoencoder-based representation learning 
in both classification tasks~\citep{chen2020simple,caron2021emerging,oquab2023dinov2} and object discovery~\citep{seitzer2023bridging}.

\begin{figure}
\centering
\vspace{-10pt}
\begin{subfigure}{.42\textwidth}
  \centering
  \includegraphics[width=\linewidth]{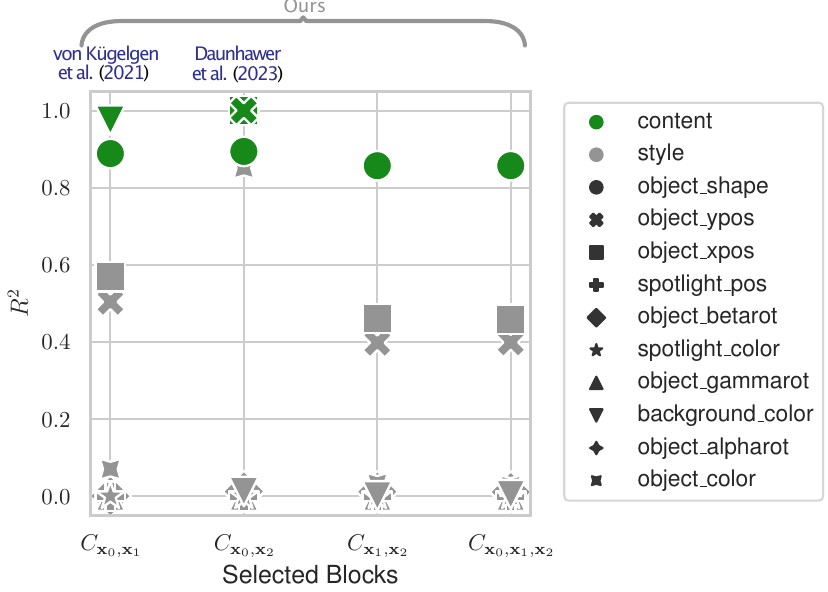}
\end{subfigure}%
\begin{subfigure}{.42\textwidth}
  \centering
  \includegraphics[width=\linewidth]{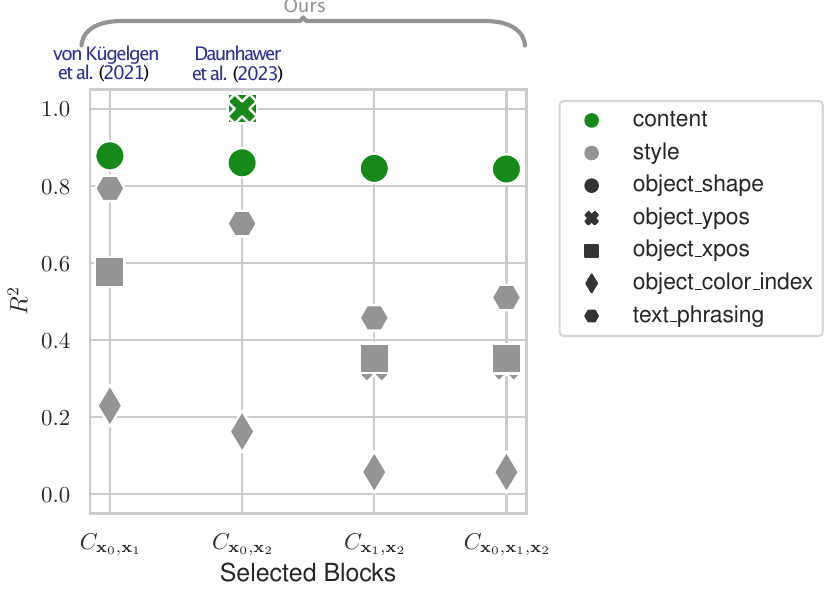}
\end{subfigure}
\caption{\small \textbf{Simultaneous Multi-Content Identification using View-Specific Encoders.} Experimental results on~\emph{Multimodal3DIdent}. \emph{Left}: Image latents (averaged between two image views) \emph{Right}: Text latents.}
\vspace{-15pt}
\label{fig:m3di_res}
\end{figure}

\subsection{Multi-Modal Content-Style Identifiability under Partial Observability}
\label{subsec:case2}
\looseness=-1\textbf{Experimental setup.} We experiment on a set of \emph{three} views $(\mathrm{img}_0, \mathrm{img}_1, \mathrm{txt}_0)$
extending both~\citep{daunhawer2023identifiability,von2021self}, which are limited to two views, either two images or one image and its caption. The second image view $\mathrm{img}_1$ is generated by perturbing a subset of latents of $\mathrm{img}_0$ as in~\citep{von2021self}. Notice that this setup provides perfect partial observability, because the text is generated using text-specific modality variables that are not involved in any image views e.g., \emph{text phrasing}. %
We train \emph{view-specific encoders} to learn \emph{all content blocks simultaneously} and predict individual latent variables from the each \emph{learned} content blocks. We assume access to the ground truth content indices to better match the baselines, but we relax this in~\cref{app:m3di}.

\looseness=-1\textbf{Discussion.}
~\Cref{fig:m3di_res} reports the $R^2$ on the ground truth latent values, predicted from the \emph{simultaneously learned multiple content blocks}~($C_{\xb_0, \xb_1}, C_{\xb_0, \xb_2}, C_{\xb_1, \xb_2}, C_{\xb_0, \xb_1, \xb_2}$, respectively). We remark that this \emph{single} experiment recovers both experimental setups from~\citep[][Sec 5.2]{von2021self},~\citep[][Sec 5.2]{daunhawer2023identifiability}: $C_{\xb_0, \xb_1}$ represents the content block from the image pairs $(\mathrm{img}_0, \mathrm{img}_1)$, which aligns with the setting in~\citep{von2021self} and $C_{\xb_0, \xb_2}$ shows the content block from the multi-modal pair $(\mathrm{img}_0, \mathrm{txt}_0)$, which is studied by~\citet{daunhawer2023identifiability}. We observe that the same performance for both prior works~\citep{von2021self,daunhawer2023identifiability} has been successfully reproduced from our single training process, which verifies the effectiveness and efficiency of~\cref{thm:general_ID_from_sets_size_unknown}.  Extended evaluation and more experimental details are provided in~\cref{app:m3di}. 
\subsection{Multi-Task Disentanglement}
\label{subsec:multi_task_disent}
\looseness=-1\textbf{Experimental setup} We follow~\cref{example:intuitive} with \emph{latent causal relations} to verify that: (i) the improved classification performance from~\citep{lachapelle2022synergies,fumero2023leveraging} originates from the fact that the task-related information is shared across multiple views (different observations from the same class) and (ii) this information can be identified~\pcref{thm:ID_from_sets}, \textit{even though the latent variables are not independent}. This explains the good performance of~\citep{fumero2023leveraging} on real-world data sets, where the latent variables are likely not independent, violating their theory. %

\looseness=-1\textbf{Discussion.} We synthetically generate the labels by linear/nonlinear labeling functions on the shared content values to resemble~\citep{lachapelle2022synergies,fumero2023leveraging}. As expected, the learned representation significantly eases the classification task and achieves an accuracy of 0.99 with linear and nonlinear labeling functions within 1k update steps, even with latent causal relations. This experimental result justifies that the success in the empirical evaluation of~\citep{fumero2023leveraging} can be explained by our theoretical framework, as discussed in~\cref{sec:related_work}. 

\section{Discussion and Conclusion}
\label{sec:discussion_conclusion}
\looseness=-1This paper revisits the problem of identifying possibly dependent latent variables under multiple partial non-linear measurements. Our theoretical results extend to an arbitrary number of views, each potentially measuring a strict subset of the latent variables. In our experiments, we validate our claims and demonstrate how prior work can be obtained as a special case of our setting. While our assumptions are relatively mild, we still have notable gaps between theory and practice, thoroughly discussed in~\cref{sec:extended_discussion}. In particular, we highlight discrete variables and finite-sample errors as common gaps, which we address only empirically. Interestingly, our work offers potential connections with work in the causality literature~\citep{triantafillou2010learning,gresele2022causal,mejia2022obtaining,guo2023out}. Discovering hidden causal structures from overlapping but not simultaneously observed marginals (e.g., via views collected in different experimental studies at different times) remains open for future works.

\subsubsection*{Reproducibility Statement} The datasets used in~\pcref{sec:experiment} are published by~\citet{gondal2019transfer,zimmermann2021contrastive,von2021self,daunhawer2023identifiability}. Results provided in the experiments section~\pcref{sec:experiment} can be reproduced using the implementation details provided in~\cref{sec:exp_details}. The code is available at \href{https://github.com/CausalLearningAI/multiview-crl}{https://github.com/CausalLearningAI/multiview-crl}. The part of implementation to replicate the experiments of~\citet{fumero2023leveraging} in~\cref{subsec:multi_task_disent} was kindly provided by the authors upon request, and we do not include it in the git repository.

\subsubsection*{Acknowledgements}
This work was initiated at the Second Bellairs Workshop on Causality held at the Bellairs Research Institute, January 6--13, 2022; we thank all workshop participants for providing a stimulating research environment.
Further, we thank Cian Eastwood, Luigi Gresele, Stefano Soatto, Marco Bagatella and A.\ René Geist for helpful discussion.
GM is a member of the Machine Learning Cluster of Excellence, EXC
number 2064/1 – Project number 390727645. JvK and GM acknowledge support from the German Federal
Ministry of Education and Research (BMBF) through the Tübingen AI Center (FKZ: 01IS18039B). %
The research of DX and SM was supported by the Air Force Office of Scientific Research under award number FA8655-22-1-7155. Any opinions, findings, and conclusions or recommendations expressed in this material are those of the author(s) and do not necessarily reflect the views of the United States Air Force.
We also thank SURF for the support in using the Dutch National Supercomputer Snellius. 
SL was supported by an IVADO
excellence PhD scholarship and by Samsung Electronics Co., Ldt.
DY was supported by an Amazon fellowship, the International Max Planck Research School for Intelligent Systems (IMPRS-IS) and the ISTA graduate school. 
Work done outside of Amazon.

\bibliography{iclr2024_conference}
\bibliographystyle{iclr2024_conference} 
\clearpage
\appendix

\newcommand{\GreenLPlus}{\mathbin{\textcolor{Green}{[L+1]}}}
\newcommand{\OrangeL}{\mathbin{\textcolor{Orange}{V}}}
\newcommand{\gtilde}{\mathbin{\textcolor{Green}{\tilde{g}}}}
\newcommand{\g}{\mathbin{\textcolor{Orange}{g}}}
\newcommand{\htilde}{\mathbin{\textcolor{Green}{\tilde{h}}}}
\newcommand{\h}{\mathbin{\textcolor{Orange}{h}}}

\addcontentsline{toc}{section}{Appendix} %
\part{Appendix} %
{
  \hypersetup{linkcolor=Blue}
  \parttoc
}

\makenomenclature
\renewcommand{\nomname}{} %
\section{Notation and Terminology}
\label{sec:notations}
\vspace{-2em}
\nomenclature[1]{\(N\)}{Number of latents}
\nomenclature[2]{\(K\)}{Number of views}
\nomenclature[3]{\(j\)}{Index for latent variables}
\nomenclature[4]{\(k\)}{Index for views}
\nomenclature[5]{\(\Zcal\)}{Latent space}
\nomenclature[6]{\(\Xcal_k\)}{Observational space for $k$-th view}
\nomenclature[7]{\(\xb_k\)}{Observed $k$-th view}
\nomenclature[8]{\(S_k\)}{Index set for $k$-th view-specific latents}
\nomenclature[9]{\(V\)}{$\{1, \dots, l\}$}
\nomenclature[10]{\(C\)}{Index set for shared content variables}
\nomenclature[11]{\(\zb_C\)}{Shared content variables}
\nomenclature[12]{\(\Vcal\)}{Collection of subset of views from $V$}
\nomenclature[13]{\(V_i\)}{Index set for subset of views in set of views $V$}
\nomenclature[14]{\(C_{i}\)}{Index set for shared content variables from subset of views $V_i \in \Vcal$}
\nomenclature[15]{\(\zb_{C_i}\)}{Shared content variables from subset of views $V_i$}
\printnomenclature
\section{Related work and special cases of our theory}
\label{sec:extended_related_work}
We present our identifiability results from~\cref{sec:identifiability} as a unified framework implying several prior works in multi-view nonlinear ICA, disentanglement, and causal representation learning.%

\paragraph{Multi-View Nonlinear ICA} \citet{gresele2019incomplete} extend the idea of nonlinear ICA introduced by~\citet[][Sec. 3]{hyvarinen2019nonlinear} by allowing a more flexible relationship between the latents and the auxiliary variables: instead of imposing \emph{conditional independence} the shared source information $\mathbf{c}$ on some auxiliary observed variables, \citet{gresele2019incomplete} \emph{associate} the shared information source with some view-specific noise variable $n_i$ through some smooth mapping $g_k$:
    \begin{equation*}
        \xb_k = f_k(g_k(\mathbf{c}, \mathbf{n}_k)), \qquad k \in [K].
    \end{equation*}
Define the composition of the view-specific function $f_k$ and the noise corruption function $g_k$ as a new mixing function $\tilde{f}_k:= f_k \circ g_k.$, then each view $\xb_k, k \in [K]$ is generated by a \emph{view-specific mixing function} $\tilde{f}_k$ which takes the shared content $\mathbf{c}$ and some additional unobserved noise variable $n_k$ as input, that is: $\xb_k = \tilde{f}_k(\mathbf{c}, \mathbf{n}_k)$. In this case, the shared source information $\mathbf{s}$ together with the view-specific latent noise $n_k$ defines the view-specific latents $S_k$ in our notation. The shared source $\mathrm{c}$ corresponds to our content variables. Our results~\cref{thm:ID_from_sets} can be considered as a generalized version of~\citet[][Theorem~8]{gresele2019incomplete} for multiple views, by removing the additivity constraint of the corruption function $g$~\citep[][Sec 3.3]{gresele2019incomplete}.  

\citet{willetts2021don, kivva2022identifiability,liu2022identifying} show the nonlinear ICA problem can be solved using non-observable, learnable, clustering task variables $u$ to replace the observed the auxiliary variable in the traditional nonlinear ICA literature~\citep{hyvarinen1999nonlinear}. Conditioning on the \emph{same} latents on various clustering task enforces recovering the true latent factors (up to a bijective mapping). The idea of utilizing the clustering task goes hand in hand with contrastive self-supervised learning. Clustering itself can be considered as a soft relaxation of our hard global invariance condition in~\cref{eq:main_loss_set_size_unknown}, in the sense they enforce the shared, task-relevant features to be \emph{similar} within a cluster but not necessarily having the exact same value.

\paragraph{Weakly-Supervised Representation Learning}  \citet{locatello2020weakly} consider a pair of views (e.g., images) $(\xb_1, \xb_2)$ where ${\xb}_2$ is obtained by perturbing a subset of the generating factors of $\xb_1$. Formally,
    \begin{equation}
    \begin{aligned}
        \xb_1 = f(\zb)\qquad \xb_2 = f(\tilde{\zb}) \qquad\zb, \tilde{\zb} \in \RR^d,
    \end{aligned}
    \end{equation}
where $\zb_S = \tilde{\zb}_S$ while $\zb_{\bar S} \neq \tilde{\zb}_{\bar S}$ for some subset of latents $S \subseteq [d]$.
In this case, $\zb_S$ is the shared content between the pair of views $\xb_1, \xb_2$. According to the adaptive algorithm~\citep[Sec 4.1][]{locatello2020weakly}, the shared content is computed by averaging the encoded representation from $\xb_1, \xb_2$ across the shared dimensions, that is:
\begin{equation}
\begin{aligned}
    g(\xb_k)_j &\gets a(g(\xb_1)_j, g(\xb_2)_j) \qquad j \in S
\end{aligned}
\end{equation}
By substituting the extracted representation using the averaged value,~\citet{locatello2020weakly} achieve the same invariance condition as enforced in the first term of~\cref{eq:main_loss_set}.
The amortized encoder $g$ is trained to maximize the ELBO~\citep{kingma2022autoencoding} over the pair of views, which is equivalent to minimizing the reconstruction loss
\begin{equation}
    \EE \left[\xb - dec(g(\xb_1))\right] + \EE \left[\xb_2 - dec(g(\xb_2))\right].
\end{equation}
The reconstruction loss is minimized when the compression is lossless, or equivalently, the learned representation is uniformly distributed~\citep{zimmermann2021contrastive}. The uniformity of the representation, i.e., the lossless compression, can be enforced by maximizing the entropy term, as defined in~\cref{eq:main_loss_set}.
Theoretically, ~\citet[][Theorem 1.]{locatello2020weakly} have shown that the shared content $\zb_S$ can be recovered up to permutation, which aligns with our results~\cref{thm:general_ID_from_sets_size_unknown} that the shared content can be inferred up to a smooth invertible mapping. 

\citet{ahuja2022weakly} extend \citet{locatello2020weakly} by exploring more perturbations options to achieve full~\emph{disentanglement}. The main results~\citep[][Theorem 8]{ahuja2022weakly} state that each individual factor of a $d-$dimensional latents can be recovered (up to a bijection) when we augment the original observation with $d$ views, each obtained perturbing one unique latent component. This can be explained by~\cref{thm:general_ID_from_sets_size_unknown} because any $(d-1)$ views from this set would share exactly one latent component, which makes it identifiable. Although the theoretical claim by~\citet{ahuja2022weakly} is to some extent aligned with our theory, in practice, they explicitly require knowledge of the ground truth content indices while we do not necessarily. 

\paragraph{Mutual Information-based Framework} \citet{tian2020makes,tsai2020self} argue that the self-supervised signal should be approximately redundant to the task-related information. The self-supervised learning methods are based on extracting the task-relevant information (by maximizing the mutual information between the extracted representation $\hat{\zb}_{\xb}$ of the input $\xb$ and the self-supervised signal $\mathbf{s}$: $I(\hat{\zb}_{\xb}, \mathbf{s}))$ and discarding the task-irrelevant information conditioned on task $T$: $I(\xb, \mathbf{s}~|~T)$. The mutual information $I(\hat{\zb}_{\xb}, \mathbf{s}))$ is maximized if $\mathbf{s}$ is a deterministic function (for example, a MLP) of $\hat{\zb}$~\citep[][Theorem 1.]{amjad2020learning}. Since the mutual information remains invariant under deterministic transformation of the random variables, we have:
\begin{equation}
    \max I(\hat{\zb}, s) = \max I(\hat{\zb}, g(s)) = \max I(\hat{\zb}, \hat{\zb}) = \max H(\hat{\zb})
\end{equation}
which is equivalent to maximizing the entropy of the learned representations, as given in~\cref{eq:main_loss_set}.
Coupled with the empirically shown strong connection between the task-related information and shared content across multiple views~\citep{tian2020makes,tsai2020self,tosh2021contrastive, lachapelle2022synergies,fumero2023leveraging}, our results ~\pcref{thm:ID_from_sets,thm:general_ID_from_sets_size_unknown} provides a theoretical explanation for these approaches. As the shared content between the original view and self-supervised signal is proven to be related to the ground truth task-related information through a smooth invertible function, it is reasonable to see the usefulness of this high quality representation in downstream tasks.

\paragraph{Latent Correlation Maximization} Similar alignment conditions, as given in~\cref{eq:loss_set}, have been widely studied in the latent correlation maximization / latent component matching literature~\citep{andrew2013deep,benton2017deep,lyu2020nonlinear,lyu2021understanding}. \citet[][Theorem 1.]{lyu2021understanding} show that, by imposing additional invertibility constraint on the encoders latent correlation maximization across two views leads to identification of the shared component, up to an invertible function. This theoretical result can be considered as a explicit special case of~\cref{thm:ID_from_sets}, where we extend the identifiability proof to more than two multi-modal views.

\paragraph{Content-Style Identification} Our work is most closely related to~\citep{von2021self,daunhawer2023identifiability}, while our results extended prior work that purely focused on identifiability from pair of views~\citep{von2021self,daunhawer2023identifiability}. ~\citet[][Theorem 4.2]{von2021self} presented a special case of~\cref{thm:ID_from_sets} where the set size $l = 2$ and the mixing function $f_1 = f_2$ for both views; ~\citet[][Theorem 1]{daunhawer2023identifiability} formulate another special case of~\cref{thm:ID_from_sets} by allowing multi modality in the pair of views, but coming with the restriction that the view-specific modality variables have to be independent from others. From the data generating perspective, our work differs from prior work in the sense that all of the entangled views are simultaneously generated, each based on view-specific set of latent, while prior work generate the augmented (second) view by perturbing some style variables. In our case, ``style" is relative to specific views. Style variables could become the content block for some set of views~\pcref{thm:ID_from_sets} and thus be identifiable or can be inferred as independent complement block of the content~\pcref{cor:id_alg_complement}.~\citet{kong2022partial} have proven identifiability for \emph{independent} partitions in the latent space but mostly focus on the domain adaptation tasks where additional targets are required as supervision signals.

\paragraph{Multi-Task Disentanglement}
\citet{lachapelle2022synergies,fumero2023leveraging,zheng2022on} differs from our theory in the sense that their sparse classifier head jointly enforces the lossless compression (which we do with the entropy regularization) and a soft alignment up to a linear transformation (relaxing our hard alignment). In their setting, the different views are images of the same class and their augmentations sampled from a given task and the selector variable is implemented with the linear classifier. The identifiability principles we use of lossless compression, alignment, and information-sharing are similar. With this, we can explain the result that task-related and task-irrelevant information can be disentangled as blocks, as given in~\citet[][Theorem 3.1]{lachapelle2022synergies}, ~\citet[][Proposition 1.]{fumero2023leveraging}. With our theory, their identifiability results extend to non-independent blocks, which is an important case that is not covered in the original works.

\section{Proofs}
\label{sec:proofs}
\subsection{Proof for~\cref{thm:ID_from_sets}}
Our proof follows the steps from~\citet{von2021self} with slight adaptation:
\begin{enumerate}
    \item We show in~\cref{lemma:exist_opt_enc} that the lower bound of the loss~\cref{eq:main_loss_set} is zero and construct encoders $\{g^*_k: \Xcal_k \to (0, 1)^{|C|}\}_{k \in V}$ that reach this lower bound;
    \item Next, we show in~\cref{lemma:content_style_isolation_set_of_views} that for any set of encoders $\{g_k\}_{k \in V}$ that minimizes the loss, each learned $g_k(\xb_k)$ depends only on the shared content variables $\zb_C$, i.e. $g_k(\xb_k) = h_k(\zb_C)$ for some smooth function $h_k: \Zcal_C \to (0, 1)^{|C|}$.
    \item We conclude the proof by showing that every $h_k$ is invertible using~\cref{prop:zimmermann_invertibility_uniformality}~\citep[][Proposition 5.]{zimmermann2021contrastive}.
\end{enumerate}
We rephrase each step as a separate lemma and use them to complete the final proof for~\cref{thm:ID_from_sets}.

\begin{lemma}[Existence of Optimal Encoders]
\label{lemma:exist_opt_enc}
Consider a jointly observed set of views $\xb_V$, satisfying~\cref{assmp:general_assumption}.
Let $S_{k} \subseteq [N],\, k \in V$ be view-specific indexing sets of latent variables and define the shared coordinates $C := \bigcap_{k \in V} S_k$. For any content encoders $G := \{g_k:\Xcal_k\to (0, 1)^{|C|}\}_{k \in V}$~\pcref{defn:content_encoders}, we define the following objective:
\begin{equation}
    \label{eq:loss_set}
    \Lcal 
    \left(G\right)=
    \sum_{\substack{k, k^\prime \in V \\ k < k^\prime}}
    \EE
    \left[\norm{g_{k}(\xb_{k})-g_{k^\prime}(\xb_{k^\prime})}_2\right]- \sum_{k \in V} H\left(g_{k}(\xb_k)\right)
\end{equation}
where the expectation is taken with respect to $p(\xb_V)$ and where $H(\cdot)$ denotes differential entropy. Then the global minimum of the loss~\pcref{eq:loss_set} is lower bounded by zero, and there exists a set of content encoders~\cref{defn:content_encoders} which obtains this global minimum.
\end{lemma}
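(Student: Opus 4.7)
The plan is to prove the lemma in two movements: first establish the lower bound of zero on $\Lcal(G)$ by analyzing each term, then construct an explicit set of encoders that achieves it.

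For the lower bound, I would argue the two terms separately. The \textcolor{Green}{alignment} sum $\sum_{k<k'} \EE\|g_k(\xb_k) - g_{k'}(\xb_{k'})\|_2$ is manifestly non-negative as an expectation of a norm. For the \textcolor{Blue}{entropy} term, note that each $g_k(\xb_k)$ takes values in the unit cube $(0,1)^{|C|}$, which has Lebesgue volume one; by the standard fact that the differential entropy of a density supported on a set of volume $V$ is maximized by the uniform distribution and equals $\log V$, we get $H(g_k(\xb_k)) \le \log 1 = 0$. Hence $-\sum_{k\in V} H(g_k(\xb_k)) \ge 0$, and combining the two, $\Lcal(G) \ge 0$.

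Next, I would construct encoders saturating this bound. The key observation is that, by \cref{assmp:general_assumption}(i), each $f_k$ is a diffeomorphism, so $f_k^{-1}:\Xcal_k\to\Zcal_{S_k}$ is a well-defined smooth map. Let $\pi_C: \Zcal_{S_k}\to\Zcal_C$ denote the smooth projection onto the coordinates indexed by $C\subseteq S_k$, so that $\pi_C\circ f_k^{-1}(\xb_k) = \zb_C$ for every $k\in V$ (the shared content is truly shared by construction). It remains to push $\zb_C$ to the uniform distribution on $(0,1)^{|C|}$ by a smooth bijection $d:\Zcal_C\to (0,1)^{|C|}$. Such a $d$ is provided by the Knothe--Rosenblatt rearrangement (equivalently, iterated conditional CDFs): because the marginal density $p_{\zb_C}$ inherited from $p_\zb$ is smooth and strictly positive almost everywhere on the open, simply connected $\Zcal_C$ (by \cref{assmp:general_assumption}(ii)), the resulting transport map is smooth and invertible, and sends $p_{\zb_C}$ to the uniform law on $(0,1)^{|C|}$. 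Then set
\begin{equation*}
g_k^*(\xb_k) \;:=\; d\circ\pi_C\circ f_k^{-1}(\xb_k), \qquad k\in V.
\end{equation*}

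To conclude, verify that this choice realizes $\Lcal=0$. For any pair $k,k'\in V$, $g_k^*(\xb_k) = d(\zb_C) = g_{k'}^*(\xb_{k'})$ almost surely, so every summand of the alignment term vanishes. Since $d$ pushes the distribution of $\zb_C$ forward to the uniform measure on $(0,1)^{|C|}$, each $g_k^*(\xb_k)$ has differential entropy $0$, so the entropy sum vanishes as well. Therefore $\Lcal(G^*)=0$, matching the lower bound, and each $g_k^*$ is smooth as a composition of smooth maps, hence a valid content encoder in the sense of \cref{defn:content_encoders}. The only non-trivial obstacle is the existence of the smooth uniformizing map $d$; the positivity assumption $p_\zb > 0$ a.e.\ together with smoothness of $p_\zb$ is exactly what is needed so that the conditional CDFs used in the Knothe--Rosenblatt construction are diffeomorphisms on their supports, which I would invoke as a standard fact rather than reprove.
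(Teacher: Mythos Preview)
Your proposal is correct and follows essentially the same approach as the paper's proof: both establish the lower bound by noting the alignment term is non-negative and the entropy is bounded above by zero on the unit cube, then construct optimal encoders as $d \circ \pi_C \circ f_k^{-1}$ where $d$ is the iterated conditional CDF map (the paper calls this the Darmois construction, you call it Knothe--Rosenblatt, but they are the same object). Your exposition of the lower bound is slightly more explicit than the paper's, but the argument and the construction are identical in substance.
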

\begin{proof}
    Consider the objective function $\Lcal(G)$ defined in~\cref{eq:loss_set}, the global minimum of $\Lcal(G)$ is obtained when the first term (alignment) is minimized and the second term (entropy) is maximized. 
    The alignment term is minimized to zero when $g_{k}$ are perfectly aligned for all $k \in V$, i.e., $g_{k}(\xb_{k}) = g_{k^\prime}(\xb_{k^\prime}) $ for all $\xb_V \sim p_{\xb_V}$. 
    The second term (entropy) is maximized to zero \emph{only} when $g_{k}(\xb_k)$ is uniformly distributed on $(0, 1)^{|C|}$ for all views $k \in V$.
    
    To show that there exists a set of smooth functions: $G :=\{g_k\}_{k \in V}$ that minimizes $\Lcal(G)$, 
    we consider the inverse function of the ground truth mixing function ${f^{-1}_k}_{1:|C|}$, w.l.o.g. we assume that the content variables are at indices $1:|C|$.
    This inverse function exists and is a smooth function given by~\cref{assmp:general_assumption}(i) that each mixing function $f_k$ is a smooth invertible function. By definition, we have ${f^{-1}_k}_{1:|C|} (\xb_{k}) = \zb_C$ for $k \in V$.
    
    Next, we define a function $\boldsymbol{d}$ using \emph{Darmois construction}~\citep{darmois1951analyse} as follows:
    \begin{equation}
        d^{j}\left(\zb_C\right) := F_j\left( z_j | \zb_{1: j-1}\right) \qquad j \in \{1, \dots, |C|\},
    \end{equation}
    where $F_j$ denotes the conditional cumulative distribution function (CDF) of $z_j$ given $\zb_{1:j-1}$, i.e. $F_j\left( z_j | \zb_{1: j-1}\right) := \mathbb{P}\left(Z_j \leq z_j | \zb_{1:j-1}\right)$. 
    By construction, $\mathbf{d}\left(\zb_C\right)$ is uniformly distributed on $(0, 1)^{|C|}$. Moreover, $\db$ is smooth because $p_\zb$ is a smooth density by~\cref{assmp:general_assumption}(ii) and because conditional CDF of smooth densities is smooth %

    Finally, we define
    \begin{equation}
        \label{eq:optimal_gk}
        g_k := \mathbf{d} \circ {f^{-1}_k}_{1:|C|}: \Xcal_k \to (0, 1)^{|C|}, \quad k \in V,
    \end{equation}
    which is a smooth function as a composition of two smooth functions.

    Next, we show that the function set $G$ as constructed above attains the global minimum of $\Lcal(G)$. 
    Given that ${f^{-1}_{k}}_{1:|C|}(\xb_{k}) = {f^{-1}_{k^\prime}}_{1:|C|}(\xb_{k^\prime}) = \zb_C, \, \forall k, k^\prime \in V$, we have:
    \begin{equation}
    \begin{aligned}
        \Lcal 
        \left(G\right)=&
        \sum_{\substack{k, k^\prime \in V \\ k < k^\prime}}
        \EE
        \left[\norm{g_{k}(\xb_{k})-g_{k^\prime}(\xb_{k^\prime})}_2\right]- \sum_{k \in V} H\left(g_{k}(\xb_k)\right)\\[.5em]
        =& \sum_{\substack{k, k^\prime \in V \\ k < k^\prime}}
        \EE
        \left[\norm{\mathbf{d}\left(\zb_C\right)-\mathbf{d}\left(\zb_C\right)}_2\right]
        - \sum_{k \in V} H\left(\mathbf{d}\left(\zb_C\right)\right)
        \\[.5em]
        =& 0,
    \end{aligned}
    \end{equation}
    where $\zb_C$ is the shared content variables thus the first term (alignment) equals zero; and since $\mathbf{d}\left(\zb_C\right)$ is uniformly distributed on $(0, 1)^{|C|}$, the second term (entropy) is also zero.

    To this end, we have shown that there exists a set of smooth encoders $G := \{g_k\}_{k \in V}$ with $g_k$ as defined in~\cref{eq:optimal_gk} which minimizes the objective $\Lcal(G)$ in~\cref{eq:loss_set}.
\end{proof}

\begin{lemma}[Conditions of Optimal Encoders]
\label{lemma:cond_opt_enc}
    Assume the same set of views $\xb_V$ as introduced in Lemma~\ref{lemma:exist_opt_enc}, then for any set of smooth encoders $G:=\{g_k: \Xcal_{k} \to (0, 1)^{|C|}\}_{k \in V}$ to obtain the global minimum (zero) of the objective $\Lcal(G)$ in~\cref{eq:loss_set}, the following two conditions have to be fulfilled:
\begin{itemize}
    \item \textbf{Invariance}: All extracted representations $\hat{\zb}_{k} := g_k(\xb_k)$ must align across the views from the set $V$ almost surely:
    \begin{equation}
        \label{eq:cond1}
        g_k(\xb_k) = g_{k^\prime}(\xb_{k^\prime}) \quad  \forall k, k^\prime \in V \quad a.s.
    \end{equation}
    \item \textbf{Uniformity}: All extracted representations $\hat{\zb}_{k} := g_k(\xb_k)$ must be uniformly distributed over the hyper-cube $(0, 1)^{|C|}$.
\end{itemize}
\end{lemma}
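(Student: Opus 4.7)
The plan is to show that $\Lcal(G)$ decomposes as a sum of individually non-negative quantities, so that attaining the global minimum (zero, by Lemma~\ref{lemma:exist_opt_enc}) forces each summand to vanish, and each vanishing summand is equivalent to exactly one of the two claimed conditions. No machinery beyond basic properties of expectations of non-negative random variables and the differential-entropy bound on a bounded support is needed.

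First, I would observe that every alignment summand $\EE[\|g_k(\xb_k) - g_{k'}(\xb_{k'})\|_2]$ is the expectation of a non-negative random variable and is therefore non-negative. Next, since $g_k(\xb_k)$ takes values in the bounded hypercube $(0,1)^{|C|}$, which has unit Lebesgue measure, the standard maximum-entropy argument on a bounded support gives $H(g_k(\xb_k)) \leq H(\mathrm{Unif}((0,1)^{|C|})) = 0$, with equality iff $g_k(\xb_k)$ is uniformly distributed on $(0,1)^{|C|}$. Hence $-\sum_{k\in V} H(g_k(\xb_k)) \geq 0$ as well.

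Combining the two bounds yields $\Lcal(G) \geq 0$, matching the lower bound from Lemma~\ref{lemma:exist_opt_enc}. If $\Lcal(G) = 0$, then every non-negative term must be zero: for each pair $k < k' \in V$, $\EE[\|g_k(\xb_k) - g_{k'}(\xb_{k'})\|_2] = 0$, which by non-negativity of the integrand implies $g_k(\xb_k) = g_{k'}(\xb_{k'})$ almost surely, giving the invariance condition; and for each $k \in V$, $H(g_k(\xb_k)) = 0$, which by the equality case of the maximum-entropy inequality implies that $g_k(\xb_k)$ is uniformly distributed on $(0,1)^{|C|}$, giving the uniformity condition. Conversely, these two conditions together drive $\Lcal(G)$ to zero, so they are both necessary and sufficient.

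There is no genuine obstacle here; the only subtlety worth flagging is to justify the maximum-entropy bound (which requires the densities to be well-defined, and in particular that $g_k(\xb_k)$ admits a density on $(0,1)^{|C|}$). Since the lemma hypothesizes smooth encoders $g_k$ and $\xb_k$ has a smooth density (inherited from $p_\zb$ via the diffeomorphism $f_k$ under Assumption~\ref{assmp:general_assumption}), the pushforward $g_k(\xb_k)$ is a well-behaved random variable whose differential entropy is defined and finite, so the inequality applies as stated.
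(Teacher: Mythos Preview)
Your proposal is correct and follows essentially the same approach as the paper: decompose $\Lcal(G)$ into a sum of non-negative terms (alignment expectations and negative entropies), conclude each must vanish at the global minimum zero, and read off the invariance and uniformity conditions from the vanishing of the respective terms. Your write-up is in fact more careful than the paper's own proof, which simply states that $\Lcal(G)=0$ forces each summand to zero without explicitly invoking the maximum-entropy bound or the almost-sure equality argument; your flagged subtlety about the pushforward density is a welcome addition that the paper omits.
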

\begin{proof}
    Given that $G = \argmin \Lcal(G)$, we have by Lemma~C.1:
    \begin{equation}
    \label{eq:min_G_L+1}
        \Lcal 
        \left(G\right)=
        \sum_{k, k^\prime \in V}
        \EE
        \left[\norm{g_{k}(\xb_{k})-g_{k^\prime}(\xb_{k^\prime})}_2\right]- \sum_{k \in V} H\left(g_{k}(\xb_{k})\right) = 0
    \end{equation}
    
    The minimum $L(G) = 0$ leads to following conditions:
    \begin{align}
        \EE\left[\norm{g_{k}(\xb_{k})-g_{k^\prime}(\xb_{k^\prime})}_2\right] &= 0 \quad \forall k, k^\prime \in V, k < k^\prime \label{eq:set_knvariance}\\
        H\left(g_{k}(\xb_{k})\right) &= 0 \quad \forall k \in V \label{eq:unif_k}
    \end{align}
    where~\cref{eq:set_knvariance} indicates the invariance condition holds for all views $x_{k}$ and smooth encoders $g_{k} \in G$ almost surely; and~\cref{eq:unif_k} implies that the encoded information $g_{k}(\xb_{k})$ must be uniformly distributed on $(0, 1)^{|C|}$.
\end{proof}

\begin{lemma}[Content-Style Isolation from Set of Views]
\label{lemma:content_style_isolation_set_of_views}
Assume the same set of views $\xb_{V}$ as introduced in Lemma~\ref{lemma:exist_opt_enc}, then for any set of smooth encoders $G:=\{g_k: \Xcal_{k} \to (0, 1)^{|C|}\}_{k \in V}$ that satisfies the \textbf{Invariance} condition~\pcref{eq:cond1}, the learned representation can only be dependent on the content variables $\zb_C := \{\zb_j : j \in C\}$, not any style variables $\zb^{\mathrm{s}}_k:= \zb_{S_k {\setminus} C}$ for all $k \in V$.
\end{lemma}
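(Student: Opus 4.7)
The plan is to fix an arbitrary view $k \in V$ and show that the composed map $g_k \circ f_k: \Zcal_{S_k} \to (0, 1)^{|C|}$ is constant in every style coordinate $z_j$ with $j \in S_k \setminus C$, so that it factors through the projection onto the content coordinates $\zb_C$. This directly yields a smooth $h_k: \Zcal_C \to (0,1)^{|C|}$ with $g_k(\xb_k) = h_k(\zb_C)$, which is the claim.

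First, I would fix an arbitrary style index $j \in S_k \setminus C$. Since $C = \bigcap_{k' \in V} S_{k'}$ and $j \notin C$, there must exist at least one ``witness'' view $k'' \in V$ with $j \notin S_{k''}$. By the \textbf{Invariance} condition~\pcref{eq:cond1}, $g_k \circ f_k(\zb_{S_k}) = g_{k''} \circ f_{k''}(\zb_{S_{k''}})$ almost surely with respect to $p_\zb$, and the right-hand side does not depend on $z_j$ because $j \notin S_{k''}$. Hence the left-hand side is, almost surely, constant in $z_j$ as well.

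Second, I would promote this ``almost sure'' statement to a pointwise identity on all of $\Zcal$. By~\cref{assmp:general_assumption}(ii), $p_\zb > 0$ almost everywhere on the open, simply connected set $\Zcal$, so the equality already holds on a dense subset of $\Zcal$. Both sides are smooth (the $f_{k'}$ are diffeomorphisms by~\cref{assmp:general_assumption}(i) and the $g_{k'}$ are smooth by assumption), hence continuous, so the equality extends to every $\zb \in \Zcal$. Differentiating in $z_j$ then yields $\partial (g_k \circ f_k)/\partial z_j \equiv 0$. Repeating this for every $j \in S_k \setminus C$ shows that $g_k \circ f_k$ depends only on $\zb_C$, and iterating over $k \in V$ completes the argument.

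The main obstacle is the ``almost sure'' technicality: a priori the invariance could fail on a measure-zero set that still permits residual style dependence. The argument above circumvents this by coupling the smoothness of $g_{k'} \circ f_{k'}$ with the positivity of $p_\zb$ on an open, simply connected domain, which together rule out such pathologies and force the equality to hold pointwise. This mirrors the style-isolation step in~\citet{von2021self,daunhawer2023identifiability}; the only new ingredient here is the observation that under partial observability one merely needs the existence of a single witness view $k''$ per style index $j$, rather than a uniform augmentation pattern across all view pairs.
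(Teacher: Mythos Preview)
Your argument is correct and rests on the same three ingredients as the paper's: (i) for each style index $j \in S_k \setminus C$ there exists a witness view $k'' \in V$ with $j \notin S_{k''}$; (ii) full support of $p_\zb$ on an open domain; (iii) smoothness of the compositions $g_{k'} \circ f_{k'}$. The paper, however, organizes these into a proof by contradiction: it supposes $\partial (g_k \circ f_k)/\partial z_q \neq 0$ at some point, uses continuity of the partial derivative to obtain strict monotonicity in $z_q$ on an interval, introduces the auxiliary function $\psi(\zb) = \sum_{k<k'} |h_k(\zb_{S_k}) - h_{k'}(\zb_{S_{k'}})|$, and then performs a two-case analysis to exhibit a nonempty open set on which $\psi > 0$---contradicting the almost-sure invariance since that set has positive $p_\zb$-probability. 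Your direct route (upgrade the a.s.\ identity to a pointwise one via density plus continuity, then differentiate) reaches the same conclusion more economically, avoiding both the auxiliary $\psi$ and the case split. The one step worth making explicit is that passing from $\partial (g_k \circ f_k)/\partial z_j \equiv 0$ for all $j \in S_k \setminus C$ to ``$g_k \circ f_k$ factors through the projection onto $\zb_C$'' uses connectedness of the style fibers, which follows here from the product structure $\Zcal = \Zcal_1 \times \cdots \times \Zcal_N$ together with connectedness of $\Zcal$.
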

\begin{proof}
    Note that the learned representation can be rewritten as:
    \begin{equation}
        g_k(\xb_k) = g_k(f_k(\zb_{S_k})) \quad k \in V,
    \end{equation}
    we define
    \begin{equation}
        h_k := g_k \circ f_k \quad k \in V.
    \end{equation}
    Following the second step of the proof from~\citet[][Thm.~4.2]{von2021self}, we show by contradiction that both $h_{k}(\zb_{S_{k}})$ for all $k \in V$ can only depend on the shared content variables $\zb_C$.

    Let $k \in V$ be any view from the jointly observed set, suppose \emph{for a contradiction} that $h_k^{\mathrm{c}} := h_{k}(\zb_{S_k})_{1:|C|}$ depends on some component $z_q$ from the view-specific latent variables $\zb_{k}^\mathrm{s}$:
    \begin{equation}
        \label{eq:contradiction_assumption}
        \exists q \in \{1, \dots, \dimension(\zb_{k}^{\mathrm{s}})\}, \,\zb_{S_k} = (\zb_C^*, \zb_{k}^{\mathrm{s}*}) \in \Zcal_{k}, \quad s.t. \quad \dfrac{\partial h_{k}^{\mathrm{c}}}{\partial z_q}(\zb_C^*, \zb_{k}^{\mathrm{s}*}) \neq 0,
    \end{equation}
    which means that partial derivative of $h_{k}^{\mathrm{c}}$ w.r.t. some latent variable $z_q \in \zb_{k}^{\mathrm{s}}$ is non-zero at some point $\zb_{S_{k}} = (\zb_C^*, \zb_{k}^{\mathrm{s}*}) \in \Zcal_{k}$.
    Since $h_k^{\mathrm{c}}$ is smooth, its first-order (partial) derivatives are continuous.
    By continuity of the partial derivatives, $\partialfrac{z_q}{h_1^{\mathrm{c}}}$ must be non-zero in a neighborhood of $(\zb_C^*, \zb_{k}^{\mathrm{s}*})$, i.e.,
    \begin{equation}
        \label{eq:monotonicity}
        \exists \eta > 0 \quad s.t. \quad z_q \to h_k^{\mathrm{c}} (\zb_C^*, \zb_{{k}_{-q}}^{\mathrm{s}*}, z_q) \quad \text{is strictly monotonic on } (z_q - \eta, z_q + \eta),
    \end{equation}
    where $\zb_{{k}_{-q}}^{\mathrm{s}*}$ denotes the remaining view-specific style variables except $z_q$.
    
    Next, we define an auxiliary function for each pair of views $(k, k^\prime)$ with $k, k^\prime \in V, k < k^\prime$: $\psi_{k, k^\prime}: \Zcal_{C} \times \Zcal_{S_k \setminus C} \times \Zcal_{S_{k^\prime} \setminus C} \to \RR_{\geq 0}$
    \begin{equation}
    \label{eq:aux_func_kkprime}
    \begin{aligned}
        \psi_{k, k^\prime}(\zb_C, \zb_{k}^{\mathrm{s}}, \zb_{k^\prime}^{\mathrm{s}}) 
        :&= 
        \left\lvert h_{k}^{\mathrm{c}}\left(\zb_C, \zb_{k}^{\mathrm{s}} \right) - h_{k^\prime}\left(\zb_C, \zb_{k^\prime}^{\mathrm{s}} \right)\right\rvert\\
        &=  \left\lvert h_{k}^{\mathrm{c}}\left(\zb_{S_{k^\prime}}\right) - h_{k^\prime}^{\mathrm{c}}\left(\zb_{S_{k^\prime}}\right) \right \lvert \geq 0.
    \end{aligned}
    \end{equation}
    
    Summarizing the pairwise auxiliary functions, we have $\psi: \Zcal_{C} \times \prod_{k \in V} \Zcal_{S_k \setminus C} \to \RR_{\geq 0}$ as follows:
    \begin{equation}
    \label{eq:aux_func}
    \begin{aligned}
        \psi(\zb_C, \{\zb_{k}^{\mathrm{s}}\}_{k \in V}) 
        :&= 
        \sum_{\substack{k, k^\prime \in V\\ k < k^\prime}} \left\lvert h_{k}^{\mathrm{c}}\left(\zb_C, \zb_{k}^{\mathrm{s}} \right) - h_{k^\prime}\left(\zb_C, \zb_{k^\prime}^{\mathrm{s}} \right)\right\rvert\\
        &= \sum_{\substack{k, k^\prime \in V\\ k < k^\prime}}  \left\lvert h_{k}^{\mathrm{c}}\left(\zb_{S_{k^\prime}}\right) - h_{k^\prime}^{\mathrm{c}}\left(\zb_{S_{k^\prime}}\right) \right \lvert \geq 0
    \end{aligned}
    \end{equation}
    To obtain a contradiction to the invariance condition in Lemma~\ref{lemma:cond_opt_enc}, it remains to show that $\psi$ from~\cref{eq:aux_func} is \emph{strictly positive} with a probability greater than zero w.r.t.~the true generating process $p$; in other words, there has to exist at least one pair of views $(k, k^\prime)$ s.t. $\psi_{k, k^\prime} > 0$ with a probability greater than zero regarding $p$.

    Since $q \in S_k \setminus C$, there exists at least one view $k^\prime \neq k$ s.t. $q \notin S_{k^\prime}$ (otherwise the content block $C$ would contain $q$). We choose exactly such a pair of views $k, k^\prime$.
    
    Depending whether there is a zero point $z_q^0$  of $\psi$ within the region $(z_q - \eta, z_q + \eta)$, there are two cases to consider:
    \begin{itemize}
        \item If there is no zero-point $z_q^0 \in (z_q - \eta, z_q + \eta)$ s.t. $\psi_{k, k^\prime}\left(\zb_C^*, (\zb_{{k}_{-q}}^{\mathrm{s}*}, z_q^0), \zb_{k^\prime}^{\mathrm{s}*} \right) = 0 $, then it implies
        \begin{equation}
            \psi_{k, k^\prime}\left(\zb_C^*, (\zb_{{k}_{-q}}^{\mathrm{s}*}, z_q), \zb_{k^\prime}^{\mathrm{s}*}\right) > 0
        \quad 
        \forall z_q \in (z_q - \eta, z_q + \eta).
        \end{equation}
        So there is an open set $A := (z_q - \eta, z_q + \eta) \subseteq \Zcal_q$ such that the equation $\psi$ in~\cref{eq:aux_func} is strictly positive.
        \item Otherwise, there is a zero point $z_q^0$ from the interval $(z_q - \eta, z_q + \eta)$ with
            \begin{equation}
        \psi_{k, k^\prime} \left(\zb_C^*, (\zb_{{k}_{-q}}^{\mathrm{s}*}, z_q^0), \zb_{k^\prime}^{\mathrm{s}*} \right) = 0 \qquad z_q^0 \in (z_q - \eta, z_q + \eta),
    \end{equation}
    then strict monotonicity from~\cref{eq:monotonicity} implies that $\psi_{k, k^\prime} > 0$ for all $z_q$ in the neighborhood of $z_q^0$, therefore:
    \begin{equation}
        \label{eq:positive_set}
        \psi(\zb_C, \{\zb_{k}^{\mathrm{s}}\}_{k \in V}) > 0
        \quad 
        \forall z_q \in A := (z_q - \eta, z_q^0) \cup (z_q^0, z_q + \eta).
    \end{equation}
    \end{itemize}
    Since $\psi$ is a sum of compositions of two smooth functions (absolute different of two smooth functions), $\psi$ is also smooth.
    Consider the open set $\RR_{> 0}$ and note that, under a continuous function, pre-images of open sets are \emph{always open}.
    For the continuous function $\psi$, its pre-image $\Ucal$ corresponds to an \emph{open set}:
    \begin{equation}
        \Ucal \subseteq \Zcal_{C} \times \prod_{k \in V}\Zcal_{S_k \setminus C}
    \end{equation}
    in the domain of $\psi$ on which $\psi$ is strictly positive.
    Moreover, since~\cref{eq:positive_set} indicated that for all $z_q \in A$, the function $\psi$ is strictly positive, which means:
    \begin{equation}
        \{\zb_C^*\} 
        \times 
        \prod_{k: q \in S_k \setminus C} \left(\{\zb_{{k}_{-q}}^{\mathrm{s}*}\} \times A \right) 
        \times 
        \prod_{k: q \notin S_k} \{\zb_{k}^{\mathrm{s}*}\}
        \subseteq
        \Ucal,
    \end{equation}
    hence, $\Ucal$ is \emph{non-empty}.

    Given by~\cref{assmp:general_assumption} (ii) that $p_{\zb}$ is smooth and fully supported ($p_{\zb} > 0$ almost everywhere), the non-empty set $\Ucal$ is also fully supported by $p_{\zb}$, which indicates:
    \begin{equation}
        \PP \left(\psi(\zb_C, \{\zb_{k}^{\mathrm{s}}\}_{k \in V})  > 0 \right) \geq \PP \left(\Ucal \right) > 0,
    \end{equation}
    where $\PP$ denotes the probability w.r.t. the true generative process $p$.

    According to Lemma~\ref{lemma:cond_opt_enc}, the invariance condition and uniformity conditions has to be fulfilled. To this end, we have shown that the assumption~\cref{eq:contradiction_assumption} leads to an contradiction to the invariance condition~\cref{eq:cond1}. 
    Hence, assumption~\cref{eq:contradiction_assumption} cannot hold, i.e., $h_k^{\mathrm{c}}$ does not depend on any view-specific style variable $z_q$ from $\zb_{k}^{\mathrm{s}}$. It is only a function of the shared content variables $\zb_C$, that is, $\hat{\zb}_k^{\mathrm{c}} = h_k^{\mathrm{c}}(\zb_C)$.
\end{proof}

We list \citet[][Proposition 5.]{zimmermann2021contrastive} for future use in our proof:
\begin{proposition}[Proposition 5 of~\citet{zimmermann2021contrastive}.]
    Let $\Mcal, \Ncal$ be simply connected and oriented $\Ccal^1$ manifolds without boundaries and $h: \Mcal \to \Ncal$ be a differentiable map. Further, let the random variable $\zb \in \Mcal$ be distributed according to $\zb \sim p(\zb)$ for a regular density function $p$, i.e., $0 < p < \infty$. If the push-forward $p_{\#h}(\zb)$ through $h$ is also a regular density, i.e., $0 < p_{\#h} < \infty$, then $h$ is a bijection.
    \label{prop:zimmermann_invertibility_uniformality}
\end{proposition}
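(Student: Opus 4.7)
The strategy is to use the regularity of both densities to force $h$ to be a local diffeomorphism, and then use the topological hypotheses (both manifolds simply connected, no boundaries) to promote local bijectivity to a global one. A preliminary observation is that $\dim \Mcal = \dim \Ncal$: otherwise the pushforward of a regular density cannot itself be a regular density on the target (it would either concentrate on a lower-dimensional subset and thus fail to admit a density, or it would fail to be locally bounded). Henceforth let $d$ denote this common dimension.

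\textbf{Step 1: $h$ is a local diffeomorphism.} By the change-of-variables formula, for any Borel set $A \subseteq \Ncal$,
\begin{equation*}
    \int_A p_{\#h}(\mathbf{y})\, d\mathbf{y} \,=\, \int_{h^{-1}(A)} p(\zb)\, d\zb\,.
\end{equation*}
If the Jacobian $Dh$ were singular on a set $E \subseteq \Mcal$ of positive $p$-measure, then Sard's theorem would make $h(E)$ Lebesgue-null in $\Ncal$ while still carrying positive pushforward mass (since $p > 0$); this contradicts the existence of a finite everywhere-positive density $p_{\#h}$ representing the pushforward. Hence $Dh$ is nonsingular almost everywhere, and the fact that $p_{\#h}$ is \emph{pointwise} (not merely a.e.) finite and positive extends this to nonsingularity everywhere. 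The inverse function theorem then makes $h$ a local $\Ccal^1$-diffeomorphism, hence in particular an open map.

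\textbf{Step 2: Surjectivity and injectivity via covering-space arguments.} Since $h$ is open, $h(\Mcal)$ is open in $\Ncal$. If $h(\Mcal) \neq \Ncal$, its complement contains a nonempty open set on which the pushforward density must vanish, contradicting $p_{\#h} > 0$; therefore $h$ is surjective. Moreover, writing $p_{\#h}(\mathbf{y}) = \sum_{\zb \in h^{-1}(\mathbf{y})} p(\zb)/\lvert \det Dh(\zb) \rvert$, the finiteness of $p_{\#h}$ forces each fiber to be finite; local constancy of fiber cardinality (automatic for a local diffeomorphism) together with connectedness of $\Mcal$ (implied by simple connectedness) then makes $h$ a finite-sheeted covering map. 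Since $\Ncal$ is simply connected, every connected cover of $\Ncal$ is the trivial one-sheeted cover, so the fiber size equals one everywhere, i.e., $h$ is injective; combined with surjectivity, $h$ is a bijection.

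\textbf{Main obstacle.} The delicate step is Step 2: a local diffeomorphism is not automatically a covering map (it may fail to be proper, or fiber cardinality may drop at accumulation points at the ``boundary''). The regularity assumptions must supply the missing rigidity---specifically, the fact that $p_{\#h}$ is \emph{everywhere} (not just a.e.) finite and positive is what simultaneously bounds fiber size pointwise, prohibits the image from being a proper subset, and allows the classical covering-space dichotomy to kick in. I would also need to take care in Step 1 with the passage from ``$Dh$ nonsingular a.e.'' to ``$Dh$ nonsingular everywhere,'' which again rests on the everywhere-regularity of both densities rather than on any a.e.\ relaxation.
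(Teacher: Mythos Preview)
The paper does not prove this proposition: it is quoted verbatim from \citet{zimmermann2021contrastive} and used as a black box in the proofs of \cref{thm:ID_from_sets} and \cref{lemma:general_ID_from_sets}. There is therefore no in-paper argument to compare your proposal against.

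On the substance of your attempt: Step~1 is the right idea, and you correctly flag the ``a.e.\ nonsingular $\Rightarrow$ everywhere nonsingular'' passage as needing care. Step~2, however, has a genuine gap. You assert that ``local constancy of fiber cardinality [is] automatic for a local diffeomorphism,'' but this is false---it is precisely the property that distinguishes covering maps from general local diffeomorphisms, and it does not follow from finite fibers alone without an extra hypothesis such as properness. Your ``Main obstacle'' paragraph correctly names this as the crux, but nothing in the preceding argument actually closes it: pointwise regularity $0<p,\,p_{\#h}<\infty$ gives no uniform control on the Jacobian or on fiber geometry. There is a related soft spot in the surjectivity step: the complement of the open set $h(\Mcal)$ is closed but need not contain a nonempty open set, so your density argument shows only that $\Ncal\setminus h(\Mcal)$ is Lebesgue-null, not empty. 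Concretely, take $\Mcal$ to be the open unit disk with a closed radial segment removed, $\Ncal$ the open disk, $h$ the inclusion, and both densities uniform; all stated hypotheses hold yet $h$ is not surjective. So either additional assumptions from the original source are being silently imported, or the statement as reproduced here does not by itself force bijectivity, and your proof inherits that difficulty rather than resolving it.
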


\idset*
\begin{proof}
    Lemma~\ref{lemma:exist_opt_enc} verifies the existence of such a set of smooth encoders that obtains the global minimum of~\cref{eq:main_loss_set} zero; Lemma~\ref{lemma:cond_opt_enc} derives the invariance conditions and the uniformity that the learned representations $g_{k}(\xb_k)$ have to satisfy for all views $k \in V$. Based on the invariance condition~\cref{eq:cond1}, Lemma~\ref{lemma:content_style_isolation_set_of_views} shows that the learned representation $g_{k}(\xb_k), \, k \in V$  can only depend on the content block, not on any style variables, namely $g_k(\xb_k) = h_k(\zb_C)$ for some smooth function $h_k: \Zcal_C \to (0, 1)^{|C|}$.
    
    We now apply~\citet[][Proposition 5.]{zimmermann2021contrastive} to show that all of the functions $h_k, k \in V$ are bijections. Note that both $\Zcal_{C}$ and $(0, 1)^{|C|}$ are simply connected and oriented $\Ccal^1$ manifolds, and $h_k$ are smooth, thus differentiable, functions that map the intersection set of random variables $\zb_C$ from $\Ccal$ to $(0, 1)^{|C|}$. Given by~\cref{assmp:general_assumption}(ii) that $p_{\zb_C}$ and the push-forward function through $h_k$ (uniform distributions) are regular densities, we conclude that all $h_k$ are diffeomorphisms for all $k \in V$.

    Thus we have shown that any content set of encoders $G$ that minimizes $\Lcal(G)$~\pcref{eq:main_loss_set} can extract the ground-truth content variables $\zb_C$ from view $\xb_k \in \Xcal_k$ up to a bijection $h_k: \Zcal_{C} \to (0, 1)^{|C|}$:
    \begin{equation}
        g_k(\xb_k) = h_k(\zb_C),
    \end{equation}
    That is, shared content $\zb_C$ is block-identified by the content encoders $G = \{g_k\}_{k \in V}$.
\end{proof}

\looseness=-1\textbf{Remark on the proof technique for~\cref{thm:ID_from_sets}.}  \,
For~\cref{thm:ID_from_sets}, one could imagine an alternate proof by induction over the number of views, where the proofs by~\citet{von2021self,daunhawer2023identifiability} would be the base case. We opted for a direct proof technique as the induction proof may have been perhaps more intuitive at a high level but was significantly longer. Additionally, we present the current version because it would be generally more accessible as a more familiar proof technique.
\subsection{Proof for~\cref{thm:general_ID_from_sets_size_unknown}}
Our proof consists of the following steps:
\begin{enumerate}
    \item We show in~\cref{lemma:exist_opt_sel_enc} the loss~\cref{eq:loss-selector-encoder} is lower bounded by zero and construct optimal $R^*$~\pcref{defn:view_specific_encoders},
    $\Phi^*$~\pcref{defn:content_selectors},
    $T^*$~\pcref{defn:aux_transformations} that reach this lower bound;
    \item Next, we show in~\cref{lemma:general_ID_from_sets} that, if the content sizes $|C_i|$ are known for all $V_i \in \Vcal$, then any view-specific encoders, content selectors, and projections $(R, \Phi, T)$ that minimize the loss~\cref{eq:loss-selector-encoder}, block-identify the content variables $\zb_{C_i} $ for any $V_i \in \Vcal$, using similar steps as in the proof for~\cref{thm:ID_from_sets}.
    \item As the third step, we show that any minimizer $R$~\pcref{defn:view_specific_encoders},
    $\Phi$~\pcref{defn:content_selectors},
    $T$~\pcref{defn:aux_transformations} of~\cref{eq:loss-selector-encoder} also minimizes the information-sharing regularizer~\pcref{defn:info_sharing_reg}; and show that the optimal solution $(R^*, \Phi^*, T^*)$ we constructed in the first step reaches this lower bound of~\cref{defn:info_sharing_reg}.
    \item Then, we show \emph{by contradiction} that any optimal content selector $\Phi^*$ that solves the constrained optimization problem in~\cref{eq:main_loss_set_size_unknown} recovers the correct content size $|C_i|$ for each subset $V_i$, using the invariance condition in~\cref{lemma:cond_opt_selector_enc}.
    \item Lastly, we apply the results from~\cref{lemma:general_ID_from_sets} and conclude our proof for~\cref{thm:general_ID_from_sets_size_unknown}.
\end{enumerate}
We rephrase each step as a separate lemma and use them to complete the final proof for~\cref{thm:general_ID_from_sets_size_unknown}.

\begin{lemma}[Existence of Encoders, Selectors and Projections]
\label{lemma:exist_opt_sel_enc}
    Consider a jointly observed set of views $\xb_{V}$ satisfying~\cref{assmp:general_assumption}. For any set of view-specific encoders $R$~\pcref{defn:view_specific_encoders}, content selectors $R_{\Phi}$~\pcref{defn:content_selectors} and projections $T$~\pcref{defn:aux_transformations}, we define the following objective:
    \begin{equation}
        \label{eq:loss-selector-encoder}
        \begin{aligned}
         \Lcal
        \left(R, \Phi, T \right) 
        = 
        \sum_{V_i \in \Vcal} \sum_{\substack{k, k^\prime \in V_i \\ k < k^\prime}} 
        \EE
        \left[\norm{\phi^{(i, k)} \oslash r_{k}(\xb_{k}) - \phi^{(i, k^\prime)}\oslash r_{k^\prime}(\xb_{k^\prime})}_2\right]
        - 
         \sum_{k \in V}H\left(t_k \circ r_{k}(\xb_k)\right).
        \end{aligned}
  \end{equation}
  which is lower bounded by zero; and there exists such combination of $R, \Phi, T$ that obtains this global minimum zero.
\end{lemma}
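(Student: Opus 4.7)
The plan is to argue each term in $\Lcal(R,\Phi,T)$ is separately nonnegative, and then to explicitly construct a triple $(R^*,\Phi^*,T^*)$ that drives both terms to zero simultaneously. For the lower bound: the alignment term is a sum of expected $\ell_2$-norms and is therefore $\geq 0$ on any input. For the entropy term, note that $t_k\circ r_k(\xb_k)$ takes values in the hyper-cube $(0,1)^{|S_k|}$ by~\cref{defn:aux_transformations}, and among distributions supported on $(0,1)^{|S_k|}$ the differential entropy is maximized by the uniform distribution, whose entropy equals $0$. Hence $-H(t_k\circ r_k(\xb_k))\geq 0$ for every $k\in V$, and the total loss satisfies $\Lcal(R,\Phi,T)\geq 0$.

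\textbf{Construction of an optimum.} I would take $r_k^*:=f_k^{-1}:\Xcal_k\to\Zcal_{S_k}$, which exists and is smooth by~\cref{assmp:general_assumption}(i). Then $r_k^*(\xb_k)=\zb_{S_k}$. For each $V_i\in\Vcal$ and $k\in V_i$, define $\phi^{(i,k)*}\in\{0,1\}^{|S_k|}$ to be the binary indicator vector that places a $1$ in each coordinate of $\zb_{S_k}$ corresponding to an index in $C_i=\bigcap_{k'\in V_i}S_{k'}\subseteq S_k$ and $0$ elsewhere. By construction, $\|\phi^{(i,k)*}\|_0=|C_i|=\|\phi^{(i,k')*}\|_0$ for all $k,k'\in V_i$, as required by~\cref{defn:content_selectors}, and
\begin{equation*}
\phi^{(i,k)*}\oslash r_k^*(\xb_k)=\zb_{C_i}\qquad\forall k\in V_i,
\end{equation*}
so that every summand in the alignment term vanishes. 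Finally, to define $t_k^*:\Zcal_{S_k}\to(0,1)^{|S_k|}$, apply the Darmois construction coordinate-wise to the marginal density $p_{\zb_{S_k}}$ (which is smooth and strictly positive as a marginal of the full joint under~\cref{assmp:general_assumption}(ii)), producing a smooth map whose push-forward is uniform on $(0,1)^{|S_k|}$. Then $H(t_k^*\circ r_k^*(\xb_k))=0$ for each $k\in V$, and $\Lcal(R^*,\Phi^*,T^*)=0$, attaining the lower bound.

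\textbf{Expected obstacle.} The routine parts are the nonnegativity argument and the recycling of the Darmois construction already used in~\cref{lemma:exist_opt_enc}. The mildly delicate step is making sure the content selectors are well-defined across the different views of each $V_i\in\Vcal$: one has to fix, once and for all, a coordinate ordering on $\Zcal_{S_k}$ so that ``the component of $r_k^*(\xb_k)$ indexed by $j\in C_i$'' is unambiguous and the binary masks $\phi^{(i,k)*}$ pick out the \emph{same} latent $\zb_j$ across all $k\in V_i$. Once this bookkeeping is settled, the equality $\phi^{(i,k)*}\oslash r_k^*(\xb_k)=\zb_{C_i}$ is immediate and the rest of the argument collapses. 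I would flag explicitly that smoothness of the marginal $p_{\zb_{S_k}}$ (needed for Darmois to be a smooth map) follows from smoothness and strict positivity of $p_\zb$ on the simply connected open set $\Zcal$, so no further assumption is required.
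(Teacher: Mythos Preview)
Your proposal is correct and follows essentially the same approach as the paper: take $r_k^*=f_k^{-1}$, define the selectors as indicator masks picking out $C_i\subseteq S_k$, and use the Darmois construction on $p_{\zb_{S_k}}$ for the projections $t_k^*$, after arguing that the alignment term is $\geq 0$ and the entropy of any distribution on $(0,1)^{|S_k|}$ is $\leq 0$. Your added remarks on coordinate bookkeeping and on smoothness of the marginal $p_{\zb_{S_k}}$ are helpful clarifications but do not change the argument.
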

\begin{proof}
    Consider the objective function $\Lcal(R, \Phi, T)$~\pcref{eq:loss-selector-encoder}, the global minimum of $\Lcal(R, \Phi, T)$ is obtained when the first term (alignment) is minimized and the second term (entropy) is maximized. 
    The alignment term is minimized to zero when selected representations $\phi^{(i, k)} \oslash r_{k}$ are perfectly aligned for all $k \in V$ almost surely.
    The second term (entropy) is maximized to zero \emph{only} when $t_k \circ r_{k}(\xb_k)$ is uniformly distributed on $(0, 1)^{|S_k|}$ for all view $k \in V$. Thus we have shown that the loss~\pcref{eq:loss-selector-encoder} is lower-bounded by zero.
        
    The optimal view-specific encoders can be defined via the inverse of the view-specific mixing functions $\{f_k\}_{k \in V}$, which by~\cref{assmp:general_assumption}(i) are smooth and invertible.  By definition, we have ${f^{-1}_{k}}(\xb_{k}) = \zb_{S_k}$ for all $k \in V$. Formally, we define the set of optimal view-specific encoders 
    \begin{equation}
    \label{eq:opt_view_specific_encoder}
        R := \{{f^{-1}_{k}}\}_{k \in V}.
    \end{equation}

    Next, we define the optimal auxiliary transformation $t_k$ for each view $k$ using \emph{Darmois construction}, writing $t_k \circ r_k(\xb_k) = t_k \circ f^{-1}_k(\xb_k) = t_k^{j}\left({\zb}_{S_k}\right)$, we have:
    \begin{equation}
    \label{eq:opt_aux_transformations}
        t_k^{j}\left({\zb}_{S_k}\right) := F^k_j\left(
        [{\zb}_{S_k}]_j | [{\zb}_{S_k}]_{1:j-1}\right) = \mathbb{P}\left([Z_{S_k}]_j \leq [{\zb}_{S_k}]_j | [{\zb}_{S_k}]_{1:j-1}\right) \quad j \in \{1, \dots, |S_k|\},
    \end{equation}
    where $F^k_j$ denotes the conditional cumulative distribution function (CDF) of $[\zb_{S_k}]_j$ given $[\zb_{S_k}]_{1:j-1}$. Thus, $t_k \left({\zb}_{S_k}\right)$ is uniformly distributed on $(0, 1)^{|S_k|}$ and $t_k$ is smooth by~\cref{assmp:general_assumption}(ii) which states that $p_\zb$ is a smooth density.

    As for the optimal content selectors $\Phi = \{\phi^{(i, k)}\}_{V_i \in \Vcal, k \in V_i}$,
    choose $\phi^{(i, k)}$ 
    such that
    \begin{equation}
        \label{eq:opt_selectors}
        \phi^{(i, k)} \oslash \hat{\zb}_{S_k} := \hat \zb_{C_i}
    \end{equation}
    Writing ${f^{-1}_{k}}({\xb_k}) = \zb_{S_k}$, the loss $\Lcal(R, \Phi, T)$ from~\cref{eq:loss-selector-encoder} takes the value:
    \begin{equation}
    \begin{aligned}
        \Lcal
        \left(R, \Phi, T\right) 
        =&
        \sum_{V_i \in \Vcal} \sum_{\substack{k, k^\prime \in V_i \\ k < k^\prime}} 
        \EE
        \left[\norm{\phi^{(i, k)}\oslash r_k(\xb_k) - \phi^{(i, k^{\prime})} \oslash r_{k^\prime}(\xb_{k^\prime})}_2\right]
        -
         \sum_{k \in V} H\left(t_k \circ r_{k}(\xb_k)\right)\\
        =&
        \sum_{V_i \in \Vcal} \sum_{\substack{k, k^\prime \in V_i \\ k < k^\prime}} 
        \EE
        \left[\norm{\phi^{(i, k)} \oslash f_k^{-1}(\xb_k) - \phi^{(i, k^{\prime})} \oslash f_{k^\prime}^{-1}(\xb_{k^\prime})}_2\right]
        -
         \sum_{k \in V} H\left(t_k \circ f_k^{-1}(\xb_k)\right)\\
         =&
        \sum_{V_i \in \Vcal} \sum_{\substack{k, k^\prime \in V_i \\ k < k^\prime}} 
        \EE
        \left[\norm{\phi^{(i, k)} \oslash\zb_{S_k} - \phi^{(i, k^{\prime})} \oslash \zb_{S_{k^\prime}}}_2\right]
        -
         \sum_{k \in V} H\left(t_k(\zb_{S_k})\right)\\
         =&
        \sum_{V_i \in \Vcal} \sum_{\substack{k, k^\prime \in V_i \\ k < k^\prime}} 
        \EE
        \left[\norm{\zb_{C_i} - \zb_{C_i}}_2\right]
        -
         \sum_{k \in V} H\left(t_k \left(\zb_{S_k}\right)\right) \\
         =& 0
    \end{aligned}
    \end{equation}
    Note that the first term is minimized to zero because the shared content values $\zb_{C_i}$ align among the views in one subset $V_i \in \Vcal$; the second term is maximized to zero because $t_k \left(\zb_{S_k}\right)$ is uniformly distributed on $(0, 1)^{|S_k|}$ given by the property of \emph{Darmois construction}~\citep{darmois1951analyse}.
To this end, we have shown that there exists such optimum $R, \Phi, T$ as defined in~\cref{eq:opt_view_specific_encoder,eq:opt_selectors,eq:opt_aux_transformations} that minimizes the objective in~\cref{eq:loss-selector-encoder}.
\end{proof}

\begin{lemma}[Conditions of Optimal Encoders, Selectors and projections]
\label{lemma:cond_opt_selector_enc}
    Given the same set of views $\xb_{V}$ as introduced in~\cref{lemma:exist_opt_sel_enc}, to minimize $\Lcal(R, \Phi, T)$ in~\cref{eq:loss-selector-encoder}, any optimum $R, \Phi, T$~\pcref{defn:view_specific_encoders,defn:content_selectors,defn:aux_transformations} has to satisfy
    similar \textbf{invariance} and \textbf{uniformity} conditions from~\cref{lemma:cond_opt_enc}:
    \begin{itemize}
    \item \textbf{Invariance}: All \textbf{selected} representations $\phi^{(i, k)} \oslash r_k(\xb_k), k \in V$ must align across the views from the set $V_i \in \Vcal$ almost surely:
    \begin{equation}
        \phi^{(i, k)} \oslash r_k(\xb_k)= \phi^{(i, k^\prime)} \oslash r_{k^\prime}(\xb_{k^\prime}) \quad  \forall V_i \in \Vcal \, \forall k, k^\prime \in V_i \quad a.s.
    \end{equation}
    \item \textbf{Uniformity}: All extracted representations $t_k \circ r_k(\xb_k), k \in V$ must be uniformly distributed over the hyper unit-cube $(0, 1)^{|S_k|}$.
\end{itemize}
\end{lemma}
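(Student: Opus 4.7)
The plan is to adapt the argument used in \cref{lemma:cond_opt_enc} to this richer setting: invoke \cref{lemma:exist_opt_sel_enc} to conclude that the global minimum of $\Lcal(R, \Phi, T)$ equals zero, then decompose the loss into pieces that are individually non-negative and deduce that each must vanish at any minimizer.

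First I would observe that the alignment component of \cref{eq:loss-selector-encoder} is a sum (over $V_i \in \Vcal$ and over pairs $k < k'$ in $V_i$) of expectations of $L_2$ norms, so each summand is pointwise non-negative and hence the whole alignment sum is $\geq 0$. For the entropy component, the central fact is that among distributions with support contained in the hypercube $(0,1)^{|S_k|}$, the differential entropy is maximized by the uniform distribution, which achieves entropy exactly zero (a standard consequence of Gibbs' inequality applied with the uniform reference measure on the unit cube). Consequently, $-H(t_k \circ r_k(\xb_k)) \geq 0$ for every $k \in V$, with equality if and only if $t_k \circ r_k(\xb_k)$ is uniformly distributed on $(0,1)^{|S_k|}$. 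This step requires that the projections $t_k$ from \cref{defn:aux_transformations} indeed map into $(0,1)^{|S_k|}$, which is built into their definition.

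Having established non-negativity of both the alignment sum and each individual negative-entropy term, the identity $\Lcal(R, \Phi, T) = 0$ at any optimum forces each of these terms to be zero separately. On the alignment side, each expectation $\EE\!\left[\,\big\|\phi^{(i,k)} \oslash r_{k}(\xb_{k}) - \phi^{(i,k')} \oslash r_{k'}(\xb_{k'})\big\|_2\,\right]$ must equal zero; since the integrand is non-negative, this forces $\phi^{(i,k)} \oslash r_{k}(\xb_{k}) = \phi^{(i,k')} \oslash r_{k'}(\xb_{k'})$ almost surely with respect to $p(\xb_V)$, which is exactly the \textbf{invariance} condition stated in the lemma. On the entropy side, zero differential entropy on the unit hypercube characterizes the uniform distribution, yielding the \textbf{uniformity} condition.

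I do not anticipate a serious obstacle: the structure of the proof is the same as in \cref{lemma:cond_opt_enc}, with the content encoders $g_k$ replaced by the selected view-specific encoders $\phi^{(i,k)} \oslash r_k$ in the alignment term, and by the projected encoders $t_k \circ r_k$ in the entropy term. The only delicate point is making the entropy-maximization step explicit, i.e., verifying once and for all that on the bounded domain $(0,1)^{|S_k|}$ the uniform distribution uniquely maximizes differential entropy at the value zero; a one-line Gibbs-inequality argument suffices and is classical. Once that is pinned down, both conditions follow directly from the non-negativity decomposition.
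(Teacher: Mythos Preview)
Your proposal is correct and follows essentially the same approach as the paper's proof: both argue that since the global minimum of $\Lcal(R,\Phi,T)$ is zero (by \cref{lemma:exist_opt_sel_enc}) and the loss decomposes into a non-negative alignment sum and non-negative $-H$ terms, each piece must vanish at any optimum, yielding the invariance and uniformity conditions. Your write-up is in fact more explicit than the paper's, which simply asserts that both terms must be zero and that the entropy maximum characterizes uniformity; your added justification via Gibbs' inequality on the unit hypercube is a welcome clarification but not a different route.
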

\begin{proof}
    The minimum of $\Lcal(R, \Phi, T) = 0$ can only be obtained when both terms are zero. For the first term (alignment) to be zero, it is necessary that $ \phi^{(i, k)} \oslash r_k(\xb_k)= \phi^{(i, k^\prime)} \oslash r_{k^\prime}(\xb_{k^\prime})$ almost surely for all $V_i \in \Vcal\, , k, k^\prime \in V_i$ w.r.t.~the true generating process.
    The second term (entropy) is upper-bounded by zero; this maximum can only be obtained when the auxiliary encoding $t_k \circ r_k(\xb_k), k \in V$ follows \textbf{\emph{uniformity}}, as also indicated by Lemma~\ref{lemma:cond_opt_enc}.
\end{proof}

\begin{lemma}[View-Specific Encoder for Identifiability Given Content Sizes] 
\label{lemma:general_ID_from_sets}
Consider a jointly observed set of views $\xb_{V}$ satisfying~\cref{assmp:general_assumption} and assume that the dimensionality of the \textbf{subset-specific} content $|C_i|$ is given for all subset $V_i \in \Vcal$. We consider a special type of content selectors $\Phi$ with $\norm{\phi^{(i, k)}}_0 = |C_i|\,$ for all $k \in V_i$.
Let $R, T$ respectively denote some view-specific encoders~\pcref{defn:view_specific_encoders}, and projections~\pcref{defn:aux_transformations}, which
jointly minimize the following objective together with the special content selectors $\Phi$:
\begin{equation}
\label{eq:loss_set_size_known}
    \begin{aligned}
     \Lcal
    \left(R, \Phi, T\right) 
    = 
   \sum_{V_i \in \Vcal} \sum_{\substack{k, k^\prime \in V_i \\ k < k^\prime}} 
    \EE
    \left[\norm{\phi^{(i, k)} \oslash r_{k}(\xb_{k}) - \phi^{(i, k^\prime)}\oslash r_{k^\prime}(\xb_{k^\prime})}_2\right] 
    - 
     \sum_{k \in V}H\left(t_k \circ r_{k}(\xb_k)\right).
    \end{aligned}
  \end{equation}

 Then for any view $k \in V$, any subset of views $V_i \in \Vcal$ with $k \in V_i$, the composed function $\phi^{(i, k)} \oslash r_k$ block-identifies the shared \textbf{content} variables $\zb_{C_i}$ in the sense that
the learned representation $\hat{\zb}_k^{(i)} := \phi^{(i, k)} \oslash r_{k}(\xb_k)$ is related to the ground truth content variables through some smooth invertible mapping $h_{k}:\Zcal_{C_{i}}\to\Zcal_{C_i}$ with $\hat{\zb}_k^{(i)} = h_k^{(i)} \left(\zb_{C_i}\right)$.
\end{lemma}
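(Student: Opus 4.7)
The plan is to adapt the argument from~\cref{lemma:content_style_isolation_set_of_views} (used in the proof of~\cref{thm:ID_from_sets}) to the more general setting where a single view-specific encoder $r_k$, together with multiple content selectors $\phi^{(i,k)}$ and one projection $t_k$, plays the role of the separate content encoders $g_k$ used in~\cref{thm:ID_from_sets}. First, I would invoke~\cref{lemma:cond_opt_selector_enc} to extract the two optimality conditions satisfied by any minimizer $(R, \Phi, T)$ of~\cref{eq:loss_set_size_known}: \emph{invariance}, namely $\phi^{(i,k)} \oslash r_k(\xb_k) = \phi^{(i,k')} \oslash r_{k'}(\xb_{k'})$ almost surely for every $V_i \in \Vcal$ and $k, k' \in V_i$, together with \emph{uniformity} of $t_k \circ r_k(\xb_k)$ on $(0,1)^{|S_k|}$ for each $k \in V$.

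Next, for each subset $V_i$ and each view $k \in V_i$, I would argue style-independence of the selected representation. Suppose for contradiction that the coordinates of $r_k \circ f_k(\zb_{S_k})$ picked out by $\phi^{(i,k)}$ depended on some index $q \in S_k \setminus C_i$. Since $C_i = \bigcap_{k \in V_i} S_k$, at least one other view $k' \in V_i$ must satisfy $q \notin S_{k'}$. Reusing the auxiliary functional $\psi_{k,k'}$ from the proof of~\cref{lemma:content_style_isolation_set_of_views}---continuity of partial derivatives, strict monotonicity on a neighborhood of the contradiction point, and full support of $p_\zb$ from~\cref{assmp:general_assumption}(ii)---yields an open set of positive probability on which the invariance between $k$ and $k'$ within $V_i$ fails, a contradiction. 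Hence there exists a smooth $h_k^{(i)}: \Zcal_{C_i} \to \RR^{|C_i|}$ with $\phi^{(i,k)} \oslash r_k(\xb_k) = h_k^{(i)}(\zb_{C_i})$.

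The remaining step is to establish invertibility of $h_k^{(i)}$. Uniformity combined with~\cref{prop:zimmermann_invertibility_uniformality} applied to $t_k \circ r_k$ gives that $t_k \circ r_k \circ f_k: \Zcal_{S_k} \to (0,1)^{|S_k|}$ is a bijection, so $r_k \circ f_k$ is injective with non-singular Jacobian almost everywhere. After reordering coordinates so that those selected by $\phi^{(i,k)}$ come first, style-independence forces the off-diagonal block $\partial h_k^{(i)}/\partial \zb_{S_k \setminus C_i}$ to vanish; non-singularity of the full Jacobian then forces the diagonal block $\partial h_k^{(i)}/\partial \zb_{C_i}$ to be non-singular. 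The change-of-variables formula then shows the push-forward of $\zb_{C_i}$ through $h_k^{(i)}$ is a regular density on $h_k^{(i)}(\Zcal_{C_i})$, and a second application of~\cref{prop:zimmermann_invertibility_uniformality} yields that $h_k^{(i)}$ is a bijection, hence a smooth invertible mapping onto its image, which is precisely block-identifiability per~\cref{defn:block-identifiability}.

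The main obstacle is this last invertibility step. Unlike in~\cref{thm:ID_from_sets}, where uniformity of $g_k$ directly delivered a uniform push-forward on a hyper-cube of the right dimension, uniformity here only constrains the joint representation $t_k \circ r_k$ and not the marginal on the coordinates selected by $\phi^{(i,k)}$. The block-Jacobian reduction bridges this gap, but since $t_k$ need not itself be a diffeomorphism a priori---only its composition with $r_k$ is regularized---the argument about $h_k^{(i)}$ must proceed indirectly through $r_k \circ f_k$, and one has to explicitly justify regularity (positivity and finiteness) of the induced marginal density before invoking~\cref{prop:zimmermann_invertibility_uniformality}.
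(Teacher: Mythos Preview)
Your proposal is correct and follows the same three-step skeleton as the paper: extract invariance and uniformity from \cref{lemma:cond_opt_selector_enc}, reuse the style-isolation argument of \cref{lemma:content_style_isolation_set_of_views} to obtain $\phi^{(i,k)} \oslash r_k(\xb_k) = h_k^{(i)}(\zb_{C_i})$, and then certify invertibility through \cref{prop:zimmermann_invertibility_uniformality}. (The paper additionally cites \cref{lemma:exist_opt_sel_enc} to confirm the minimum is attained, which you implicitly assume.)

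The only substantive difference is how the last step is executed. The paper applies \cref{prop:zimmermann_invertibility_uniformality} once to conclude that $t_k \circ r_k \circ f_k$ is a bijection, deduces that $r_k$ is injective, and then stops---it does not spell out why injectivity of the full $|S_k|$-dimensional encoder descends to invertibility of the $|C_i|$-dimensional map $h_k^{(i)}$. Your block-Jacobian reduction (style-independence kills the off-diagonal block, non-singularity of the whole Jacobian then forces $Dh_k^{(i)}$ to be non-singular, and regularity of the induced marginal lets you reapply \cref{prop:zimmermann_invertibility_uniformality}) actually fills the gap the paper leaves implicit; your flagging of this as ``the main obstacle'' is apt. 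The only residual technicality in your route is that the second invocation of \cref{prop:zimmermann_invertibility_uniformality} needs the codomain $h_k^{(i)}(\Zcal_{C_i})$ to be a simply connected manifold, which is not automatic; once you observe that $Dh_k^{(i)}$ is in fact non-singular \emph{everywhere} (not merely a.e., since the full composition is a diffeomorphism), $h_k^{(i)}$ is a local diffeomorphism and this can be tidied up by a standard covering-space argument using simple connectedness of $\Zcal_{C_i}$.
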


\begin{proof}
~\cref{lemma:exist_opt_sel_enc} verifies that there exists such optimum which minimizes the loss~\cref{eq:loss_set_size_known} to zero; the invariance and uniformity conditions have to be satisfied by any optimum, as shown in Lemma~\ref{lemma:cond_opt_selector_enc}. Following ~\cref{lemma:content_style_isolation_set_of_views}, the composition $r_k^{(i)}:= \phi^{(i, k)} \oslash r_{k}$ can only encode information related to the subset-specific content $C_i$ for any subset $V_i \in \Vcal$ otherwise it will lead to a contradiction to the invariance condition from~\cref{lemma:cond_opt_selector_enc}. The last step is to prove the invertibility of the encoders $G$. Notice that
\begin{equation*}
    t_k \circ r_k(\xb_k) = t_k \circ r_k \circ f_k(\zb_{S_k})
\end{equation*}
By applying \citet[][Proposition 5.]{zimmermann2021contrastive} with similar arguments as in the proof for~\cref{thm:ID_from_sets}, we can show that composition $t_k \circ r_k \circ f_k$ is a smooth bijection of the subset-specific content $\zb_{C_i}$. Since $f_k$ is a smooth invertible mapping by~\cref{assmp:general_assumption}~(i),  we have:
\begin{equation*}
(t_k \circ r_k \circ f_k) \circ f_k^{-1} = (t_k \circ r_k) \circ (f_k \circ f_k^{-1}) = t_k \circ r_k, 
\end{equation*}
Hence, $t_k \circ r_k$ is bijective as the composition of bijections is a bijection. Next, we show that $r_k$ is bijective. Showing that $r_k$ is bijective on its image is equivalent to showing that it is injective. By contradiction, suppose $r_k$ is not injective. Thus there exists distinct values $\xb^1_k, \xb^2_k \in \mathcal{X}_k$ s.t. $r_k(\xb^1_k) = r_k(\xb^2_k)$. This implies that $t_k \circ r_k(\xb^1_k) = t_k \circ r_k(\xb^2_k)$, which violate injectivity of $t_k \circ r_k$. Thus, $r_k$ must be injective.

To this end, we conclude that any $R, \Phi, T$ that minimizes~\cref{eq:loss_set_size_known} block-identifies the shared content variables $\zb_{C_i}$ for any subset of views $V_i \in \Vcal$.
\end{proof}

\begin{claim}
    \label{claim:optimal_Rin_reg}
   For any $(R, \Phi, T)~\pcref{defn:view_specific_encoders,defn:content_encoders,defn:aux_transformations}$ that minimizes the loss~\cref{eq:loss-selector-encoder}, the $\mathrm{Reg}(\Phi)$~\pcref{defn:info_sharing_reg} is lower bounded by $- \sum_{V_i \in \Vcal} |C_i| \cdot |V_i|$ and this minimum is obtained at the optimal content selectors defined in~\cref{eq:opt_selectors}.
\end{claim}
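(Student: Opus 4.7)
I would show that any minimizer $(R,\Phi,T)$ of $\Lcal$~\pcref{eq:loss-selector-encoder} must satisfy $\|\phi^{(i,k)}\|_0 \le |C_i|$ for every $V_i \in \Vcal$ and $k \in V_i$; summing over $V_i$ and $k$ then gives $\mathrm{Reg}(\Phi) \ge -\sum_{V_i \in \Vcal} |C_i|\cdot|V_i|$, and the construction used in the proof of~\cref{lemma:exist_opt_sel_enc} already attains this bound with equality since it sets $\phi^{(i,k)}\oslash\hat{\zb}_{S_k}=\hat{\zb}_{C_i}$, so $\|\phi^{(i,k)}\|_0 = |C_i|$.

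First, by~\cref{lemma:cond_opt_selector_enc} any minimizer satisfies the invariance $\phi^{(i,k)}\oslash r_k(\xb_k)=\phi^{(i,k')}\oslash r_{k'}(\xb_{k'})$ a.s.\ for all $k,k'\in V_i$ and the uniformity of $t_k\circ r_k(\xb_k)$ on $(0,1)^{|S_k|}$. Running the content–style isolation reasoning of~\cref{lemma:content_style_isolation_set_of_views} on the selected coordinates (rather than a full encoder output), I would obtain a smooth map $h^{(i,k)}\colon\Zcal_{C_i}\to\RR^{n_i}$, with $n_i:=\|\phi^{(i,k)}\|_0$, satisfying $\phi^{(i,k)}\oslash r_k(f_k(\zb_{S_k}))=h^{(i,k)}(\zb_{C_i})$, i.e.\ the selected output depends only on shared content.

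Second, I would combine uniformity with~\cref{prop:zimmermann_invertibility_uniformality} to conclude that $t_k\circ r_k\circ f_k\colon\Zcal_{S_k}\to(0,1)^{|S_k|}$ is a bijection, so that $G_k:=r_k\circ f_k\colon\Zcal_{S_k}\to\Zcal_{S_k}$ is smooth and injective. Invariance of domain then implies $G_k(\Zcal_{S_k})$ is open in $\RR^{|S_k|}$, and since coordinate projections are open maps, the image of $\phi^{(i,k)}\oslash G_k$ is an open subset of $\RR^{n_i}$, hence of topological dimension exactly $n_i$. By the first step this same image equals the range of the smooth map $h^{(i,k)}$ defined on a $|C_i|$-dimensional domain, so its dimension is at most $|C_i|$. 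Comparing the two bounds forces $n_i\le|C_i|$, which delivers the claimed inequality for $\mathrm{Reg}(\Phi)$ upon summing, with equality at the construction of~\cref{lemma:exist_opt_sel_enc}.

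\textbf{Main obstacle.} The subtle point is establishing $n_i\le|C_i|$ without appealing to a non-singular Jacobian, since a smooth bijection between open subsets of $\RR^n$ need not be a diffeomorphism (e.g.\ $x\mapsto x^3$). My plan sidesteps this by using invariance of domain to obtain openness of $G_k(\Zcal_{S_k})$ and then a purely topological dimension comparison, which is Jacobian-free. A secondary, more mechanical subtlety is re-running the auxiliary-function argument of~\cref{lemma:content_style_isolation_set_of_views} with coordinate selectors inserted, making sure that for each offending index $q\in S_k\setminus C_i$ one can choose $k'\in V_i$ with $q\notin S_{k'}$; this is automatic from $C_i=\bigcap_{k\in V_i}S_k$.
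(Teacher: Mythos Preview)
Your approach is essentially the paper's: both reduce to showing that, for any minimizer of $\Lcal$, no selector can pick out more than $|C_i|$ coordinates without violating the invariance condition, and both invoke the invertibility of $r_k$ (via uniformity and~\cref{prop:zimmermann_invertibility_uniformality}) together with the content--style isolation argument of~\cref{lemma:content_style_isolation_set_of_views} to derive the contradiction.

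The difference is at the key step. The paper simply asserts that, because $r_k$ is a lossless compression, selecting $|A|>|C_i|$ coordinates forces at least one of them to depend on a non-content latent, and then appeals to~\cref{lemma:content_style_isolation_set_of_views}. You instead run the isolation argument \emph{first} to obtain the factorization $\phi^{(i,k)}\oslash r_k\circ f_k = h^{(i,k)}(\zb_{C_i})$, and then close the gap with a clean topological-dimension comparison: invariance of domain makes the image of the selected coordinates open in $\RR^{n_i}$, while the factorization through $\Zcal_{C_i}$ caps its dimension at $|C_i|$, forcing $n_i\le|C_i|$. This is a genuine improvement in rigor---the paper's ``at least one component contains non-shared information'' is precisely the statement that needs the dimension argument you supply, and the paper does not justify it. Your anticipation of the ``smooth bijection $\neq$ diffeomorphism'' issue and your choice to avoid Jacobian rank in favor of invariance of domain is well placed.
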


\begin{proof}
    Suppose \emph{for a contradiction} that there exists some binary weight parameters $\tilde{\Phi} \neq \Phi$ with
    \begin{equation}
        \label{eq:contradiction_optimal_R_noVin_reg}
        \mathrm{Reg}(\tilde{\Phi}) = - \sum_{V_i \in \Vcal}  \sum_{k \in V_i}
    \norm{\tilde{\phi}^{(i, k)}}_0  < \mathrm{Reg}(\Phi),
    \end{equation}
    which means, there exists at least one vector $\tilde{\phi}^{(i, k)}$ for some view $k \in V$, subset $V_i \in \Vcal$, such that
    \begin{equation}
        \tilde{\phi}^{(i, k)} \oslash r_k(\xb_k) = \hat{\zb}_A \qquad |A| > |C_{i}|,
    \end{equation}
    where $A \subseteq S_k$ is an index subset of the view-specific latents $S_k$. Given that $R, \Phi,  T$ minimizes $\Lcal(R, \Phi,  T)$ from ~\cref{eq:loss-selector-encoder}, these minimizers have to satisfy the invariance and uniformity constraint as shown in~\cref{lemma:cond_opt_selector_enc}. Since uniformity implies invertibility~\citep{zimmermann2021contrastive}, the learned representation $r_k(\xb_k)$ contains sufficient information about the original view $\xb_k$ s.t. the view $\xb_k$ can be reconstructed by some decoder given enough capacity. Given that the number of selected dimensions $|A| > |C_i|$, at least one latent component $j \in A$ will contain information that is not jointly shared by $V_i$. That means the composition $r_k^{(i)}:= \phi^{(i, k)} \oslash r_{k}$ encodes some information other than just content $C_i$. As shown in~\cref{lemma:content_style_isolation_set_of_views}, any dependency from the learned representation on non-content variables leads to contradiction to the invariance condition as derived in~\cref{lemma:cond_opt_selector_enc}. Therefore, the optimal content selectors $\Phi$ following the definition in~\cref{eq:opt_selectors} must obtain the global minimum of the information-sharing regularizer~\pcref{defn:info_sharing_reg}, which equals $- \sum_{V_i \in \Vcal}  \sum_{k \in V_i} |C_i|$.
\end{proof}
\unifEnc*
\begin{proof}
    Lemma~\ref{lemma:exist_opt_sel_enc} confirms that
    there exist view-specific encoders $R$, 
    content selectors $\Phi$, 
    and projections $T$ that obtain the minimum of the unregularized loss~\cref{eq:loss-selector-encoder} (equals zero);
    Additionally, any optimal $R, \Phi, T$ fulfills the invariance condition and uniformity ~\pcref{lemma:cond_opt_selector_enc} s.t. they obtain the global minimum zero. Using the invariance condition, 
    Claim~\ref{claim:optimal_Rin_reg} substantiates that the optimal content selectors as defined in~\cref{eq:opt_selectors} minimizes the regularization term~\pcref{defn:info_sharing_reg}. 
    We have thus shown that with $R, \Phi, T$ (as defined in~\cref{eq:opt_view_specific_encoder,eq:opt_selectors,eq:opt_aux_transformations}),~
    \cref{eq:main_loss_set_size_unknown} obtains the global minimum. 

    Next, we show that the number of selected dimensions from each selector $\phi^{(i, k})$, i.e., the $L_0$ norm of $\phi^{(i, k)}$, align with the size of the shared content $|C_{i}|$.

    Among the content selectors that minimize the unregularized loss~\pcref{eq:loss-selector-encoder}, we consider some content selectors $\Phi^* \in \argmin \mathrm{Reg}(\Phi)$ that also minimize the information-sharing regulariser defined in~\cref{defn:info_sharing_reg}, that is:
    $$\mathrm{Reg}(\Phi^*) = - \sum_{V_i \in \Vcal}  \sum_{k \in V_i} |C_i|.$$
    Suppose \emph{for a contradiction} that 
    there exists a pair of binary selectors 
    $(\phi^{(i, k)}, \phi^{(i^{\prime}, k^\prime)})$ 
    with $\phi^{(i, k)} \in \{0, 1\}^{|S_k|}$ and $\phi^{(i^{\prime}, k^{\prime})} \in \{0, 1\}^{|S_{k^\prime}|}$ such that
    \begin{equation}
        \norm{\phi^{(i, k)}}_0 > |C_{i}|; \qquad \norm{\phi^{(i, k^\prime)}}_0 < |C_{i^\prime}|,
    \end{equation}
    which indicates that there exists at least one latent component $j \in S_k \setminus C_{i}$ being selected by $\phi^{(i, k)}$; similarly, this contradicts the invariance condition as shown in ~\cref{lemma:content_style_isolation_set_of_views}. %
    Hence, the number of dimensions selected by each $\phi^{(i, k)}$ has to equal the content size $|C_i|$.

    At this stage, the problem setup is reduced to the case in~\cref{lemma:general_ID_from_sets} where the size of the content variables $|C_i|$ are given for all subset of views $V_i \in \Vcal$. Hence, applying~\cref{lemma:general_ID_from_sets}, we conclude that any $R, \Phi, T$~\pcref{defn:view_specific_encoders,defn:content_selectors,defn:aux_transformations} that minimize~\cref{eq:main_loss_set_size_unknown} block-identify the shared content variables $\zb_{C_i}$ for any subset of views $V_i \in \Vcal$ and for all views $k \in V_i$.
\end{proof}
\subsection{Proofs for Identifiability Algebra}
Let $\zb_{C_1}, \zb_{C_2}$ be two sets of content variables indexed by  $C_1, C_2 \subseteq [N]$  that are block-identified by some smooth encoders $g_1: \Xcal_1 \to \Zcal_{C_1}, g_2: \Xcal_2 \to \Zcal_{C_2}$, then it holds for $C_1, C_2$ that:
\idAlgIntersection*
\begin{proof}
    By the definition of block-identifiability, we construct two synthetic views using the learned representation from $\xb_1$ and $\xb_2$:
    \begin{equation}
        \begin{aligned}
            \xb^{(1)} &:= g_1(\xb_1) = h_1(\zb_{C_1})\\
            \xb^{(2)} &:= g_2(\xb_2) = h_2(\zb_{C_2})
        \end{aligned}
    \end{equation}
    for some smooth invertible mapping $h_k : \Zcal_{C_k} \to \Zcal_{C_k} \, k \in \{1, 2\}$.
    Applying the~\cref{thm:ID_from_sets} with two views, we can block-identify the intersection $C_1 \cap C_2$ using this pair of views $(\xb^{(1)}, \xb^{(2)})$.
\end{proof}

\idAlgComplement*
\begin{proof}
     Construct the same synthetic views $\xb^{(1)}, \xb^{(2)}$ as in the proof for~\cref{cor:id_alg_intersection}. We then can consider the intersection $C_1 \cap C_2$ as the content variable and $C_1 \setminus C_2$ as the style variable from these two synthetic views $(\xb^{(1)}, \xb^{(2)})$. \emph{Private Component Extraction} from~\citet[Theorem 2.][]{lyu2021understanding} has shown that if the style variable is independent of the content, then the style variables can also be extracted up to a smooth invertible mapping. Therefore, we conclude that the complement $\zb_{C_1 \setminus C_2}$ can also be block-identified.
\end{proof}
\idAlgUnion*
\begin{proof}
    We rephrase $C_1{\cup}C_2$ as a union of the following disjoint parts:
    \begin{equation}
        C_1 {\cup} C_2 = (C_1 {\cap} C_2) {\cup} (C_1 {\setminus} C_2) {\cup} (C_2 {\setminus} C_1)
    \end{equation}
    Following the definition from~\cref{cor:id_alg_intersection,cor:id_alg_complement} have shown that:
    \begin{equation}
        \begin{aligned}
            \hat{\zb}_{\cap}&:= h_{\cap}(\zb_{C_1 {\cap} C_2})\\
            \hat{\zb}_{1{\setminus} 2} &:= h_{1{\setminus} 2}(\zb_{C_1{\setminus} C_2})\\
            \hat{\zb}_{2{\setminus} 1} &:= h_{2{\setminus} 1}(\zb_{C_2{\setminus} C_1}),
        \end{aligned}
    \end{equation}
    By concatenate the learned representations, we define $h_{\cup}: \Zcal_{C_1 \cup C_2} \to \Zcal_{C_1 \cup C_2}$ as
    \begin{equation}
         h_{\cup}(\zb_C, \zb_{1 {\setminus} 2}, \zb_{2 {\setminus} 1}) := [\hat{\zb}_C, \hat{\zb}_{1{\setminus} 2}, \hat{\zb}_{2{\setminus} 1}] =  h_{\cup}(\zb_{C_1 {\cup} C_2}),
    \end{equation}
    hence, the union $C_1 {\cap} C_2$ can be block-identified.
\end{proof}

\section{Experimental Results}
\label{sec:exp_details}
This section provides further details about the datasets and implementation details in~\cref{sec:experiment}. The implementation is built upon the code open-sourced by~\citet{zimmermann2021contrastive,von2021self,daunhawer2023identifiability}.

\subsection{Numerical Experiment -- Theory Validation}
\label{app:numerical}

\looseness=-1\textbf{Data Generation.} 
For completeness, we summarize the setting of our numerical experiments. We generate synthetic data following~\cref{example:intuitive}, which we also report below. The latent variables are sampled from a Gaussian distribution $\zb \sim \Ncal(0, \Sigma_{\zb})$, where possible \emph{causal} dependencies are encoded through $\Sigma_{\zb}$. Note that in this setting the ground truth causal variables will be related linearly to each other.
\begin{equation}
\textstyle
\begin{aligned}
     \xb_1 &= f_1(\zb_1, \zb_2, \zb_3, \zb_4, \zb_5),  \qquad
     \xb_2 = f_2(\zb_1, \zb_2, \zb_3, \zb_5, \zb_6), \\
     \xb_3 &= f_3(\zb_1, \zb_2, \zb_3, \zb_4, \zb_6), \qquad
     \xb_4 = f_4(\zb_1, \zb_2, \zb_4, \zb_5, \zb_6) .\quad
\end{aligned}
\end{equation}

\looseness=-1\textbf{Implementation Details.} 
We implement each view-specific mixing function $f_k$, for each view $k = 1, 2, 3, 4$, using a 3-layer 
\emph{invertible, untrainable} MLP~\citep{haykin1994neural} 
with LeakyReLU~\citep{xu2015empirical}($\alpha = 0.2$). 
The weight parameters in the mixing functions are \emph{randomly initialized}. 
For the \emph{learnable} view-specific encoders, 
we use a 7-layer MLP with LeakyReLU ($\alpha = 0.01$) 
for each view. 
The encoders are trained using the Adam optimizer~\citep{kingma2017adam} with \emph{lr=1e-4}. 
All implementation details are summarized in~\cref{tab:params_num}.

\looseness=-1\textbf{Additional Experiments.}
We experiment on \emph{causally dependent} synthetic data, generated by $\zb \sim \Ncal(0, \Sigma_{\zb})$ with $\Sigma_\zb \sim \mathrm{Wishart}(0, I)$. The results are shown in \cref{fig:numerical_heatmap_dep}. 
The rows denote the ground truth latent factors, 
and the columns represent the learned representation from different subsets of views. Each cell reports the $R^2$ score between the respective ground truth factors and the learned representation. 
For example, the cell with col=$\{\xb_1, \xb_2\}$ and row=$\zb_1$ shows the $R^2$ score when trying to predict $\zb_1$ using the learned representation from subset $\{\xb_1, \xb_2\}$. 
Since dependent style variables become predictable, as discussed in~\cref{app:numerical}, 
we aim to verify that the learned representation contains \emph{all and only} content variables. 
In other words, 
it \emph{block-identifies} the ground truth content factors. For that, we consider all the views $\{\xb_1, \dots, \xb_4\}$ and train a linear regression from the \emph{ground truth content variables} $\zb_1, \zb_2$ to the individual style variables $\zb_3, \zb_4, \zb_5, \zb_5$. 
We report the coefficient of determination $R^2$ in~\cref{tab:linear_greg_GT_c2s}. 
We observe that the $R^2$ values obtained from the ground truth content are highly similar to the ones in the last column of the heatmap~\pcref{fig:numerical_heatmap_dep}. 
Based on this, 
we have showcased that the learned representation indeed \emph{block-identifies} the content variables.

\looseness=-1 \textbf{Additional Evaluation Metric.} We report the Mean Correlation Coefficient(MCC)~\citep{khemakhem2020ice} on the numerical experiments. MCC has been used in several recent works on identifiability of causal representation learning~\citep{buchholz2023learning,von2023nonparametric}, it measures the \emph{component-wise linear} correlation up to permutations. A high MCC (close to one) indicates a clear 1-to-1 linear correspondence between the learned representation and the ground truth latents. We remark our theoretical framework considers block-identifiability, which could imply any type of bijective relation to the ground truth content variables, including nonlinear transformations. Nevertheless, we observe high MCC score on both independent and dependent cases, showing that the learned representation having a high \emph{linear} correlation to the latent components which indicates stronger identifiability results.
\begin{table}
    \centering
       \caption{{\textbf{Linear Evaluation}: Mean Correlation Coefficients across multiple views.}}
    \resizebox{\linewidth}{!}{%
    \begin{tabular}{lccccccccccc}
    \toprule
      \rowcolor{Gray!20}  & $(\mathbf{x}_1, \mathbf{x}_2)$ & $(\mathbf{x}_1, \mathbf{x}_3)$ & $(\mathbf{x}_1, \mathbf{x}_4)$ & $(\mathbf{x}_2, \mathbf{x}_3)$ & $(\mathbf{x}_2, \mathbf{x}_4)$ & $(\mathbf{x}_3,\mathbf{x}_4)$ & $(\mathbf{x}_1,\mathbf{x}_2,\mathbf{x}_3)$ & $(\mathbf{x}_1,\mathbf{x}_2,\mathbf{x}_4)$ & $(\mathbf{x}_1,\mathbf{x}_3,\mathbf{x}_4)$ & $(\mathbf{x}_2,\mathbf{x}_3,\mathbf{x}_4)$ & $(\mathbf{x}_1,\mathbf{x}_2,\mathbf{x}_3,\mathbf{x}_4)$  \\
      \midrule
ind.              & $0.887 \pm 0.000$              & $0.881 \pm 0.000$              & $0.882 \pm 0.000$              & $0.885 \pm 0.000$              & $0.886 \pm 0.000$              & $0.880 \pm 0.000$             & $0.853 \pm 0.000$                          & $0.854 \pm 0.000$                          & $0.846 \pm 0.000$                          & $0.851 \pm 0.000$                          & $0.786 \pm 0.000$                                       \\ 
dep.                & $0.956 \pm 0.000$              & $0.880 \pm 0.002$              & $0.891 \pm 0.002$              & $0.795 \pm 0.002$              & $0.805 \pm 0.002$              & $0.805 \pm 0.002$             & $0.945 \pm 0.001$                         & $0.969 \pm 0.001$                          & $0.858 \pm 0.003$                          & $0.744 \pm 0.003$                          & $0.944 \pm 0.001$ \\
      \bottomrule
    \end{tabular}
    }
\end{table}

\begin{table}
    \centering
    \begin{minipage}[t]{.4\textwidth}
        \caption{Parameters for numerical simulation~\pcref{subsec: num_results,app:numerical}.}
        \label{tab:params_num}
        \begin{tabular}{ll}
        \toprule
         \rowcolor{Gray!20} \textbf{Parameter}   & \textbf{Value} \\
         \midrule
          Mixing function   & 3-layer MLP\\
          Encoder & 7-layer MLP\\
          Optimizer & Adam\\
          Adam: learning rate& 1e-4\\
          Adam: beta1 & 0.9\\
          Adam: beta2 & 0.999\\
          Adam: epsilon & 1e-8\\
          Batch size& 4096\\
          Temperature $\tau$ & 1.0\\
          \# Iterations &100,000\\
          \# Seeds & 3\\
          Similarity metric & Euclidian\\
          \bottomrule
        \end{tabular}
    \end{minipage}
    \hfill
    \begin{minipage}[t]{.58\textwidth}
    \centering
       \caption{Parameters for experiments~\cref{subsec:case1,subsec:case2,app:c3di,app:m3di}. $^\ast$: for both image and text encoders. $^{\ast\ast}$: hyper-arapmeter for BarlowTwins~\citep{zbontar2021barlow}.}
           \label{tab:param_cm3di}
    \begin{tabular}{ll}
    \toprule
      \rowcolor{Gray!20} \textbf{Parameter}   &  \textbf{Values} \\
      \midrule
      Content encoding size$^\ast$   &  8 \\
      View-specific encoding size$^\ast$  &  11 \\
      Image hidden size & 100 \\
      Text embedding dim & 128 \\
      Text vocab size & 111\\
      Text fbase & 25\\
      Batch size & 128\\
      Temperature & 1.0\\
      Off-diagonal constant $\lambda^{\ast\ast}$ & 1.0\\
      Optimizer & Adam\\
      Adam: beta1 & 0.9\\
      Adam: beta2 & 0.999\\
      Adam: epsilon & 1e-8\\
      Adam: learning rate & 1e-4\\
      \# Iterations &300,000\\
      \# Seeds & 3\\
      Similarity metric & Cosine similarity\\
      Gradient clipping & 2-norm; max value 2\\
      \bottomrule
    \end{tabular}
    \end{minipage}
\end{table}

\begin{table}
\begin{minipage}[t]{.5\textwidth}
    \centering
    \includegraphics[width=\textwidth]{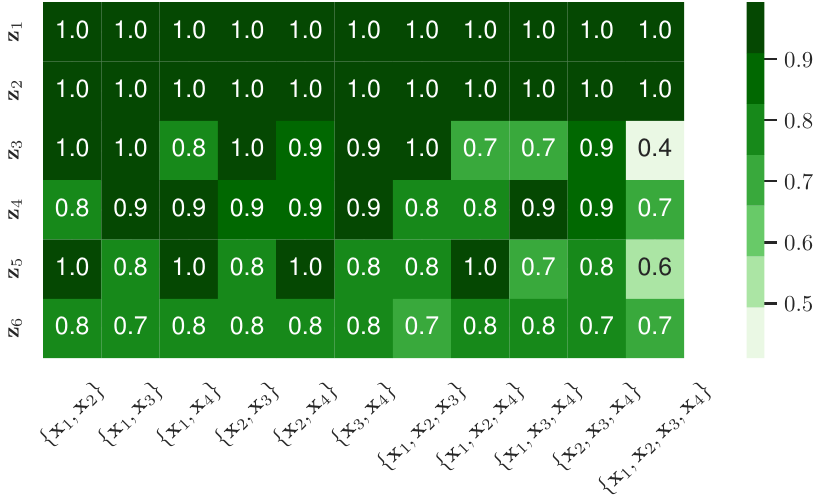}
    \captionof{figure}{\textbf{Theory Verfication:} Average $R^2$ across multiple views generated from \emph{causally dependent} latents.}
    \label{fig:numerical_heatmap_dep}
\end{minipage}
\hfill
\begin{minipage}[b]{0.45\textwidth}
    \centering
    \caption{\textbf{Linear $R^2$ from \emph{ground truth} content variables to styles} when consider $\{\xb_1, \xb_2, \xb_3, \xb_4\}$, these values align with the last column of~\cref{fig:numerical_heatmap_dep}, showing that we have \emph{block-identified} the content variables $\{\zb_1, \zb_2\}$}
    \label{tab:linear_greg_GT_c2s}
    \begin{tabular}{ccccc}
    \toprule
        \rowcolor{Gray!20} \textbf{content} & \multicolumn{4}{c}{\textbf{style}}  \\
         \rowcolor{Gray!20}$\{\zb_1, \zb_2\}$ & $\zb_3$ & $\zb_4$ & $\zb_5$ & $\zb_6$\\
        \midrule
         1.0 & 0.32 & 0.65 & 0.58 & 0.71\\
         \bottomrule
    \end{tabular}
\end{minipage}
\end{table}

\subsection{Self-Supervised Disentanglement}
\looseness=-1\textbf{Datasets.} \label{app:ss_dis}
In this experiment, 
we test on \emph{MPI-3D complex}~\citep{gondal2019transfer}
and 
\emph{3DIdent}~\citep{zimmermann2021contrastive}. 
Both are high-dimensional image datasets generated from 
\emph{mutually independent latent factors}: 
\emph{MPI-3D complex} contains real-world complex shaped images with ten \emph{discretized} latent factors while 
\emph{3DIdent}
renders a teapot conditioned on ten \emph{continuous} latent factors.

\begin{figure}[htbp]
    \begin{subfigure}[b]{.45\linewidth}
        \centering
        \includegraphics[width=\linewidth]{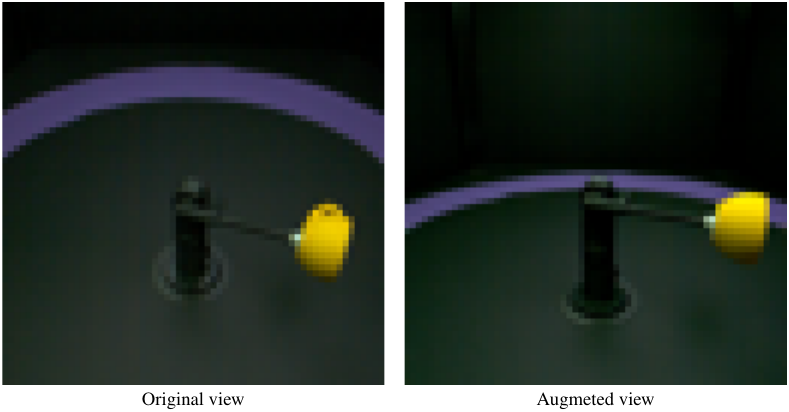}
        \caption{\textbf{Example Input}: \emph{MPI-3D complex}}
        \label{fig:input_mpi3d_complex}
    \end{subfigure}
    \hfill
    \begin{subfigure}[b]{.45\linewidth}
        \includegraphics[width=\linewidth]{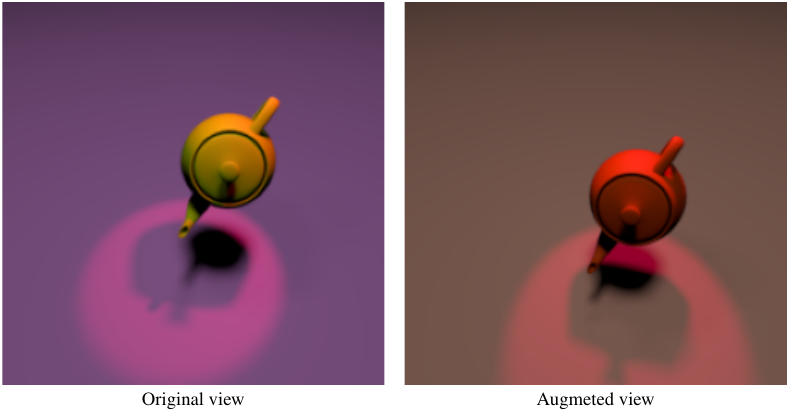}
        \caption{
        \textbf{Example Input}: \emph{3DIdent}}
        \label{fig:input_3dident}
    \end{subfigure}
\end{figure}

\looseness=-1\textbf{Implementation Details.}
We used the implementation from~\citep{michlo2021Disent} for Ada-GVAE~\citep{locatello2020weakly}, following the same architecture as \citep[][Tab. 1 Appendix ]{locatello2020weakly}. For our method, we use ResNet-18~\citep{he2015deep} as the image encoder, details given in~\cref{tab:encoders_cm3di}. For both approaches, we set \textsc{encoding size=10}, following the setup in~\citet{locatello2020weakly}. 

\subsection{Content-Style Identifiability on Images}
\label{app:c3di}

\looseness=-1\textbf{Datasets.}
\emph{Causal3DIdent}~\citep{von2021self} extends \emph{3Dident}~\citep{zimmermann2021contrastive} by introducing different classes of objects, 
thus \emph{object shape} (or \emph{class})
is added as an additional \emph{discrete} factor of variation. We extend the image pairs experiments from~\citep{von2021self} by inputting three views, as shown in~\cref{fig:input_causal3di}, where the second and third images are obtained by perturbing different subsets of latent factors of the first image. To perturb one specific latent component, we \emph{uniformly} sample one latent in the predefined latent space ($Unif[-1, 1]$, details see~\citep[][App. B]{von2021self}), then we use indexing search to retrieve the image in the dataset that has the closest latent values as the sampled ones. Note that only a finite number of images are available; thus, there is not always a perfect match. More frequently, we observe slight changes in the non-perturbing latent dimensions. For instance, the \emph{hues} of the third view is slightly different than the original view, although we intended to share the same \emph{hue} values.
\begin{figure}[htbp]
    \centering
    \begin{subfigure}[t]{0.48\linewidth}
        \includegraphics[width=.7\linewidth]{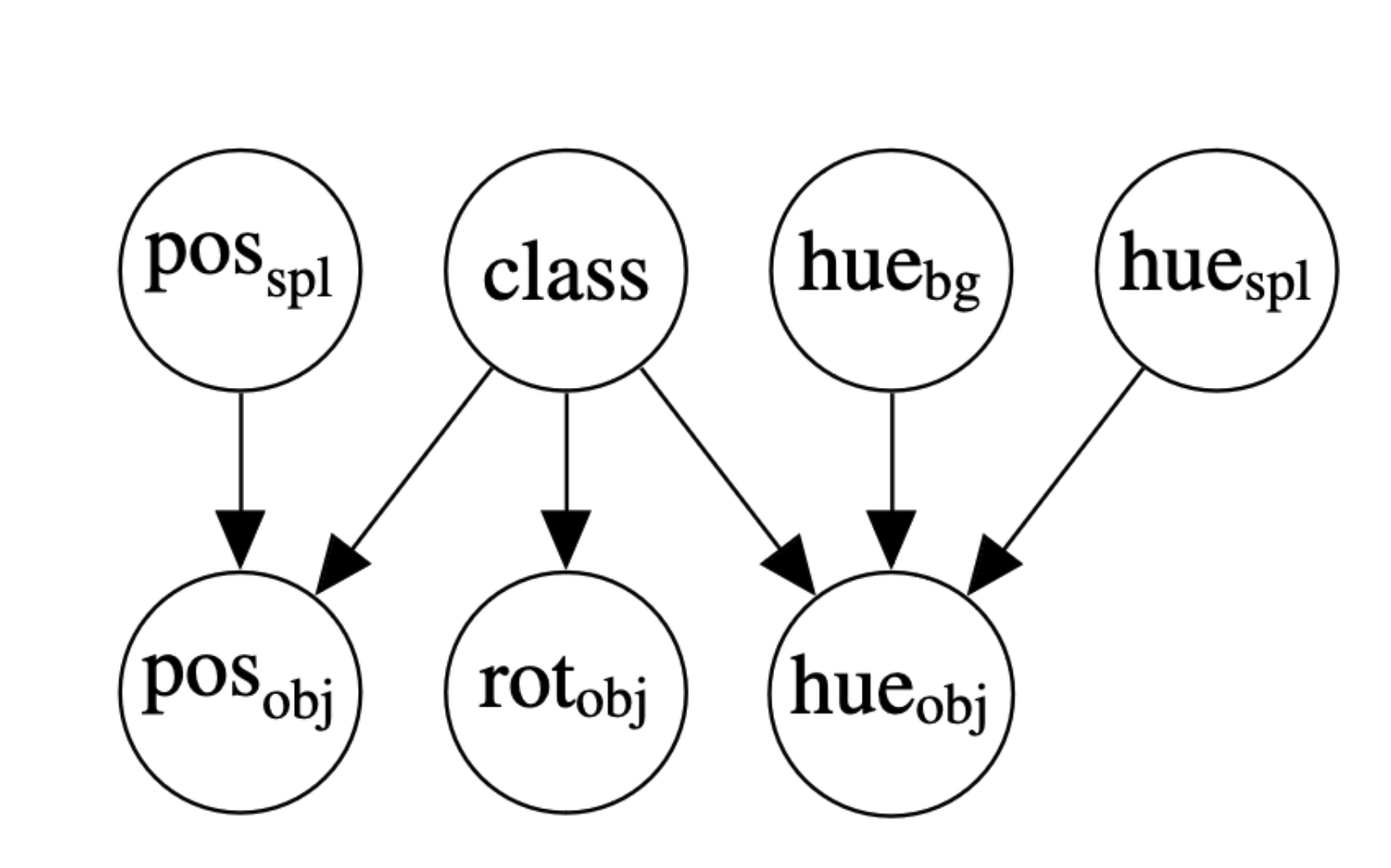}
        \caption{\textbf{Underlying causal relation} in \emph{Causal3DIdent} and \emph{Multimodal3DIdent} images. Figure adopted from~\citep[][Fig. 2]{von2021self}}
    \label{fig:graph_causal3di}
    \end{subfigure}
    \hfill
    \begin{subfigure}[t]{0.48\linewidth}
        \centering
    \includegraphics[width=\linewidth]{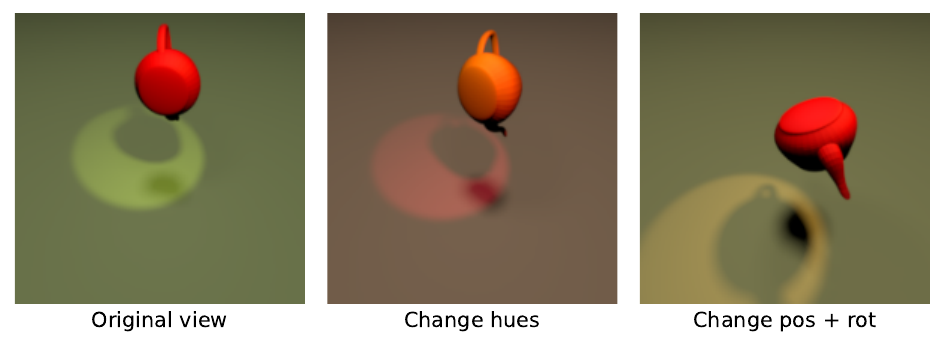}
    \caption{\textbf{Example Input}: \emph{Causal3DIdent}}
    \label{fig:input_causal3di}
    \end{subfigure}
    \caption{\textbf{\emph{Causal3DIdent}}: Underlying causal relations and input examples.}
\end{figure}

\looseness=-1\textbf{Implementation Details.}
The encoder structure and parameters are summarized in~\pcref{tab:encoders_cm3di,tab:param_cm3di}. We train using BarlowTwins~\citep{zbontar2021barlow} with cosine similarity and off-diagonal importance constant $\lambda = 1$. BarlowTwins is another contrastive loss that jointly encourages alignment and uniformity, by enforcing the cross-correlation matrix of the learned representations to be identity. 
The on-diagonal elements represent the \emph{\textcolor{Green}{content alignment}} while the off-diagonal elements approximate the \emph{\textcolor{Blue}{entropy regularization}}.

\looseness=-1\textbf{\cref{thm:ID_from_sets} Validation.}
We train \emph{content encoders}~\pcref{defn:content_encoders} on \emph{Causal3DIdent} to verify~\cref{thm:ID_from_sets}. Note that we experiment on three views~\citep{von2021self} cannot naively handle.
\Cref{table:causal3dident_full} summarizes results for all possible perturbations among the three views. 
We can observe that the discrete factor \emph{class} learned perfectly; dependent style variables become predictable from the content (\emph{class}) latent causal dependence. Note that this table shows similar results as in~\citep[][Table 6. Latent Transformation (LT)]{von2021self}.
We remark that there is a reality between theory in practice: In theory, we assume that the content variables share the \emph{exact same value} across all views; however, in practice, finding a perfect match of all of the \emph{continuous} content values become impossible, since there is only a finite number of training data available. We believe this reality gap negatively influenced the learning performance on the content variables, thus preventing efficient prediction on certain content variables, such as \emph{object hues}.

\begin{table}
\caption{\textbf{\cref{thm:ID_from_sets} Validation on} \emph{Causal3DIdent}: $R^2$ mean$\pm$std. \emph{\textcolor{Green}{Green}}: \textcolor{Green}{content}, \textbf{bold}: $R^2>0.50$.}
\label{table:causal3dident_full}
\resizebox{\linewidth}{!}{%
\begin{tabular}{c|c|cccc|ccc|ccc}
\toprule
\cellcolor{Gray!30}                                     & \cellcolor{Gray!30}                        & \multicolumn{4}{c|}{\cellcolor{Gray!30}positions}                                                                                                               & \multicolumn{3}{c|}{\cellcolor{Gray!30}hues}                                                               & \multicolumn{3}{c}{\cellcolor{Gray!30}rotations}                                                                      \\ \cline{3-12} 
\rowcolor{Gray!30} 
\multirow{-2}{*}{\cellcolor{Gray!30}\thead{\small{Views generated} \\ \small{by changing}}} & \multirow{-2}{*}{\cellcolor{Gray!30}class} &\cellcolor{Gray!30} $x$                                 &\cellcolor{Gray!30} $y$                                 &\cellcolor{Gray!30} $z$                        &\cellcolor{Gray!30} spotl                                 &\cellcolor{Gray!30} obj                           &\cellcolor{Gray!30} spotl                       &\cellcolor{Gray!30} bkg                                &\cellcolor{Gray!30} $\phi$                                            &\cellcolor{Gray!30} $\theta$                      &\cellcolor{Gray!30} $\psi$                        \\ \midrule
hues                                                  & {\color{Green} \textbf{1.00±0.00}}       & {\color{Green} \textbf{0.76±0.01}} & {\color{Green} \textbf{0.56±0.02}} & {\color{Green} 0.00±0.00} & {\color{Green} \textbf{0.82±0.01}} & 0.27±0.03                        & 0.00±0.01                        & 0.00±0.00                                 & \multicolumn{1}{l}{{\color{Green} 0.25±0.02}} & {\color{Green} 0.27±0.02} & {\color{Green} 0.27±0.02} \\
positions                                             & {\color{Green} \textbf{1.00±0.00}}       & 0.00±0.01                                 & 0.46±0.02                                 & 0.00±0.01                        & 0.00±0.01                                 & {\color{Green} 0.32±0.02} & {\color{Green} 0.00±0.01} & {\color{Green} \textbf{0.92±0.00}} & {\color{Green} 0.26±0.02}                     & {\color{Green} 0.29±0.02} & {\color{Green} 0.27±0.02} \\
rotations                                             & {\color{Green} \textbf{1.00±0.00}}       & {\color{Green} 0.11±0.01}          & {\color{Green} \textbf{0.50±0.02}} & {\color{Green} 0.00±0.00} & {\color{Green} 0.06±0.01}          & {\color{Green} 0.31±0.02} & {\color{Green} 0.00±0.01} & {\color{Green} \textbf{0.83±0.01}} & 0.25±0.01                                            & 0.27±0.02                        & 0.06±0.01                        \\
hues+pos                                             & {\color{Green} \textbf{1.00±0.00}}       & 0.00±0.00                                 & 0.20±0.02                                 & 0.00±0.01                        & 0.00±0.01                                 & 0.14±0.02                        & 0.00±0.00                        & 0.00±0.01                                 & {\color{Green} 0.07±0.01}                     & {\color{Green} 0.18±0.02} & {\color{Green} 0.12±0.02} \\
hues+rot                                              & {\color{Green} \textbf{1.00±0.00}}       & {\color{Green} 0.09±0.02}          & {\color{Green} 0.36±0.02}          & {\color{Green} 0.00±0.00} & {\color{Green} \textbf{0.51±0.01}} & 0.25±0.02                        & 0.00±0.01                        & 0.00±0.01                                 & 0.00±0.01                                            & 0.25±0.02                        & 0.25±0.01                        \\
pos+rot                                           & {\color{Green} \textbf{1.00±0.00}}       & 0.00±0.00                                 & 0.21±0.02                                 & 0.00±0.01                        & 0.00±0.00                                 & {\color{Green} 0.07±0.01} & {\color{Green} 0.00±0.01} & {\color{Green} 0.23±0.02}          & 0.05±0.01                                            & 0.20±0.02                        & 0.13±0.02                        \\
hues+pos+rot                                         & {\color{Green} \textbf{1.00±0.00}}       & 0.00±0.00                                 & 0.42±0.02                                 & 0.00±0.01                        & 0.00±0.00                                 & 0.25±0.02                        & 0.00±0.00                        & 0.00±0.00                                 & 0.01±0.01                                            & 0.26±0.02                        & 0.26±0.02          \\
\bottomrule
\end{tabular}}
\end{table}

\subsection{Multi-modal Content-Style Identifiability under Partial Observability}
\label{app:m3di}
\looseness=-1\textbf{Dataset.}
\emph{Multimodal3DIdent}~\citep{daunhawer2023identifiability} augments \emph{Causal3DIdent}~\citep{von2021self} with text annotations for each image view, and discretizes the \emph{objection positions} $(x, y, z)$ to categorical variables. 
In particular, \emph{object-zpos} is a constant and thus not shown in our evaluation~\pcref{fig:m3di_res}. 
Our experiment extends~\citep{daunhawer2023identifiability} by adding one additional image to the original image-text pair, 
perturbing the \emph{hues}, \emph{object rotations} and \emph{spotlight positions} of the original image (Uniformly sample from $Unif[0, 1]$). 
Thus, $(\mathrm{img}_0, \mathrm{img}_1)$ share \emph{object shape} and \emph{background color}; 
Thus, $(\mathrm{img}_0, \mathrm{txt}_0)$ share \emph{object shape} and \emph{object x-y positions}; 
both $(\mathrm{img}_1, \mathrm{txt}_0)$ and the joint set $(\mathrm{img}_0, \mathrm{img}_1, \mathrm{txt}_0)$ share only the \emph{object shape}. 
One example input is shown in~\cref{fig:input_m3di}.

\begin{wrapfigure}{r}{.63\linewidth}
     \centering
     \vspace*{-3pt}
     \subfloat{\includegraphics[width=0.33\textwidth]{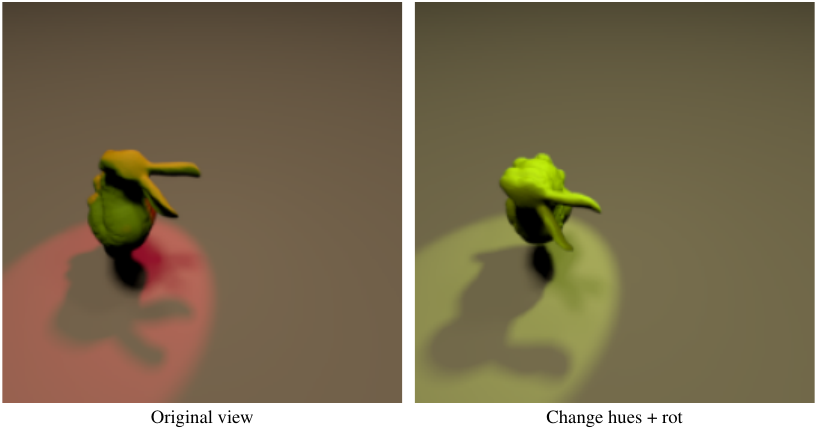}}%
    \hspace*{10pt} \begin{minipage}{.20\textwidth}
    \vskip-80pt \emph{Text description:} A \textbf{hare} of  \textbf{bright yellow green} color is visible, positioned at the \textbf{mid-left} of the image.
    \end{minipage}
    \caption{\textbf{Example input}:  \emph{Multimodal3DIdent}. \emph{Left:} pair of images that are original and perturbed images. \emph{Right:} Text annotation for the original view.}
    \vspace*{-10pt}
    \label{fig:input_m3di}
\end{wrapfigure}

\looseness=-1\textbf{Implementation Details.}
\Cref{tab:param_cm3di,tab:encoders_cm3di} shows the network architecture and implementation details, mostly following~\citep{daunhawer2023identifiability}. Note that we use the same encoding size for both image and text encoders for the convenience of implementation. We train using \emph{BarlowTwins} with $\lambda = 1$.
In practice, we treat the \emph{unknown content sizes} as a list of hyper-parameters and optimize it over different validations. 

\begin{table}
    \centering
    \caption{\textbf{Encoder Architectures} for \emph{Causal3DIdent} and \emph{Multimodal3DIdent}.}
    \label{tab:encoders_cm3di}
    \begin{tabular}{ll}
    \toprule
     \rowcolor{Gray!20}\textbf{Image Encoder} & \textbf{Text Encoder}\\
     \midrule
        \emph{Input size = H $\times$ W $\times$ 3}&\emph{Input size = vocab size}\\
       ResNet-18(hidden size)  & Linear(fbase, text embedding dim) \\
       LeakyReLu($\alpha=0.01$)  &  Conv2D(1, fbase, 4, 2, 1, bias=True)\\
       Linear(hidden size, image encoding size) & BatchNorm(fbase); ReLU\\
       & Conv2D(fbase, fbase$\cdot$2, 4, 2, 1, bias=True)\\
       & BatchNorm(fbase$\cdot$2); ReLU\\
       & Conv2D(fbase$\cdot$2, fbase$\cdot$4, 4, 2, 1, bias=True)\\
       & BatchNorm(fbase$\cdot$4); ReLU\\
       & Linear(fbase$\cdot$4$\cdot$3$\cdot$16, text encoding size)\\
    \bottomrule
    \end{tabular}
\end{table}

\looseness=-1 \textbf{Further Discussion about~\cref{subsec:case2}.} The fundamental difference between the \emph{Multimodal3D} and \emph{Causal3DIdent} datasets, as mentioned above, makes a direct comparison between our results in~\cref{fig:m3di_res} and~\citep{von2021self} harder. However, ~\Cref{table:causal3dident_full} shows similar $R^2$ scores as the results given in~\citet[][Sec 5.2]{von2021self}, which verifies the correctness of our method. 

\looseness=-1 \textbf{\cref{thm:ID_from_sets} Validation.}
We additionally learn \emph{content encoders} on three partially observed views $(\mathrm{img}_0, \mathrm{img}_1, \mathrm{txt}_0)$ using the loss from~\citet{zbontar2021barlow}, to justify~\cref{thm:ID_from_sets}. We use the same architecture and parameters as summarized in~\cref{tab:encoders_cm3di,tab:param_cm3di}.
~\Cref{tab:m3di_table} shows the content encoders consistently predict the content variables well and that our evaluation results highly align with~\citep[][Fig. 3]{daunhawer2023identifiability} on image-text pairs $(\mathrm{img}_0, \mathrm{txt}_0)$ as inputs, which empirically validates~\cref{thm:ID_from_sets}.

\begin{table}
\caption{\textbf{\cref{thm:ID_from_sets} Validation on} \emph{Multimodal3DIdent}: $R^2$ mean$\pm$std. \emph{\textcolor{Green}{Green}}: content, \textbf{bold}: $R^2 > 0.50$.}
\label{tab:m3di_table}
\centering
\begin{adjustbox}{max width=\linewidth}
\begin{tblr}{
  cells = {c},
  row{1} = {bg=Gray!30},
  row{2} = {bg=Gray!30},
  cell{1}{2} = {r=2}{},
  cell{1}{9} = {r=2}{},
  cell{1}{13} = {r=2}{},
  cell{1}{3} = {c=3}{},
  cell{1}{6} = {c=3}{},
  cell{1}{10} = {c=2}{},
  hline{2} = {1}{Gray!30,rightpos=-0.01},
  hline{2} = {12}{Gray!30,leftpos=-0.01,rightpos=-0.01},
  vline{2-3} = {1-2}{0.025em},
  vline{2,6,9,10,12,13} = {1-2}{0.025em},
  vline{2-3,6,9,10,12,13} = {3-4}{0.025em},
  hline{1} = {1}{-}{0.1em},
  hline{1} = {2}{-}{0.01em,fg=white},
  hline{2} = {3-11}{0.01em},
  hline{3} = {1}{-}{0.01em,fg=white},
  hline{3} = {2}{-}{0.06em},
  hline{3} = {3}{-}{0.01em,fg=white},
  hline{5} = {1}{-}{0.01em,fg=white},
  hline{5} = {2}{-}{0.1em},
}
views generated  & class                            & img pos                          &                                  &       & img hues      &       &     & txt class                        & txt pos                          &                                  & txt hue         & txt phrasing \\
                           by changing &                                  & $x$                           & $y$                           & spotl & obj           & spotl & bkg &                                  & $x$                           & $y$                           & obj\_color\_idx &              \\
hues + rot                 & \textcolor{Green}{\textbf{0.82 \textpm 0.01}} & \textcolor{Green}{\textbf{1.00 \textpm 0.00}} & \textcolor{Green}{\textbf{1.00 \textpm 0.00}} & 0.00 \textpm 0.00  & \textbf{0.87 \textpm 0.01} & 0.00 \textpm 0.00 & -& \textcolor{Green}{\textbf{0.85 \textpm 0.03}} & \textcolor{Green}{\textbf{1.00 \textpm 0.00}} & \textcolor{Green}{\textbf{1.00 \textpm 0.00}} &  0.15 \textpm 0.02          &  0.21 \textpm 0.02       
\\
pos   
&\textcolor{Green}{\textbf{1.00 \textpm 0.00}}  &0.47 \textpm 0.02
&\textbf{0.64 \textpm 0.01}
&0.00\textpm 0.00   
&\textbf{0.67 \textpm 0.02}
&0.00 \textpm 0.00
&- 
&\textcolor{Green}{\textbf{1.00 \textpm 0.00}}  &0.34 \textpm 0.02                            
&\textbf{0.94\textpm 0.01}                      & 0.16 \textpm 0.03
& 0.21 \textpm 0.02                
\end{tblr}
\end{adjustbox}
\end{table}

\subsection{Multi-Task Disentanglement with Sparse Classifiers}
Following~\cref{example:intuitive}, we synthetically generate the class labels by linear/nonlinear labeling functions on the shared content values, which resembles the underlying inductive bias from~\citep{lachapelle2022synergies,fumero2023leveraging}, that the shared features across different images with the same label should be most task-relevant. Here, we use the same sparse-classifier implementation from~\citep{fumero2023leveraging}. 
We remark that the goal of this experiment is to verify our expectation from~\cref{thm:ID_from_sets} that the method of~\citep{fumero2023leveraging} can be explained by our theory, although they assume mutually independent latents, which is a special case of our setup. In our experimental setup, an input gets a label $1$ when the following labeling function value is greater than zero:
\begin{itemize}%
    \item Linear: $\sum_{j = 1}^{d} {\hat{\zb}_j}$ where $d$ denotes the encoding size.
    \item Nonlinear: $\tanh\left(\sum_{j = 1}^{d} {\hat{\zb}_j}^3\right)$
\end{itemize}
Thus, we have resembled the inductive hypothesis in~\citet{fumero2023leveraging} that the classification task is \emph{only} dependent on the shared features. The fact that the binary classification is solved in several iterations verifies that~\citet{fumero2023leveraging} used the same \emph{soft} alignment principle as described in~\cref{thm:ID_from_sets}.

\section{Discussion}
\label{sec:extended_discussion}
\paragraph{The Theory-Practice Gap} It is noticeable that some of the technical assumptions we made in the theory may not exactly hold in practice. A common assumption in identifiability literature is that the latent variables $\zb$ are continuous, while this is not true for e.g. the \emph{object shape} in \emph{Causal3DIdent}~\citep{von2021self} and \emph{object shape, positions} in \emph{Multimodal3DIdent}~\citep{daunhawer2023identifiability}. Another related gap regarding the dataset is that the additional views are generated by uniformly sampling a subset of latents from the original view and then trying to retrieve an image among the \emph{existing} finite dataset, whose latent value is closest to the proposed one. However, having only a finite number of images implies that always finding a perfect match for each perturbed latents is almost impossible in practice. As a consequence, the designed to be strictly aligned content values between different views could differ from each other by a certain margin. 
Also, both~\cref{thm:ID_from_sets,thm:general_ID_from_sets_size_unknown} holds asymptotically and the global minimum is obtained only when given infinitely amount of data. Given that there is no closed-form solution for \emph{\textcolor{Blue}{entropy regularization}} term~\cref{eq:main_loss_set,eq:main_loss_set_size_unknown}, it is approximated either using negative samples~\citep{oord2019representation, chen2020simple} or by optimizing the cross-correlation between the encoded information to be close to Identity matrix~\citep{zbontar2021barlow}; in both cases there is only a finite number of samples available, which makes converging to global minimum almost impossible in practice.

\textbf{Discovering Hidden Causal Structure from Overlapping Marginals.} 
Identifying latent blocks $\{\zb_{B_i}\}$ provides us with access to the marginal distributions  over the corresponding subsets of latents $\{p(\zb_{B_i})\}$. With observed variables and known graph, this has been termed the ``causal marginal problem''~\citep{gresele2022causal}, and our setup could therefore also been seen as generalization along those dimensions.
It may be possible to extract some causal relations from the inferred marginal distributions over blocks, either by imposing additional assumptions or through constraint-based approaches~\citep{triantafillou2010learning}.

\textbf{How to Learn the Content Sizes?}
~\Cref{thm:general_ID_from_sets_size_unknown} shows that content blocks from any arbitrary subset of views can be discovered simultaneously using view-specific encoders~\pcref{defn:view_specific_encoders}, content selectors~\pcref{defn:content_selectors} 
and
some projections~\pcref{defn:aux_transformations}. We remarked in the main text that optimizing the information-sharing regularizer~\pcref{defn:info_sharing_reg} is highly non-convex and thus impractical. We proposed alternatives for both unsupervised and supervised cases: for self-supervised representation learning, one could employ \emph{Gumble-Softmax} to learn the hard mask. We hypothesize that if there is an additional inclusion relation about the content blocks available, for example, we know that $C_1 \subseteq C_2 \subseteq C_3$, then the learning process could be eased by coding this inclusion relation in the mask implementation. This additional information is naturally inherited from the fact that the more views we include, the smaller the shared content will be. 
Another idea would be manually allocating individual content blocks in the learned encoding in a sequential manner, e.g. we set index=1, 2, 3 for the first content block and index=4, 5 for the second content block, and enforcing alignment correspondingly. Thus, for each view, we learn a concatenated representation of the shared content. Although this method does not perfectly follow the theoretical setting in the~\cref{thm:general_ID_from_sets_size_unknown}, it still learn all of the contents simultaneously and shows faster convergence. 
In classification tasks, the hard alignment constraint is relaxed to some soft constraint within one equivalence class e.g. samples which have the same label. In this case,  we can replace the binary content selector with linear readouts, as studied and implemented by~\citet{fumero2023leveraging}. Another, yet the most common approach to deal with this problem is to treat the content sizes as hyperparameters, as shown in~\citep{von2021self,daunhawer2023identifiability}. 

\textbf{Trade-off between Invertibility and Feature Sharing.} 
An invertible encoder implies that the extracted representation is a lossless compression, which means that the original observation can be reconstructed using this learned representation, given enough capacity. On the one hand, the invertibility of the encoders is enforced by the \emph{\textcolor{Blue}{entropy regularization}}, such that the encoder preserves all information about the content variables; on the other hand, the info-sharing regularizer~\pcref{defn:info_sharing_reg} encourages reuse of the learned feature, which potentially prevents perfect reconstruction for each individual view. 
Intuitively,~\cref{thm:general_ID_from_sets_size_unknown} seeks the sweet spot between invertibility and feature sharing: When the encoder shares more than the ground truth, then it loses information about certain views, and thus the compression is not lossless; When the invertibility is fulfilled but the info-sharing is not maximized, then the learned encoder is not an optimal solution either, given by the regularization penalty from the infor-sharing regularizer.

\textbf{Causal Representation Learning from Interventional Data.}
Our framework considers purely \emph{observational} data, 
where multiple partial views are generated from concurrently sampled latent variables using view-specific mixing functions. 
Recent works~\citep{ liang2023causal,buchholz2023learning,von2023nonparametric} have shown identifiability in non-parametric causal representation learning using interventional data, 
allowing discovering (some) hidden causal relations. 
Since simultaneously identifying the latent representation and the underlying causal structure in a \emph{partially observable} setup has been a long standing goal, 
we believe incorporating interventional data into the proposed framework could be one interesting direction to explore.

%
%
%

\end{document}